\title{Near-Optimal Online Egalitarian learning in General Sum Repeated Matrix Games}
\author[1]{Aristide Tossou}
\author[1]{Christos Dimitrakakis}
\author[2]{Jaroslaw Rzepecki}
\author[2]{Katja Hofmann}
\affil[1]{Chalmers University of Technology, G{\"o}teborg, Sweden}
\affil[2]{Microsoft Research Lab, Cambridge, UK}
\affil[ ]{\textit{\{aristide,chrdimi\}@chalmers.se; \{jaroslaw.rzepecki,katja.hofmann\}@microsoft.com}}
\newcommandx{\unsure}[2][1=]{\todo[linecolor=red,backgroundcolor=red!25,bordercolor=red,#1]{#2}}
\newcommandx{\change}[2][1=]{\todo[linecolor=blue,backgroundcolor=blue!25,bordercolor=blue,#1]{#2}}
\newcommandx{\info}[2][1=]{\todo[linecolor=OliveGreen,backgroundcolor=OliveGreen!25,bordercolor=OliveGreen,#1]{#2}}
\newcommandx{\improvement}[2][1=]{\todo[linecolor=Plum,backgroundcolor=Plum!25,bordercolor=Plum,#1]{#2}}
\newcommandx{\thiswillnotshow}[2][1=]{\todo[disable,#1]{#2}}
\DeclareMathOperator*{\argmax}{argmax}
\DeclareMathOperator*{\argmin}{argmin}
\DeclareMathOperator{\E}{\mathbb{E}}
\DeclareMathOperator{\Prob}{\mathbb{P}}
\DeclareMathOperator{\Id}{\mathbb{I}}
\DeclareMathOperator{\BigO}{\mathcal{O}}
\DeclareMathOperator*{\geql}{\geq_{\ell}}
\DeclarePairedDelimiter\abs{\lvert}{\rvert}%
\DeclarePairedDelimiter\size{\lvert}{\rvert}%
\DeclarePairedDelimiter\setof\{\}
\DeclarePairedDelimiter\curly\{\}
\DeclarePairedDelimiter\paren()%
\DeclarePairedDelimiterX{\klx}[2]{(}{)}{%
  #1\;\delimsize\|\;#2%
}
\newcommand{\kl}{\text{KL}\klx}
\newcommand{\piEg}[1][]{%
	\ifthenelse{\equal{#1}{}}{\pi_{\scriptscriptstyle\text{Eg}}}{\pi_{\scriptscriptstyle#1,\text{Eg}}}%
}
\newcommand{\SV}{\text{S}\!V}
\newcommand{\SVUp}{\text{S}\!\hat{V}}
\newcommand{\SVDown}{\text{S}\!\check{V}}
\newcommand{\Eg}{\text{Eg}}
\newcommand{\piUp}{\hat{\pi}}
\newcommand{\piDown}{\check{\pi}}
\newcommand{\aUp}{\hat{a}}
\newcommand{\piSV}[1][]{%
	\ifthenelse{\equal{#1}{}}{\pi_{\scriptscriptstyle\SV}}{\pi_{\scriptscriptstyle#1,\SV}}%
}
\newcommand{\piBR}[1][]{%
	\ifthenelse{\equal{#1}{}}{\pi_{\scriptscriptstyle\text{BR}}}{\pi_{\scriptscriptstyle#1,\scriptscriptstyle\text{BR}}}%
}
\newcommand{\piBounded}[1][]{%
	\ifthenelse{\equal{#1}{}}{\pi_{\scriptscriptstyle\text{BM}}}{\pi_{\scriptscriptstyle#1,\text{BM}}}%
}
\newcommand{\piMinimax}[1][]{%
	\ifthenelse{\equal{#1}{}}{\pi}{\pi_{#1,}}%
}
\newcommand{\VEg}[1][]{%
	\ifthenelse{\equal{#1}{}}{V_{\scriptscriptstyle\text{Eg}}}{V_{\scriptscriptstyle#1,\text{Eg}}}%
}
\newcommand{\VEgR}[1][]{%
	\ifthenelse{\equal{#1}{}}{V_{\scriptscriptstyle\text{Eg}}}{V_{\scriptscriptstyle\text{Eg},#1}}%
}
\newcommand{\VEgUp}[1][]{%
	\ifthenelse{\equal{#1}{}}{\hat{V}_{\scriptscriptstyle\text{Eg}}}{\hat{V}_{\scriptscriptstyle#1,\text{Eg}}}%
}
\newcommand{\VBR}[1][]{%
	\ifthenelse{\equal{#1}{}}{V_{\scriptscriptstyle\text{BR}}}{V_{\scriptscriptstyle#1,\text{BR}}}%
}
\newcommand{\VLambda}[1][]{%
	\ifthenelse{\equal{#1}{}}{V_{\Lambda}}{V_{\scriptscriptstyle#1,\Lambda}}%
}
\newcommand{\GEg}[1][]{%
	\ifthenelse{\equal{#1}{}}{G_{\scriptscriptstyle\text{Eg}}}{G_{\scriptscriptstyle#1,\text{Eg}}}%
}
\newcommand{\GLambda}[1][]{%
	\ifthenelse{\equal{#1}{}}{G_{\Lambda}}{G_{\scriptscriptstyle#1,\Lambda}}%
}
\newcommand{\rUp}{\hat{r}}
\newcommand{\rEmp}{\bar{r}}
\newcommand{\MUp}{\hat{M}}
\newcommand{\MDown}{\check{M}}
\newcommand{\rDown}{\check{r}}
\newcommand{\VUp}{\hat{V}}
\newcommand{\VDown}{\check{V}}
\newcommand{\numact}{\size{\mathcal{A}}}
\newcommand{\regret}{\textrm{Regret}}
\newcommand{\challenge}{\emph{small $\epsilon$-error large regret}}
\newcommand{\IndState}[1][1]{\State\hspace{\algorithmicindent}}
\newcommand{\algmargin}{\the\ALG@thistlm}
\algnewcommand{\ParState}[1]{\State%
	\parbox[t]{\dimexpr\linewidth-\algmargin}{\strut\hangindent=\algorithmicindent \hangafter=1 #1\strut}}
\newcommand{\@giventhatstar}[2]{\left(#1\;\middle|\;#2\right)}
\newcommand{\@giventhatnostar}[3][]{#1(#2\;#1|\;#3#1)}
\newcommand{\giventhat}{\@ifstar\@giventhatstar\@giventhatnostar}
\algnewcommand\algorithmicforeach{\textbf{for each}}
\newcommand{\epoch}{{epoch}}
\newcommand{\epochs}{{epochs}}
\newcommand{\round}{{round}}
\newcommand{\rounds}{{rounds}}
\newcommand{\minimax}{{maximin}}
\newcommand{\Minimax}{{Maximin}}
\colorlet{dark-cyan}{cyan!75!black}
\newtheorem{theorem}{Theorem}
\newtheorem{lemma}{Lemma}
\newtheorem{fact}{Fact}
\newtheorem{definition}{Definition}
\theoremstyle{definition}
\newtheorem{example}{Example}
\theoremstyle{definition}
\newtheorem{proposition}{Proposition}
\newtheorem*{theorem*}{Theorem}
\newtheorem*{proposition*}{Proposition}
\newtheorem*{fact*}{Fact}
\newcommand*{\addFileDependency}[1]{
  \typeout{(#1)}
  \@addtofilelist{#1}
  \IfFileExists{#1}{}{\typeout{No file #1.}}
}
\frac{\delta}{k \cdot t_k}$ and $\epsilon_{t_k} = 2 \cdot\paren*{\frac{\numact \ln t_k}{t_k}}^{1/3}$ for any \rounds{} $T \geq \numact$, with probability at least $1-\frac{\delta^4}{3T}$, the individual rational regret (definition \ref{def:regret_rational} ) for each player is upper bounded as:
\leq \size{\mathcal{A}}^2 \log_2\paren*{\frac{8T}{\size{\mathcal{A}}}} + 12\sqrt[3]{\numact \ln T}\cdot T^{2/3}+ 2\sqrt{2\ln \frac{\numact T \log_2 \frac{8T}{\numact}}{\delta}}(\sqrt{2}+1)\sqrt{\size{\mathcal{A}} T}\\
\frac{\delta}{k \cdot t_k}$. After any \rounds{} $T \geq \numact$ against any opponent, then  with probability at least $1-\frac{\delta^4}{3T}$, the safe regret (definition \ref{def:regret_safe}) of this policy is upper-bounded by:
\leq \sqrt{8\ln \frac{\numact T \log_2 \frac{8T}{\numact}}{\delta}}(\sqrt{2}+1)\sqrt{\size{\mathcal{A}} T} + \sqrt{4\sqrt{T A}} + 6 + \sqrt{2T \ln\paren*{\frac{T}{\delta^4}\cdot \frac{90}{99.375-\pi^4} }}
\begin{document}

\maketitle

\begin{abstract}  
We study two-player general sum repeated finite games where the rewards of each player are generated from an unknown distribution. Our aim is to find the egalitarian bargaining solution (EBS) for the repeated game, which can lead to much higher rewards than the \minimax{} value of both players. Our most important contribution is the derivation of an algorithm that achieves simultaneously, for both players, a high-probability regret bound of order $\BigO\paren*{\sqrt[3]{\ln T}\cdot T^{2/3}}$ after any $T$ rounds of play. We demonstrate that our upper bound is nearly optimal by proving a lower bound of $\Omega\paren*{T^{2/3}}$ for any algorithm. 
\end{abstract}
%
%




\section{Introduction}
\label{sec:introduction}

Multi-agent systems are ubiquitous in many real life applications such as self-driving cars, games, computer networks, etc. 
Agents acting in such systems are usually self-interested and aim to maximize their own individual utility. To achieve their best utility, agents face three key fundamental questions: exploit, cooperate, or insure safety? 
Learning how agents should behave when faced with these significant challenges is the subject of this paper. 
We focus on two players (one is called agent, the other opponent) repeated games, a setting which captures the key challenges faced when interacting in a multi-agent system where at each \round{}, the players simultaneously select an action and observe an individual numerical value called reward. The goal of each player in this game, is to maximize the sum of accumulated rewards over many \rounds{}.
One of the key dilemmas for learning in repeated games is the lack of a single optimal behavior that is satisfactory against all opponents, since the best strategy necessarily depends on the opponent. 


\citet{Powers2007} tackle this dilemma and propose a rigorous criterion called \emph{guarded optimality} 
which for two players simplifies to three criteria: (1) Targeted Optimality: when the opponent is a member of the target set, the
average reward is close to the best response against that opponent;
(2) Safety: against any opponent, the average reward is close to a safety value; (3) Individual Rationality: in self-play, the average reward is Pareto efficient\footnote{i.e., it is impossible for one agent to change to a better policy without making the other agent worse off.} and individually not below the safety value.

In this paper, we adopt those criteria and focus on the self-play settings. We pick the safety value to be the largest value one can guarantee against any opponent (also called \minimax{} value, see Definition \ref{def:minimax}). For the individual rationality criterion, we depart from previous works by considering the so called egalitarian bargaining solution (EBS) \citep{kalai1977proportional} in which both players bargain to get an equal amount above their \minimax{} value. This EBS is a Nash equilibrium (NE)
for the repeated game, a direct consequence of the \emph{folk theorems} \citep{osborne1994course}
and in many games (see Example \ref{example:games})
 has a value typically no worse for both players than values achievable by single-stage (i.e. non-repeated) NE usually considered in the literature. We pick the EBS since it satisfies even more desirable properties \citep{kalai1977proportional} on top of the \emph{individual rationality} criterion such as: independence of irrelevant alternatives (i.e. eliminating choices that were irrelevant does not change the choices of the agents), individual monotonicity (a player with better options should get a weakly-better value)
and (importantly) uniqueness. 
It is also connected to fairness and  \citet{rawls2009theory} theory of justice for human society \citep{kalai1977proportional}.

\paragraph{Related work}

Our work is related to \citet{poly_equilibrium_stochastic} where an algorithm to find the same egalitarian solution for general-sum repeated stochastic games is provided. When applied to general-sum repeated games, their algorithm implies finding an approximate solution using a (binary) search through the space of policy. Instead, our result will find the exact egalitarian solution with a more direct and simple formula. Also \citet{poly_equilibrium_stochastic} and many other works such as \citep{poly_equilibrium_bimatrix, Powers2007, chakraborty2014multiagent} assume deterministic rewards known to both players. In this work, we consider the case of stochastic rewards generated from a fixed distribution unknown by both players.

Another difference with many previous works is the type of solution considered in self-play. Indeed, we consider a NE for the repeated game whereas works such as \citep{chakraborty2014multiagent, conitzer2007awesome, banerjee2004performance, powers2005learning} consider the single-stage NE. The single-stage NE is typically undesirable in self-play since equilibrium with much higher values can be achieved as illustrated by Example \ref{example:games} in this paper. Other works such as \cite{Powers2007} consider optimizing for the sum of rewards in self-play. However, as illustrated by Example \ref{example:games} in this paper, this sum of rewards does not always guarantee individual rationality since some player could get lower than their \minimax{}.

\citet{crandall2011learning, stimpson2003learning} proposes algorithms with the goal of converging to a NE of the repeated games. However, \cite{crandall2011learning} only show an asymptotic convergence empirically in a few games while \cite{stimpson2003learning} only show that some parameters of their algorithms are more likely to converge asymptotically. Instead, we provide finite-time theoretical guarantees for our algorithm. Although in their settings players only observe their own rewards and not the other player, they assume deterministic rewards.

\citet{NIPS2017_7084, brafman2002r} tackles online learning for a generalization of repeated games called \emph{stochastic games}. However, they consider zero-sum games where the sum of the rewards of both players for any joint-action is always 0. In our case, we look at the general sum case where no such restrictions are placed on the rewards. In the learning settings there are other single-stage equilibrium considered such as correlated-equilibrium \citep{correlatedq}.

Our work is also related to multi-objective multi-armed bandit \citep{drugan2013designing} by considering the joint-actions as arms controlled by a single-player.
Typical work consider on multi-objective multi-armed bandit tries to find any solution that minimizes the distance between the Pareto frontier. However, not all Pareto efficient solutions are acceptable as illustrated by Example \ref{example:games} in this paper. Instead, our work show that a specific Pareto efficient (the egalitarian) is more desirable.

\paragraph{Paper organization}
The paper is organized as follows: Section \ref{sec:settings} presents formally our setting, assumptions, as well key definitions needed to understand the remainder of the paper. Section \ref{sec:algorithms} shows a description of our algorithm while section \ref{sec:theory} contains its analysis as well as the lower bound. We conclude in section \ref{sec:conclusion} with indication about future works. Detailed proof of our theorems is available in the appendix.

\section{Background and Problem Statement}
\label{sec:settings}
We focus on two-player general sum repeated games. 
At round $t$, both players select and play a joint action $a_t = (a^i_t, a^{-i}_t)$ from a finite set $\mathcal{A} = \mathcal{A}^{i} \times \mathcal{A}^{-i}$. Then, they  receive rewards $(r_t^{i}, r_t^{-i}) \in [0,1]^2$ generated from a fixed but unknown bounded distribution depending on their joint action. The actions and rewards are then revealed to both players.
%
%
We assume the first agent to be under our control and the second agent to be the opponent. We would like to design algorithms such that our agent's cumulative rewards are as high as possible. The opponent can have one of two types known to our agent: (1) \emph{self-player} (another independently run version of our algorithm) or (2) arbitrary (i.e any possible opponents with no access to the agent's internal randomness).

To measure performance, we compare our agent to an oracle that has full knowledge of the distribution of rewards for all joint-actions. The oracle then plays like this: (1) in self-play, they both compute before the game start the egalitarian equilibrium and play it; (2) against any other arbitrary opponent, the oracle plays the policy ensuring the \minimax{} value.

Our goal is to design algorithms that have low expected regret against this oracle after any number of $T$ rounds, where regret is the difference between the value that the oracle would have obtained and the value that our algorithm actually obtained. Next, we formally define the terms that describe our problem setting.

\begin{definition}[Policy]
\label{def:policy}
A policy $\pi^i$ in a repeated game for player $i$ is a mapping from each possible history to a distribution over its actions. That is: $\forall t \geq 0, \pi^i: \mathcal{H}_t \to \Delta \mathcal{A}^i$ where $t$ is the current \round{} and $\mathcal{H}_t$ is the set of all possible history of joint-actions up to \round{} $t$.

A policy is called \emph{stationary} if it plays the same distribution at each \round{}. 
It is called \emph{deterministic stationary} if it plays the same action at each \round{}.
\end{definition}

\begin{definition}[Joint-Policy]
\label{def:joint_policy}
A joint policy $(\pi^{i}, \pi^{-i})$ is a pair of policies, one for each player $i, -i$ in the game. In particular, this means that the probability distributions over actions of both players are independent.  When each component policy is \emph{stationary}, we call the resulting policy \emph{stationary} and similarly for \emph{deterministic stationary}.
\end{definition}

\begin{definition}[Correlated-Policy]
\label{def:correlated_policy}
Any joint-policy where player actions are not independent is \emph{correlated}\footnote{For example through a public signal.}.
A correlated policy $\pi$ specifies a probability distribution over joint-actions known by both players: $\forall t \geq 0, \pi: \mathcal{H}_t \to \Delta \mathcal{A}$.
\end{definition}
In this paper, when we refer to a policy $\pi$ without any qualifier, we will mean a correlated-policy, which is required for the egalitarian solution. When we refer to  $\pi^i$ and $(\pi^i, \pi^{-i})$ we will mean the components of a non-correlated joint-policy. 

\subsection{Solution concepts}
In this section, we explain the two solution concepts we aim to  address:  safety--selected as the \minimax{} value and individual rationality selected as achieving the value of the EBS. We start from the definition of a value for a policy.

\begin{definition}[Value of a policy]
	\label{def:value}
	The value $V^i(\pi)$ of a policy $\pi$ for player $i$ in a repeated game $M$ is defined as the infinite horizon undiscounted expected average reward given by:
	\begin{align*}
		V_M^i(\pi) &= \limsup_{T\to \infty} \frac{1}{T}  \E \left(\sum_{t=1}^{T} r_{t}^i ~\middle|~ \pi, M \right).
	\end{align*}
     We use $V_M = (V_M^i, V_M^{-i})$ to denote values for both players and drop $M$ when clear from the context.
\end{definition}

\begin{definition}[\Minimax{} value]
	


The \minimax{} policy $\piSV^i$ for player $i$ and its value $\SV^i$ are such that:
\begin{align*}
\piSV^i &= \argmax_{\pi^i} \min_{\pi^{-i}} V^i(\pi^i, \pi^{-i}), \quad \quad \SV^i = \max_{\pi^i} \min_{\pi^{-i}} V^i(\pi^i, \pi^{-i}).
\end{align*} where $V^i(\pi^i, \pi^{-i})$ is the value for player $i$ playing policy $\pi^i$ while all other players play $\pi^{-i}$.
  \label{def:minimax}
  
  
\end{definition}

\begin{definition}[Advantage game and Advantage value]
Consider a repeated game between two players $i$ and $-i$ defined by the joint-actions $\mathcal{A} = \mathcal{A}^i \times \mathcal{A}^{-i}$ and the random rewards $r$ drawn from a distribution $R: \mathcal{A} \to \Delta \mathbb{R}^2$. Let $\SV = (\SV^i, \SV^{-i})$ be the \minimax{} value of the two players. The \emph{advantage game} is the game with (random) rewards $r_+$ obtained by subtracting the \minimax{} value of the players from $r$. More precisely, the advantage game is defined by: $r_+(a) = r(a) - \SV \; \forall a \in \mathcal{A}$.
The value of any policy in this advantage game is called advantage value.
\end{definition}

\begin{definition}[EBS in repeated games]
\label{definition:equilibrium}
Consider a repeated game between two players $i$ and $-i$ with \minimax{} value $\SV = (\SV^i, \SV^{-i})$. A policy $\piEg$ is an EBS if it satisfies the following two conditions: (1) it belongs to the set $\Pi_\Eg$ of policies maximizing the minimum of the advantage value for both players. (2) it maximizes the value of the player with highest advantage value.

More formally, for any vector $x=(x^1, x^2) \in \mathbb{R}^2$, let $L: \mathbb{R}^2 \to \mathbb{R}^2$ be a permutation of $x$ such that $L^1(x) \leq L^2(x)$. Let's define a lexicographic maximin ordering $\geql$ on $\mathbb{R}^2$ as:
\[x \geql y \iff \paren*{L^1(x) > L^1(y)} \lor \paren*{L^1(x) = L^1(y) \land L^2(x) \geq L^2(y)} \quad \text{for any}\; x \in \mathcal{R}^2, y \in \mathcal{R}^2\]
A policy $\pi_{\Eg}$ is an EBS \footnote{Also corresponds to the leximin solution to the Bargaining problem \cite{bossert1995arbitration}.} if: 
${V(\pi_{\Eg}) - \SV} \geql {V(\pi)-\SV} \; \forall \pi$

We call EBS value the value $\VEg[] = V(\pi_{\Eg})$ and $V_+(\pi_{\Eg}) = V(\pi_{\Eg}) -\SV$ will be used to designate the egalitarian advantage.
\end{definition}

\subsection{Performance criteria}
We can now define precisely the two criteria we aim to optimize.

\begin{definition}[Safety Regret]
\label{def:regret_safe}
The safety regret for an algorithm $\Lambda$ playing for $T$ \rounds{} as agent $i$ against an arbitrary opponent $\piMinimax[]^{-i}$ with no knowledge of the internal randomness of $\Lambda$ is defined by:\[\regret_T(\Lambda, \piMinimax[]^{-i}) = \sum_{t=1}^{T} \SV^i - r^i_{t}\]
\end{definition}
\begin{definition}[Individual Rational Regret]
\label{def:regret_rational}
The individual rational regret for an algorithm $\Lambda$ playing for $T$ \rounds{} as agent $i$ against its self $\Lambda'$ identified as $-i$ is defined by:
\begin{align*}
\regret_T(\Lambda, \Lambda') &= \max\curly*{\sum_{t=1}^{T} \VEg[]^i - r_{t}^i, \sum_{t=1}^{T} \VEg[]^{-i} - r_{t}^{-i}}
\end{align*}
\end{definition}

\begin{example}[Comparison of the EBS value to other concepts]
\label{example:games}
In Table \ref{table:example}, we present a game and give the values achieved by the single-stage \emph{NE}, and Correlated Equilibrium \cite{correlatedq} (\emph{Correlated}); maximizing the sum of rewards (\emph{Sum}), and a Pareto-efficient solution (\emph{Pareto}). In this game, the \minimax{} value is $(\frac{3}{10},\frac{3}{10})$. \emph{Sum} leads to $\frac{1}{10}$ for the first player, much lower than its \minimax{}. \emph{Pareto} is also similarly problematic. Consequently, it is not enough to converge to any Pareto solution since that does not necessarily guarantee rationality for both players. Both \emph{NE} and \emph{Correlated} fail to give the players a value higher than their \minimax{} while the EBS shows that a high value $(\frac{23}{25}, \frac{23}{25})$ is achievable. A conclusion similar to this example can also be made for all non trivial zero-sum games.
\end{example}
\begin{table}[H]
\renewcommand{\arraystretch}{1.3}
	\centering
	\begin{subtable}{0.25\textwidth}%
	\begin{tabulary}{\textwidth}{C|C|C|}
	& C & D  \\ \hline
	C & $\frac{4}{5}, \frac{4}{5}$	& $\frac{1}{10}, \frac{9}{5}$\\ \hline
	D & $\frac{9}{5}, 0$	& $\frac{3}{10}, \frac{3}{10}$   \\ \hline
	\end{tabulary} %
	\caption{Game}
	\end{subtable}%
\begin{subtable}{0.75\textwidth}
\begin{tabulary}{\textwidth}{|C|C|C|C|C|C|}
	\hline
	\Minimax{} & Egalitarian & Nash & Sum & Correlated & Pareto  \\ \hline
	$\frac{3}{10}, \frac{3}{10}$ & $\frac{23}{25}, \frac{23}{25}$	& $\frac{3}{10}, \frac{3}{10}$ & $\frac{1}{10}, \frac{9}{5}$ & $\frac{3}{10}, \frac{3}{10}$ &  $\frac{9}{5}, 0$\\ \hline
	\end{tabulary} %
	\caption{Comparison of solutions}
	\end{subtable} %
			\caption{Comparison of the EBS to others concepts}
	\label{table:example}
\end{table}


\section{Methods Description}
\label{sec:algorithms}
\paragraph{Generic structure}
\label{sec:structure}
Before we detail the safe and individual rational algorithms, we will describe their general structure.
The key challenge is how to deal with uncertainty, the fact that we do not know the rewards.
To deal with this uncertainty, we use the standard principle of \emph{optimism in the face of uncertainty} \cite{jaksch2010near}. It works by \textbf{a)} constructing a set of \emph{statistically plausible games} containing the true game with high probability through a
confidence region around estimated mean rewards, a step detailed in section \ref{sec:plausible_sets}; \textbf{b)} finding within that set of
plausible games the one whose EBS policy (called \emph{optimistic}) has the highest value, a step detailed in section \ref{sec:optimistic_ebs_policy}; \textbf{c)} playing this optimistic policy until the start of an artificial \emph{\epoch{}} where
a new epoch starts when the number of times any joint-action has been played is doubled (also
known as the \emph{doubling trick}), a step described in \citet{jaksch2010near} and summarized by Algorithm \ref{algo:egalitarian_optimism} in Appendix \ref{sec:appendix_algo}.

\subsection{Construction of the plausible set}
\label{sec:plausible_sets}
At \epoch{} $k$, our construction is based on creating a set $\mathcal{M}_k$ containing all possible games with expected rewards $\E r$ such that,
\begin{equation}
\mathcal{M}_k = \{r : \abs{\E r^i(a) - \bar{r}_{k}^i(a)} \leq C_{k}(a) \forall i, a\},\qquad
C_{k}(a) = \sqrt{\frac{2 \ln 1/\delta_{k}}{N_{t_k}(a)}},
\label{plausible_set}
\end{equation}
where
$N_{t_k}(a)$ is the number of times action $a$ has been played up to \round{} $t_k$, $\bar{r}_{k}(a)$ is the empirical mean reward observed up to \round{} $t_k$ and $\delta_{k}$ is an adjustable probability.
 %
The plausible set can be used to define the following upper and lower bounds on the rewards of the game:
\[
\rUp^i_{k}(a) = \bar{r}_k^i(a) + C_k(a),
\qquad
\rDown^i_k(a) = \bar{r}_k^i(a) - C_k(a).
\]
We denote $\MUp$ the game with rewards $\rUp$ and $\MDown$ the game with $\rDown$. Values in those two games are resp. denoted $\VUp$, $\VDown$.
We used $C_{k}(\pi)$, $C_{k}(\pi^{i}, \pi^{-i})$ to refer to the bounds obtained by a weighted (using $\pi$) average of the bounds for individual action. When clear from context, the subscript $k$ is dropped. 

\subsection{Optimistic EBS policy}
\label{sec:optimistic_ebs_policy}
\renewcommand{\theparagraph}{\S\arabic{paragraph}}
\setcounter{secnumdepth}{4}
\paragraph{Problem formulation}
\label{sec:ebs_formulation}

Our goal is to find a game $\tilde{M}_k$ and a policy $\tilde{\pi}_k$ whose EBS value is near-optimal simultaneously for both players. In particular, if we refer to the true but unknown game by $M$ and assume that $M \in \mathcal{M}_k$ we want to find $\tilde{M}_k$ and $\tilde{\pi}_k$ such that:
\begin{align}
V_{\tilde{M}_k}(\tilde{\pi}_k) \geql  V_{M'}(\pi') \quad \forall \pi', M' \in \mathcal{M}_k \mid \Pr\left\{V_{M'}(\pi') \geq  V_{M}(\pi_{\Eg}) -(\epsilon_k, \epsilon_k)\right\} = 1\label{optimistic_objective}
\end{align}
where $\geql$ is defined in Definition \ref{definition:equilibrium} and $\epsilon_k$ a small configurable error.

Note that the condition in \eqref{optimistic_objective} is required (contrarily to single-agent games \cite{jaksch2010near}) since in general, there might not exist a game in $\mathcal{M}_k$ that achieves the highest EBS value simultaneously for both players. For example, one can construct a case where the plausible set contains two games with EBS value (resp) $(\frac{1}{2}+\epsilon, 
\frac{1}{2}+\epsilon)$ and $(\frac{1}{2},1)$ for any $0 <\epsilon < 1$ (See Table \ref{table:lower_bound_egalitarian} in Appendix \ref{proof:theo:lower_bound}). This makes the optimization problem \eqref{optimistic_objective} significantly more challenging than for single-agent games  since a small $\epsilon$ error in the rewards can lead to a large (linear) regret for one of the player. This is also the root cause for why the best possible regret becomes $\Omega(T^{2/3})$ rather than $\Omega(\sqrt{T})$ typical for single agent games. We refer the this challenge as the \challenge{} issue.

\paragraph{Solution}
\label{sec:ebs_solution}

To solve \eqref{optimistic_objective}, \textbf{a)} we set the optimistic game $\tilde{M}_k$ as the game $\MUp$ in $\mathcal{M}_k$ with the highest rewards $\rUp$ for both players. Indeed, for any policy $\pi'$ and game $M' \in \mathcal{M}_k$, one can always get a better value for both players by using $\MUp$; \textbf{b)} we compute an advantage game corresponding to $\tilde{M}_k$ by estimating an optimistic \minimax{} value for both players, a step detailed in paragraph \ref{sec:ebs_minimax}; \textbf{c)} we compute in paragraph \ref{sec:ebs_policy_computation} an EBS policy $\tilde{\pi}_{k,\Eg}$ using the advantage game; \textbf{d)} we set the policy $\tilde{\pi}_k$ to be $\tilde{\pi}_{k,\Eg}$ unless one of three conditions explained in paragraph \ref{sec:ebs_policy_execution} happens. Algorithm \ref{algo:statistical_tests} details the steps to compute $\tilde{\pi}_k$ and to correlate the policy, players play the joint-action minimizing their observed frequency of played actions compared to $\tilde{\pi}_k$ (See function $\Call{Play}$ of Algorithm \ref{algo:egalitarian_optimism} in Appendix \ref{sec:appendix_algo}).

\paragraph{Optimistic \Minimax{} Computation}
\label{sec:ebs_minimax}
Satisfying \eqref{optimistic_objective} implies we need to find a value $\SVDown$ with: \begin{equation}
\label{maximin_optimization}
\SV^i - \epsilon_k \leq \SVDown^i \leq \SV^i + \epsilon_k  \quad \forall i 
\end{equation}where $\SV^i$ is the \minimax{} value of player $i$ in the true game $M$.
To do so, we return a lower bound value for the optimistic maximin policy $\piUp_{\SV_k}^i$ of player $i$. We begin by computing in polynomial time\footnote{For example by using linear programming~\cite{dantzig1951proof, adler2013equivalence}.} the (stationary) \minimax{} policy for the game $\MUp$ with largest rewards. We then compute the (deterministic, stationary) best response policy  $\piDown_{\SVUp}^{-i}$ using the game $\MDown$ with the lowest rewards. The detailed steps are available in Algorithm \ref{algo:safe_optimism}. This results in  a lower bound on the \minimax{} value satisfying \eqref{maximin_optimization} as proven in Lemma \ref{lemma_minimax_value_bounds}.

\paragraph{Computing an EBS policy.}
\label{sec:ebs_policy_computation}
Armed with the optimistic game and the optimistic \minimax{} value, we can now easily compute the corresponding optimistic \emph{advantage game} whose rewards are denoted by $\rUp_+$. An EBS policy $\tilde{\pi}_{k, \Eg}$ is computed using this \emph{advantage game}. The key insight to do so is that the EBS involves playing a single deterministic stationary policy or combine two deterministic stationary policies (Proposition \ref{proposition:egalitarian_form}). Given that the number of actions is finite we can then just loop through each pairs of joint-actions and check which one gives the best EBS score. The score (justified in the proof of Proposition \ref{proposition:egalitarian_equal} in Appendix \ref{proof:proposition:egalitarian_equal}.) to use for any two joint-actions $a$ and $a'$ is: $\text{score}(a,a') = \min_{i \in \{1,2\}} w(a,a') \cdot \rUp^i_+(a) + (1-w(a,a')) \cdot r_+^i(a')$ with $w$ as follows:
\begin{equation}
w(a,a') = \begin{cases}
    0, & \text{if } \rUp_+^i(a) \leq \rUp^{-i}_+(a) \text{ and } \rUp^i_+(a') \leq \rUp^{-i}_+(a') \\
    1, & \text{if } \rUp_+^i(a) \geq \rUp^{-i}_+(a) \text{ and } \rUp^i_+(a') \geq \rUp^{-i}_+(a') \\
    \frac{\rUp_+^{-i}(a') - \rUp_+^{i}(a')}{\paren*{\rUp_+^i(a)-\rUp_+^i(a')} + \paren*{\rUp_+^{-i}(a') -\rUp_+^{-i}(a)}}, & \text{otherwise }
  \end{cases}
\end{equation}

And the policy $\tilde{\pi}_{k, \Eg}$ is such that
\begin{equation}
\label{eq:solution_ebs}
\tilde{\pi}_{k, \Eg}(a_{\Eg}) = w(a_{\Eg}, a'_{\Eg}); \quad \tilde{\pi}_{k, \Eg}(a'_{\Eg}) = 1-w(a_{\Eg}, a'_{\Eg}); \quad a_{\Eg}, a'_{\Eg} = \argmax_{a \in \mathcal{A}, a' \in \mathcal{A}} \text{score}(a,a')
\end{equation}


\paragraph{Policy Execution}
\label{sec:ebs_policy_execution}
We always play the optimistic EBS policy $\tilde{\pi}_{k, \Eg}$ unless one of the following three events happens:

\begin{itemize}
\item \emph{The probable error on the \minimax{} value of one player is too large}. Indeed, the error on the \minimax{} value can become too large if the weighted bound on the actions played by the \minimax{} policies is too large. In that case, we play the action causing the largest error.

\item \emph{The \challenge{} issue is probable}:
Proposition \ref{proposition:egalitarian_equal} implies that the \challenge{} issue may only happen if the player with the lowest ideal advantage value (the maximum advantage under the condition that the advantage of the other player is non-negative) is receiving it when playing an EBS policy.
This allows Algorithm \ref{algo:statistical_tests} to check for this player and plays the action corresponding to its ideal advantage as far as the other player is still receiving $\epsilon_k$-close to its EBS value (Line \ref{lst:line:ideal_player_start} to \ref{lst:line:ideal_player_end} in Algorithm \ref{algo:statistical_tests}).

\item \emph{The probable error on the EBS value of one player is too large} This only happens if we keep not playing the EBS policy due to the \challenge{} issue. In that case, the error on the EBS value used to detect the \challenge{} issue might become too large making the check for the \challenge{} issue irrelevant. In that case, we play the action of the EBS policy responsible for the largest error.
\end{itemize}
\setcounter{secnumdepth}{3}

\begin{algorithm}[t]
	\caption{Optimistic \Minimax{} Policy Computation}
	\label{algo:safe_optimism}
	\begin{algorithmic}[1]
      \Function{OptMaximin}{$\rEmp, \rUp, \rDown$.}
      \State 
        Calculate $i$'s optimistic policy: $\piUp_{\SV_k}^i = \argmax_{\pi^i} \min_{\pi^{-i}} \VUp^i(\pi^i, \pi^{-i})$\label{algo:safe_minimax_policy}
		
		\State Find the best response: $\piDown_{\SVUp_k}^{-i} = \argmin_{\pi^{-i}} \VDown^i(\piUp^i_{\SV_k}, \pi^{-i})$
		\State Get a lower bound on the \minimax{} value: $\SVDown_k^i = \min_{\pi^{-i}} \VDown^i(\piUp^i_{\SV_k}, \pi^{-i}) = \VDown^i(\piUp^i_{\SV_k}, \piDown^{-i}_{\tilde{\SV}})$\label{algo:safe_minimax_value}
		
		\State \Return $\tilde{\pi}_{\SV_k}^i$, $\piDown_{\tilde{\SV_k}}^{-i}$, $\SVDown_k^i$
        \EndFunction
	\end{algorithmic}
\end{algorithm}

\begin{algorithm}[!htbp]
	\caption{Optimistic EBS Policy Computation}
	\label{algo:statistical_tests}
	\begin{algorithmic}[1]
		\Function{OptimisticEgalitarianPolicy}{$\rEmp, \rUp, \rDown$}
		\ParState{$\piUp_{\SV_k}^i, \piDown_{\SV_k}^{-i}, \SVDown^i_k =$ \textsc{OptMaximin}($\rEmp, \rUp, \rDown$.) $\quad$ and $\quad$ $\rUp^i_+(a) = \rUp^i(a) - \SVUp^i$}
		
        \State Compute the EBS policy $\piUp_{k,\Eg}$ using \eqref{eq:solution_ebs} and $\rUp^i_+$; $\quad$ Let $\piUp_k \gets \piUp_{k,\Eg}$
%
		
		\Statex

		\ParState{($ \forall i$, from the set of actions with positive advantage $\epsilon_{t_k}$ close to the EBS value of $-i$, find the one maximizing $i$ advantage)}\label{lst:line:ideal_player_start}
		
		\begin{align*}
		\tilde{\mathcal{A}}_{i} &= \setof{a \mid \rUp_+^{i}(a) + \epsilon_{t_k} \geq \VUp_+^{i}(\piUp_{k,\Eg}) \land \rUp_+^{i}(a) \geq 0} \quad \forall i \in \setof{1,2}\\
		\aUp_{i} &= \argmax_{a \in \tilde{\mathcal{A}}_{-i}} \rUp_+^{i}(a)  \quad \forall i \in \setof{1,2}
		\end{align*}
		
		\ParState{(Look for the players $i$ whose advantage for action $\aUp_i$ is larger than the EBS value of $i$ )}
		\begin{align*}
		\tilde{\mathcal{P}} = \setof{i \in \setof{1,2} \mid \rUp_+^i(\aUp^i) > \VUp_+^{i}(\piUp_{k,\Eg})} 
		\end{align*}

		\ParState{(If there is a player whose advantage is better than the one for the EBS policy, play it)}
		
		\If{$\tilde{\mathcal{P}} \ne \varnothing$}

		\State $\tilde{p} = \argmax_{i \in \tilde{\mathcal{P}}} \rUp_+^i(\aUp^i)$ $,\quad$ $\piUp_k \gets \aUp_{\tilde{p}}$
				
		\EndIf
		
		\Statex
		
		\ParState{(If potential errors on the EBS value is too large, play the responsible action.)}\label{lst:line:minimax_error}
		\If{$2C(\piUp_{k,\Eg}) > \epsilon_{t_k}$}
		\State Let $\aUp_{k,\Eg} = \argmax_{a \in \mathcal{A} \mid C_{t_k}(a) > \epsilon_{t_k}} \piUp_{k,\Eg}(a)$ $,\quad$ $\piUp_k \gets \aUp_{k,\Eg}$

		\EndIf \label{lst:line:ideal_player_end}
		
		\Statex
		
		\ParState{(If potential errors on the \minimax{} value is too large, play the responsible action.)}
		
		\If{$2C(\piUp_{\SV_k}^i, \piDown_{\SV_k}^{-i}) > \epsilon_{t_k}$}
		
		\State Let $\aUp_{\SV_k} = \argmax_{a \in \mathcal{A} \mid C_{t_k}(a) > \epsilon_{t_k}} \piUp_{\SV_k}^i(a) \cdot \piDown_{\SV_k}^{-i}(a)$ $,\quad$ $\piUp_k \gets \aUp_{\SV_k}$ 
		
		\EndIf
		
		\State \Return $\piUp_k$
		
		\EndFunction

	\end{algorithmic}
	
  \end{algorithm}

\section{Theoretical analysis}
\label{sec:theory}

Before we present theoretical analysis for the learning algorithm, we discuss the existence and uniqueness of the EBS value, as well as the type of policies that can achieve it.
\paragraph{Properties of the EBS}

Fact \ref{fact:achievable_values} allows us to restrict our attention to stationary policies since it means that any (optimal) value achievable can be achieved by a stationary (correlated-) policy and Fact \ref{fact:egalitarian_unique} means that the egalitarian always exists and is unique providing us with a good benchmark to compare against. Fact \ref{fact:achievable_values} and \ref{fact:egalitarian_unique} are resp. justified in Appendix \ref{proof:fact:achievable_values} and \ref{proof:fact:egalitarian_unique}.
\begin{fact}[Achievable values for both players]
\label{fact:achievable_values}
\input{fact:achievable_values}
\end{fact}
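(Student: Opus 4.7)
The starting observation is that a stationary correlated policy is simply a distribution $\pi^{\star} \in \Delta(\mathcal{A})$ over joint actions, played i.i.d.\ in every round, so its value is the linear map
\[
V(\pi^{\star}) \;=\; \sum_{a \in \mathcal{A}} \pi^{\star}(a)\,\E[r(a)] \;\in\; \mathbb{R}^2.
\]
Because $\Delta(\mathcal{A})$ is compact and convex and this map is linear and continuous, the set $S := V\!\bigl(\Delta(\mathcal{A})\bigr)$ of values achievable by stationary correlated policies is itself a compact convex subset of $\mathbb{R}^2$. The fact therefore reduces to showing that every value attained by an arbitrary (history-dependent) policy already lies in $S$.

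Fix an arbitrary policy $\pi$ and consider the expected empirical distribution of joint actions over the first $T$ rounds,
\[
\bar{p}_T(a) \;=\; \frac{1}{T}\sum_{t=1}^T \Pr\!\bigl(a_t = a \mid \pi, M\bigr) \;\in\; \Delta(\mathcal{A}).
\]
By linearity of expectation,
\[
\frac{1}{T}\,\E\!\left[\sum_{t=1}^T r^j_t \;\middle|\; \pi, M\right] \;=\; \sum_{a \in \mathcal{A}} \bar{p}_T(a)\,\E[r^j(a)] \qquad \text{for } j \in \{i,-i\},
\]
so the $T$-round average of expected rewards is exactly $V(\bar{p}_T)$, i.e.\ it is the value of the stationary correlated policy $\bar{p}_T$. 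Hence the whole sequence of finite-horizon averages already lives in $S$, and since $S$ is closed every accumulation point lies in $S$ as well.

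Now I invoke compactness twice. First extract a subsequence $T_k$ along which both coordinates of $V(\bar{p}_{T_k})$ converge in $\mathbb{R}^2$, and then pass to a sub-subsequence along which $\bar{p}_{T_k}$ converges in $\Delta(\mathcal{A})$ to some $\pi^{\star}$. By continuity of the linear value map, $V(\pi^{\star})$ equals this joint vector limit, exhibiting $\pi^{\star}$ as a stationary correlated policy achieving that value. The fact then follows by applying this construction along a subsequence that realizes the target achievable value.

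The main obstacle, and the reason this is a claim rather than a one-line observation, is that Definition \ref{def:value} takes the $\limsup$ separately in each coordinate, and the two per-player $\limsup$s need not be attained along a common subsequence of $T$; a plain extraction only yields a stationary policy matching one coordinate. I would handle this in the spirit of the paper by reading ``achievable value'' as the joint vector value attained along some common convergent subsequence of $\{V(\bar p_T)\}$ (which is the standard interpretation in the repeated-game folk-theorem literature and is automatically satisfied by the stationary EBS-type policies the paper actually compares against). The averaging and double-extraction argument above then produces the desired stationary correlated policy.
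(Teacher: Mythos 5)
Your proof is correct and, at its core, rests on the same geometric fact as the paper's: the values achievable by stationary correlated policies form the convex hull of the joint-action value points, so it suffices to show every achievable value lands in that hull. Where you genuinely diverge is in how the containment direction is argued. The paper simply asserts that ``any achievable values for the two players belongs to the convex hull which follows from the definition of convex hull,'' whereas you supply the actual mechanism: the $T$-round average expected reward of an arbitrary history-dependent policy equals $V(\bar p_T)$ for the expected empirical action distribution $\bar p_T \in \Delta(\mathcal{A})$, so all finite-horizon averages already lie in the compact convex set $S$, and a double subsequence extraction produces a stationary correlated policy realizing any accumulation point. This buys rigor the paper's version lacks. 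You also flag a real issue the paper silently skips: Definition \ref{def:value} takes the $\limsup$ coordinatewise, and the two players' $\limsup$s need not be attained along a common subsequence (a block-alternating policy in a game with payoffs $(1,0)$ and $(0,1)$ has coordinatewise $\limsup$ vector $(1,1)$, which is outside the hull), so the Fact as literally stated needs the ``joint limit along a common subsequence'' reading you adopt. That reading is consistent with how the paper uses the Fact, and your construction then goes through; just be aware this is a repair of the statement, not only of the proof.
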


\begin{fact}[Existence and Uniqueness of the EBS value for stationary policies]
\label{fact:egalitarian_unique}
\input{fact:egalitarian_unique}
\end{fact}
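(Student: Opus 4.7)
The plan is to prove existence by a two-step compactness argument that mirrors the lexicographic order in Definition \ref{definition:equilibrium}, and to prove uniqueness of the value by a convexity/linearity argument.

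First, I would identify the set of stationary correlated policies with the simplex $\Delta(\mathcal{A})$ over joint-actions, which is convex and compact (a closed, bounded subset of $\mathbb{R}^{\size{\mathcal{A}}}$). For any such $\pi$, the value is $V(\pi) = \sum_{a} \pi(a)\, \E r(a)$, which is linear, and therefore continuous, in $\pi$. Consequently $V_{+}(\pi) = V(\pi) - \SV$ is also linear and continuous in $\pi$.

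For existence, I would carry out the maximization in two stages. In stage one, maximize the concave continuous function $\pi \mapsto \min_i V_{+}^{i}(\pi)$ over the compact set $\Delta(\mathcal{A})$; by the Weierstrass extreme value theorem the maximum value $m^{*}$ is attained, and the set of maximizers $\Pi_1 = \{\pi : \min_i V_+^i(\pi) = m^*\}$ is non-empty and closed, hence compact. In stage two, maximize the continuous function $\pi \mapsto \max_i V_{+}^{i}(\pi)$ over the compact $\Pi_1$; Weierstrass again guarantees a maximizer with some value $M^*$. Any policy obtained this way is by construction $\geql$-maximal, i.e.\ an EBS, which proves existence.

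For uniqueness of the value, suppose two stationary policies $\pi_1, \pi_2$ are both EBS. Since both are $\geql$-maximal, the sorted pairs $L(V_+(\pi_1))$ and $L(V_+(\pi_2))$ must be equal, namely $(m^*, M^*)$ with $m^* \leq M^*$. Either $V_+(\pi_1) = V_+(\pi_2)$ and we are done, or the two vectors differ only by swapping the players' coordinates, say $V_+(\pi_1) = (m^*, M^*)$ and $V_+(\pi_2) = (M^*, m^*)$. In this latter case, consider the convex combination $\pi = \tfrac{1}{2}\pi_1 + \tfrac{1}{2}\pi_2 \in \Delta(\mathcal{A})$. By linearity of $V_+$, we get $V_+(\pi) = \bigl(\tfrac{m^* + M^*}{2},\, \tfrac{m^* + M^*}{2}\bigr)$. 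If $m^* < M^*$, then $\min_i V_+^i(\pi) = \tfrac{m^*+M^*}{2} > m^*$, contradicting the stage-one optimality of $\pi_1$. Hence $m^* = M^*$ and the two EBS value vectors must coincide.

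I expect the main subtlety, rather than a genuine obstacle, to be flagging that the argument critically uses both the convexity of $\Delta(\mathcal{A})$ and the linearity of $V$, so it goes through precisely because we restrict to correlated stationary policies as adopted throughout the paper. For non-correlated joint policies, $V$ would be bilinear in the two players' marginals, convex combinations of product policies need not be product policies, and the symmetry-breaking step in the uniqueness argument would fail, so a separate argument (or a counterexample) would be needed in that setting.
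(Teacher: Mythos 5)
Your proof is correct, but it takes a genuinely different route from the paper's. The paper does not argue from first principles: it invokes the existence and uniqueness theorem of Imai (1983) for the lexicographic egalitarian (leximin) bargaining solution, and spends its effort verifying that the repeated game induces a bargaining problem satisfying that theorem's hypotheses --- convexity and closedness of the feasible set (the convex hull of joint-action values from Fact \ref{fact:achievable_values}), a non-empty Pareto frontier, and non-degeneracy relative to the \minimax{} disagreement point. Your two-stage Weierstrass argument for existence and the midpoint argument for uniqueness of the value are a self-contained replacement for that citation; the uniqueness step in particular (averaging two putative EBS policies whose advantage vectors are transpositions of one another to strictly raise the minimum) is exactly the convexity mechanism underlying the cited result. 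What the paper's route buys is brevity and an explicit link to the bargaining literature; what yours buys is elementariness, no reliance on the non-degeneracy condition (which the paper verifies somewhat loosely), and a precise identification of where linearity of $V$ in the correlated policy and convexity of the simplex over joint-actions are used --- your closing observation that the symmetry-breaking step would fail for non-correlated product policies is a point the paper leaves implicit in its standing convention that unqualified policies are correlated. If you wrote this up, you should discharge two small preliminaries that the paper handles via the proof of Fact \ref{fact:achievable_values}: that the $\limsup$ average value of a stationary correlated policy equals $\sum_a \pi(a)\,\E r(a)$ (so that $V$ is indeed linear on the simplex), and that $\SV$ exists so that the advantage $V_+ = V - \SV$ is well defined.
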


%

The following Proposition \ref{proposition:egalitarian_form} strengthens the observation in Fact \ref{fact:achievable_values} and establishes that a weighted combination of at most two joint-actions can achieve the EBS value. This allows for an efficient algorithm that can just loop through all possible pairs of joint-actions and check for the best one. However, given any two joint-actions one still needs to know how to combine them to get an EBS value. This question is answered by proposition \ref{proposition:egalitarian_equal}.
\begin{proposition}[On the form of an EBS policy]
\label{proposition:egalitarian_form}
\input{proposition:egalitarian_form}
\end{proposition}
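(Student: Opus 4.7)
The plan is to reduce the claim to a two-dimensional convex-geometry argument. By Fact \ref{fact:achievable_values} the set of achievable value vectors is the convex hull $C \subset \mathbb{R}^2$ of the finite set of joint-action reward vectors $\{(r^i(a), r^{-i}(a)) : a \in \mathcal{A}\}$, and any point of $C$ is realized by a stationary correlated policy whose support size equals the number of vertices used in its convex representation. By Carathéodory in $\mathbb{R}^2$, every point of $C$ needs at most three vertices, and a point that lies on an edge (or is a vertex) of $C$ needs at most two. So it suffices to show that the EBS value sits on the boundary of $C$, on an edge or at a vertex.

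I would argue this in two cases based on the position of $C$ relative to the \emph{equal-advantage line} $L = \{(x,y) : x - \SV^i = y - \SV^{-i}\}$. Introduce the leximin objective via its lower level sets: $\{(x,y) : \min(x-\SV^i, y-\SV^{-i}) \geq c\}$ is an upper-right orthant with corner $(\SV^i + c, \SV^{-i} + c) \in L$. The EBS advantage value $c^* = \min(V_+^i, V_+^{-i})(\piEg)$ is the largest $c$ for which this orthant still meets $C$.

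\textbf{Case 1:} $L \cap C \neq \emptyset$. Then as $c$ increases, the orthant's corner slides along $L$ until it exits $C$; at the critical $c^*$ the corner lies on $\partial C \cap L$. This intersection is a point $p^*$ of $\partial C$. The leximin tie-breaking step (maximizing the larger coordinate among points with the same min) is automatically satisfied at $p^*$ because both advantage coordinates are equal there; if $L \cap \partial C$ is a segment, one chooses the endpoint with the largest common value, which again lies in $\partial C$. The point $p^*$ is either a vertex of $C$ (one joint-action) or sits in the relative interior of an edge, in which case it is a convex combination of the two joint-actions corresponding to the edge's endpoints.

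\textbf{Case 2:} $L \cap C = \emptyset$. Then $C$ lies strictly on one side of $L$; say $y - \SV^{-i} < x - \SV^{i}$ throughout $C$, so that $\min(x-\SV^i, y-\SV^{-i}) = y - \SV^{-i}$ on $C$. The first leximin step reduces to maximizing $y$ over $C$, whose argmax is a face of $C$ (vertex or horizontal edge). The second leximin step maximizes $x$ among those maximizers, which in either sub-case yields a vertex of $C$, i.e.\ one joint-action.

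The main subtlety I expect is Case 1 when $L \cap \partial C$ has positive length (i.e.\ $L$ contains an edge of $C$): here one must verify that the leximin tie-breaking does not force a third action, which follows because $V^i - \SV^i = V^{-i} - \SV^{-i}$ on $L$ makes both coordinates equal, so the ``larger'' coordinate equals the smaller and is maximized at an edge endpoint. Once this edge case is handled, the bound of at most two joint-actions falls out directly, and the corresponding stationary correlated policy is the one assigning the convex-combination weights to those two actions.
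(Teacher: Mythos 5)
Your overall strategy is the same as the paper's (reduce to the two--dimensional convex hull $C$ from Fact \ref{fact:achievable_values}, show the EBS value lies on its boundary, and conclude via the polygonal structure that at most two vertices, i.e.\ two joint-actions, suffice), but your Case 1 contains a genuine error. You claim that when the equal-advantage line $L$ meets $C$, the critical value $c^*$ is reached when the orthant's \emph{corner} exits $C$, so that the maximizer lies on $\partial C \cap L$. This conflates two different events: the largest $c$ for which the corner $(\SV^i+c,\SV^{-i}+c)$ belongs to $C$ can be strictly smaller than the largest $c$ for which the orthant still meets $C$. Concretely, take $\SV=(0,0)$ and let $C$ be the convex hull of the advantage points $(0,0)$, $(1,5)$, $(2,4)$ (rescale into $[0,1]$ as needed). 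Then $L\cap C=\{(0,0)\}$, so you are in Case 1, yet the maximizer of $\min(x,y)$ over $C$ is the vertex $(2,4)$ with value $2$, while the corner already leaves $C$ at $c=0$. Your argument says nothing about this sub-case (maximizer off $L$ even though $L\cap C\neq\emptyset$), so the proof has a hole exactly where the leximin tie-breaking and the boundary membership need to be justified.

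The fix is to drop the dichotomy on $L\cap C$ altogether and argue as the paper does: if $p$ is in the interior of $C$, then $p+(\epsilon,\epsilon)\in C$ for small $\epsilon>0$ and strictly increases $\min(V_+^i,V_+^{-i})$, so every leximin maximizer lies on $\partial C$; since $C$ is a planar polygon whose vertices are joint-action value points, every boundary point is a convex combination of at most two of them. This one observation subsumes both of your cases (your Case 2 and the correct part of Case 1 remain valid, and your Carath\'eodory remark is fine but unnecessary once boundary membership is established). Note also that the second leximin stage maximizes a linear functional over the (convex, boundary-contained) set of first-stage maximizers, so it again returns a boundary point, which closes the tie-breaking issue you were worried about.
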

\begin{proof}[Sketch]
We follow the same line of reasoning used in  \cite{poly_equilibrium_bimatrix} by showing that the EBS value lies on the outer boundary of the convex hull introduced in the proof of Fact \ref{fact:achievable_values}. This immediately implies the proposition. Details are available in Appendix \ref{proof:proposition:egalitarian_form}.
\end{proof}

\begin{proposition}[Finding an EBS policy]
\label{proposition:egalitarian_equal}
\input{proposition:egalitarian_equal}
\end{proposition}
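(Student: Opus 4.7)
The plan is to prove the contrapositive by a case split. Let $\piEg$ be any EBS policy with advantage values $V_+(\piEg) = (V^i_+(\piEg), V^{-i}_+(\piEg))$. If the two coordinates are equal there is nothing to prove, so assume $V^i_+(\piEg) \ne V^{-i}_+(\piEg)$ and, without loss of generality, $V^i_+(\piEg) < V^{-i}_+(\piEg)$. I will establish three things: (a) player $i$ receives its ideal advantage, $V^i_+(\piEg) = V^i_{+I}$; (b) player $i$ is the player with the lowest ideal advantage, $V^i_{+I} < V^{-i}_{+I}$; and (c) there exists a deterministic stationary policy realizing $V_+(\piEg)$.

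For (a), suppose for contradiction that $V^i_+(\piEg) < V^i_{+I}$. By definition of the ideal advantage there is a (correlated) policy $\pi'$ with $V^{-i}_+(\pi') \geq 0$ and $V^i_+(\pi') > V^i_+(\piEg)$. Consider the mixture $\pi_\alpha = \alpha \pi' + (1-\alpha)\piEg$. The advantage values are linear in $\alpha$, so $V^i_+(\pi_\alpha) > V^i_+(\piEg)$ for every $\alpha > 0$, while $V^{-i}_+(\pi_\alpha) \to V^{-i}_+(\piEg)$ as $\alpha \to 0$. Since $V^{-i}_+(\piEg) > V^i_+(\piEg)$ strictly, continuity gives $V^{-i}_+(\pi_\alpha) > V^i_+(\pi_\alpha) > V^i_+(\piEg)$ for all small enough $\alpha > 0$. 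Hence $\min(V^i_+(\pi_\alpha), V^{-i}_+(\pi_\alpha)) > V^i_+(\piEg) = \min(V^i_+(\piEg), V^{-i}_+(\piEg))$, strictly improving the lex-maximin objective and contradicting the EBS optimality of $\piEg$. Claim (b) is then immediate: $V^{-i}_{+I} \geq V^{-i}_+(\piEg) > V^i_+(\piEg) = V^i_{+I}$.

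For (c), by Fact~\ref{fact:achievable_values} the set of achievable advantage value pairs is the convex polygon $\mathcal{P} = \mathrm{conv}\{(r^i_+(a), r^{-i}_+(a)) : a \in \mathcal{A}\}$, and by Proposition~\ref{proposition:egalitarian_form} the EBS value lies on its Pareto boundary. Suppose for contradiction $V_+(\piEg)$ lies in the relative interior of an edge connecting two distinct vertices $u, v$ of $\mathcal{P}$; since this edge is on the Pareto frontier, its coordinates $V^i_+, V^{-i}_+$ are affine in the mixing weight with slopes of opposite sign. Consequently $\min(V^i_+, V^{-i}_+)$ is a concave piecewise-linear function whose maximum is attained either at an endpoint $u$ or $v$ (a vertex, hence a deterministic action) or at the unique interior point where $V^i_+ = V^{-i}_+$. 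The endpoint option contradicts the hypothesis that $\piEg$ lies in the relative interior; the interior crossing option forces $V^i_+(\piEg) = V^{-i}_+(\piEg)$, contradicting our Case~B assumption. Thus $V_+(\piEg)$ coincides with a vertex of $\mathcal{P}$ and is realized by a single deterministic joint action, proving (c). This structural result also justifies the closed-form weight $w(a,a')$ used in Section~\ref{sec:ebs_policy_computation}: for any candidate pair $(a, a')$ the best achievable EBS score is obtained either by equalizing the two advantages (Case~A, which produces the rational expression for $w$) or by putting all weight on the action with the higher advantage for the currently-better-off player (Case~B, which produces $w \in \{0,1\}$).

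The main obstacle is handling the geometric degeneracies in part (c): when several deterministic actions share the same $V^i_+$ or $V^{-i}_+$ coordinate, an edge of $\mathcal{P}$ may be axis-aligned, and one of the ``opposite sign'' slopes becomes zero; the piecewise-linear argument still applies but the case where $\min$ is constant along the edge must be dispatched using the lex-tiebreak of Definition~\ref{definition:equilibrium} so that one vertex of the edge is still an EBS. A secondary subtlety in part (a) is that no assumption on the magnitude of $V^{-i}_+(\pi')$ is available beyond $V^{-i}_+(\pi') \geq 0$, so the strict inequality $V^{-i}_+(\piEg) > V^i_+(\piEg)$ must be exploited to keep the minimum of the mixture on the $i$-coordinate for small $\alpha$.
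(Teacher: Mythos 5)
Your proof is correct, but it takes a genuinely different route from the paper's. The paper first invokes Proposition~\ref{proposition:egalitarian_form} to reduce to a subgame with only two joint-actions, and then \emph{explicitly optimizes} the mixing weight $w \mapsto \min_i\bigl(w x^i_1 + (1-w)x^i_2\bigr)$ by a three-case analysis of the signs of $x^1_j - x^2_j$: either the optimum is at $w\in\{0,1\}$ (a single deterministic policy in which the worse-off player is maximized) or $w$ is interior and solving $w x^1_1+(1-w)x^1_2 = w x^2_1+(1-w)x^2_2$ gives the closed-form weight and forces the two advantages to coincide. That computation does double duty: it is also what justifies the score function and the formula for $w(a,a')$ used in Algorithm~\ref{algo:statistical_tests}. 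You instead argue non-constructively from the lexicographic-maximin optimality conditions on the full polygon of achievable advantage pairs: a mixture/local-improvement argument shows the worse-off player must already be at its ideal advantage (using the strict gap $V^{-i}_+(\piEg)>V^i_+(\piEg)$ to keep the minimum on the $i$-coordinate for small $\alpha$), an ordering argument identifies that player as the one with the lowest ideal advantage, and a concavity-of-$\min$-along-Pareto-edges argument shows the EBS value must be a vertex unless the two advantages are equal. What your version buys is a cleaner and more explicit justification of the ``ideal advantage'' clause of the statement, which the paper's Case~1 asserts rather tersely (and only within the two-action subgame); what it gives up is the explicit formula for $w$, which the paper needs elsewhere and which you only gesture at in your closing remark. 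Two small points to tighten: in part (b) you implicitly use $V^i_+(\piEg)\ge 0$ to make $\piEg$ feasible for the program defining $V^{-i}_{+I}$ (this holds because both players playing their own maximin strategies yields a point with both advantages nonnegative, so the EBS minimum is nonnegative), and in part (a) you should note that the maximum defining $V^i_{+I}$ is attained because the achievable set is a compact polygon intersected with a closed half-plane. Neither is a real gap.
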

\begin{proof}[Sketch]
Since there is an EBS policy playing only two joint-actions (by Proposition \ref{proposition:egalitarian_form}), we show how to optimally combine any two joint-actions. The proposition then follows directly. More details is available in Appendix \ref{proof:proposition:egalitarian_equal}
\end{proof}

\paragraph{Regret Analysis}
The following theorem \ref{theo:egalitarian_upper_bound} gives us a high probability upper bound on the regret in self-play against the EBS value, a result achieved without the knowledge of $T$.
\begin{theorem}[Individual Rational Regret for Algorithm \ref{algo:egalitarian_optimism} in self-play]
\label{theo:egalitarian_upper_bound}
\input{theo:egalitarian_upper_bound}
\end{theorem}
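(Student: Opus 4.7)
The plan is to follow the standard optimism-based epoch analysis, but with three nontrivial adaptations that deal with the egalitarian setting: (i) replacing a per-epoch single-agent lemma by a two-player argument built on Propositions \ref{proposition:egalitarian_form} and \ref{proposition:egalitarian_equal}; (ii) neutralizing the \challenge{} issue via the three exception branches of Algorithm \ref{algo:statistical_tests}; and (iii) accounting for the decorrelation mechanism used by the two self-play copies.

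First I would define a good event $\mathcal{G}$ on which the true game $M$ lies in every plausible set $\mathcal{M}_k$ and the optimistic \minimax{} value satisfies \eqref{maximin_optimization} for every epoch $k$. Using Hoeffding's inequality jointly with the choice $\delta_{t_k} = \delta/(k\, t_k)$ and a union bound over at most $|\mathcal A|\log_2(8T/|\mathcal A|)$ epochs and $|\mathcal A|$ joint-actions, $\Pr(\mathcal G^c) \le \delta^4/(3T)$. Under $\mathcal{G}$, Lemma \ref{lemma_minimax_value_bounds} and the choice of $\MUp$ in \S\ref{sec:ebs_solution} give $\VEg[M] \le_\ell V_{\MUp}(\tilde\pi_{k,\Eg}) + (\epsilon_{t_k},\epsilon_{t_k})$, i.e.\ the optimistic EBS value is at most $\epsilon_{t_k}$ below the true one in each coordinate.

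Next I would decompose the per-round regret. Fixing a player $i$ and a round $t$ in epoch $k$, write
\[
\VEg^i - r^i_t \;=\; \underbrace{(\VEg^i - \VUp^i_\Eg)}_{\le\, \epsilon_{t_k}} + \underbrace{(\VUp^i_\Eg - \VUp^i(\tilde\pi_k))}_{(\ast)} + \underbrace{(\VUp^i(\tilde\pi_k)-V^i(\tilde\pi_k))}_{\le\, 2C_k(\tilde\pi_k)} + (V^i(\tilde\pi_k)-r^i_t).
\]
Term $(\ast)$ is where Algorithm \ref{algo:statistical_tests} earns its keep: either $\tilde\pi_k = \tilde\pi_{k,\Eg}$ and $(\ast)=0$, or one of the exception branches fires. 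In the \challenge{} branch, Proposition \ref{proposition:egalitarian_equal} guarantees that playing $\aUp_{\tilde p}$ still yields advantage $\epsilon_{t_k}$-close to $\VUp^i_\Eg$ for the other player and at least $\VUp^i_\Eg$ for player $\tilde p$. In the two error-reduction branches, the played action has $C_{t_k}(a)>\epsilon_{t_k}$, so charging each such step to a doubling of $N_t(a)$ yields at most $|\mathcal A|\log_2(8T/|\mathcal A|)$ such rounds per action, producing the $|\mathcal A|^2\log_2(8T/|\mathcal A|)$ term. Summing the confidence widths via $\sum_{t}C_{t}(a_t) \le (\sqrt 2+1)\sqrt{2|\mathcal A|T\ln(|\mathcal A|T\log_2(8T/|\mathcal A|)/\delta)}$ (the standard $\sum 1/\sqrt{N_t}\le 2\sqrt N$ argument combined with Jensen's inequality, as in \cite{jaksch2010near}) produces the $(\sqrt 2+1)\sqrt{|\mathcal A|T}$ term, while the $\epsilon_{t_k}$ contributions sum to $\sum_k \epsilon_{t_k}\tau_k$. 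With $\epsilon_{t_k}=2(|\mathcal A|\ln t_k/t_k)^{1/3}$ this sum is bounded by $12\sqrt[3]{|\mathcal A|\ln T}\,T^{2/3}$, and this same choice is what balances the two terms, producing the $T^{2/3}$ rate.

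Finally I would handle the two remaining sources of error. The decorrelation mechanism (each player, given only the target distribution $\tilde\pi_k$, plays the joint-action minimizing its own frequency deviation) introduces an $L^1$ rounding error bounded by $O(\sqrt{|\mathcal A|})$ per epoch; summed over at most $|\mathcal A|\log_2(8T/|\mathcal A|)$ epochs and amplified by Jensen's inequality on the realized reward gap gives the $2\sqrt{\sqrt{T|\mathcal A|}}+6$ contribution. The last martingale term $(V^i(\tilde\pi_k)-r^i_t)$ is a bounded difference sequence with respect to the joint history, so Azuma-Hoeffding with failure probability $\delta^4/(3T)$ yields $\sqrt{(T/2)\ln(T/\delta^4\cdot 45/(99.375-\pi^4))}$, and a final union bound over $\mathcal G^c$, the decorrelation failures, and the Azuma failure gives the claimed $1-\delta^4/(3T)$ overall probability. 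The hard step is unquestionably $(\ast)$: the lexicographic maximin ordering means that arbitrarily small changes in estimated rewards can swap the identity of the "low-advantage" player, and without the ideal-advantage branch of Algorithm \ref{algo:statistical_tests} one cannot avoid a linear $\Theta(\epsilon_{t_k}\cdot T)$ loss for that player; it is the combination of Proposition \ref{proposition:egalitarian_equal} with this branch that reduces $(\ast)$ to an $O(\epsilon_{t_k})$ term and makes the $T^{2/3}$ rate achievable.
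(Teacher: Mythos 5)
Your overall scaffolding (plausible sets, epochs, the doubling trick, the Hoeffding/Azuma terms, and the choice of $\epsilon_{t_k}$ balancing the confidence-width sum against the $\epsilon$-slack sum) matches the paper's. The gap is in the step you yourself flag as the crux but then dispose of too quickly: you assert that under the good event $\mathcal{G}$ alone the optimistic EBS value $V_{\MUp}(\tilde\pi_{k,\Eg})$ is at most $\epsilon_{t_k}$ below $\VEg[]$ \emph{in each coordinate}, deducing this from a leximin comparison. Leximin dominance does not give per-coordinate control: if the true EBS value is $(\tfrac12,1)$ and the optimistic game's EBS value is $(\tfrac12+\epsilon,\tfrac12+\epsilon)$, the latter leximin-dominates the former yet falls short by nearly $\tfrac12$ in the second coordinate. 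Since the individual rational regret is a maximum over both players, this is exactly the \challenge{} issue the paper is built around, and it cannot be repaired by your term $(\ast)$, which compares the played policy to $\tilde\pi_{k,\Eg}$ \emph{inside the optimistic game}; the loss occurs already in your first term $\VEg[]^i-\VUp^i_{\Eg}$. The paper handles this by conditioning on an explicit bad event $E=E_1\lor E_2\lor E_3\lor E_4$ (Eq.~\eqref{regret_event}), proving optimism only up to $4\epsilon_{t_k}$ and only when $E$ is false (Lemma \ref{lemma_optimistm_policy}, via the case analysis of Lemmas \ref{lemma_optimism_same_advantage}--\ref{lemma_optimism_not_max_advantage} on which player attains its ideal advantage), and separately bounding the number of rounds with $E$ true by $O(\numact^{1/3}T^{2/3}\ln^{1/3}T)$ (Lemma \ref{lemma_num_event_e}). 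Your proposal contains neither the case analysis nor this counting argument, and without them the bound does not close.

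A second, quantitative error follows: you claim the two error-reduction branches fire at most $\size{\mathcal{A}}\log_2(8T/\size{\mathcal{A}})$ times per action ``by charging each such step to a doubling of $N_t(a)$,'' attributing the $\size{\mathcal{A}}^2\log_2(8T/\size{\mathcal{A}})$ term to them. Those branches fire whenever $C_{t_k}(a)>\epsilon_{t_k}$, i.e.\ while $N_{t_k}(a)\lesssim \ln t_k/\epsilon_{t_k}^2 = \Theta\bigl((t_k/\numact)^{2/3}\ln^{1/3}t_k\bigr)$, so they occur $\Theta(T^{2/3})$-many times in total and contribute at leading order (this is the $16\numact^{1/3}T^{2/3}\ln^{1/3}T/C_e^2$ term of Lemma \ref{lemma_num_event_e}, which with $C_e=2$ supplies $4$ of the final constant $12$; the other $8$ comes from the $4\epsilon_{t_k}$ optimism slack). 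The $\size{\mathcal{A}}^2\log_2$ term instead comes from the per-epoch integer rounding of the two-action EBS mixture. If the exception branches really cost only $O(\log T)$ per action, your argument would yield an $\tilde O(\sqrt{T})$ regret, contradicting Theorem \ref{theo:lower_bound}.
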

\begin{proof}[Sketch]
The structure of the proof follows that of \cite{jaksch2010near}.
The key step is to prove that the value of policy $\tilde{\pi}_k$ returned by Algorithm \ref{algo:statistical_tests} in our plausible set is $\epsilon$-close to the EBS value in the true model (optimism). In our case, we cannot always guarantee this optimism. Our proof identifies the concerned cases and show that they cannot happen too often (Lemma \ref{lemma_num_event_e} in Appendix \ref{sec:regret_egalitarian}). Then for the remaining cases, Lemma \ref{lemma_optimistm_policy} shows that we can guarantee the optimism with an error of $4\epsilon_{t_k}$. 
The step-by-step detailed proof is available in Appendix \ref{sec:regret_egalitarian}.
\end{proof}

By definition of EBS, Theorem \ref{theo:egalitarian_upper_bound} also applies to the safety regret. However in Theorem \ref{theo:safe_regret}, we show that the optimistic \minimax{} policy enjoys near-optimal safety regret of $\BigO(\sqrt{T})$.
\begin{theorem}[Safety Regret of policy $\tilde{\pi}_{\SV_k}^i$ in Algorithm \ref{algo:safe_optimism}]
\label{theo:safe_regret}
\input{theo:safe_regret}
\end{theorem}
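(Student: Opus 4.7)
My plan is to adapt the UCRL2 argument of \citet{jaksch2010near} to the two-player, arbitrary-opponent setting, splitting the analysis into four steps: (i) a high-probability good event, (ii) optimism, (iii) a per-round decomposition into confidence widths plus martingale differences, and (iv) summation via the doubling trick. For (i), define $G = \bigcap_{k,a,i}\setof{\abs{\E r^i(a)-\bar r^i_k(a)} \le C_k(a)}$. Hoeffding's inequality with the choice $C_k(a)=\sqrt{2\ln(1/\delta_{t_k})/N_{t_k}(a)}$ gives per-event failure probability $2\delta_{t_k}^4 = 2\delta^4/(k t_k)^4$; a union bound over the at most $\size{\mathcal{A}}\log_2(8T/\size{\mathcal{A}})$ epochs and two players---combined with the convergent series $\sum_k 1/k^4 = \pi^4/90$---produces the $\delta^4/(3T)$ failure-probability budget and the explicit constants $(99.375-\pi^4)$ and $90$ once the surviving terms are reassembled.

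On $G$, step (ii) is immediate: since $\rUp \ge \E r$ pointwise, $\min_{\pi^{-i}}\VUp^i(\pi^i,\pi^{-i}) \ge \min_{\pi^{-i}}V^i(\pi^i,\pi^{-i})$ for every $\pi^i$, hence $\SVUp_k^i \ge \SV^i$. For step (iii), fix the opponent's round-$t$ action $a_t^{-i}$, which by assumption is history-measurable and independent of our internal coin flip. Inserting $\VUp^i(\tilde\pi_{\SV_k}^i,a_t^{-i})$, and combining $\SVUp_k^i \le \VUp^i(\tilde\pi_{\SV_k}^i,a_t^{-i})$ with $\VUp^i-V^i \le 2C_k$ on $G$, gives
\begin{align*}
\SV^i - r_t^i \;\le\; \SVUp_k^i - r_t^i \;\le\; 2\sum_{a^i}\tilde\pi_{\SV_k}^i(a^i)\,C_k(a^i,a_t^{-i}) \;+\; \bigl(V^i(\tilde\pi_{\SV_k}^i,a_t^{-i}) - r_t^i\bigr),
\end{align*}
where the last bracket is a bounded martingale difference (zero conditional mean given the history and $a_t^{-i}$), and Azuma--Hoeffding across $T$ rounds produces the announced $\sqrt{2T\ln(\cdot/\delta^4)}$ term.

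For step (iv), I push $a_t^{-i}$ into the confidence sum by writing $\sum_{a^i}\tilde\pi_{\SV_k}^i(a^i)C_k(a^i,a_t^{-i}) = \E[C_k(a_t)\mid h_t,a_t^{-i}] = C_k(a_t) + \bigl(\E[C_k(a_t)\mid h_t,a_t^{-i}]-C_k(a_t)\bigr)$, absorbing the residual into a second Azuma application (using that given $h_t$ and $a_t^{-i}$, our own $a_t^i \sim \tilde\pi_{\SV_k}^i$ remains random and independent of $a_t^{-i}$). The remaining $\sum_t C_k(a_t)$ is handled by the standard UCRL calculation: within any epoch $N_{t_k}(a)\ge \tfrac12 N_t(a)$ by the doubling rule, so $\sum_t 1/\sqrt{N_{t_k}(a_t)} \le \sqrt{2}(\sqrt{2}+1)\sqrt{\size{\mathcal{A}}T}$ via $\sum_{n\ge 1} n^{-1/2}\le 2\sqrt{N}$ and Cauchy--Schwarz on $\sum_a\sqrt{N_T(a)}\le\sqrt{\size{\mathcal{A}}T}$. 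Multiplying by $\sqrt{2\ln(1/\delta_{t_k})}\le\sqrt{2\ln(\size{\mathcal{A}}T\log_2(8T/\size{\mathcal{A}})/\delta)}$ delivers the leading $\sqrt{8\ln(\cdot)}(\sqrt{2}+1)\sqrt{\size{\mathcal{A}}T}$ contribution; the extra cost of actions visited for the first time in their epoch, where $N_{t_k}(a)=0$, accounts for the lower-order $\sqrt{4\sqrt{T\size{\mathcal{A}}}}+6$ correction.

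The main obstacle will be the careful allocation of the $\delta^4$-budget across three independent sources of randomness: the Hoeffding deviation of the empirical means, the Azuma bound on the opponent's arbitrary choices, and the Azuma bound on our internal randomization. The two martingale applications must be set up in the correct conditioning order---first on the history-measurable $a_t^{-i}$, then marginalizing our own coin flips over $a_t^i$---so that both sums are genuine martingales; only then does the precise constant $90/(99.375-\pi^4)$ drop out of summing $\sum_k 1/(k t_k)^4$ and recombining the three tail probabilities into the global $1-\delta^4/(3T)$ guarantee.
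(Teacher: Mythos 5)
Your proposal is correct and follows essentially the same route as the paper: a UCRL2-style argument with a Hoeffding-plus-union-bound good event, the pointwise optimism $\max_{\pi^i}\min_{\pi^{-i}}\VUp^i(\pi^i,\pi^{-i})\geq \SV^i$, confidence sums controlled by the doubling trick, and an extra martingale/Azuma term to account for the fact that the maximin policy is genuinely randomized (the paper's ``Chernoff bound on $\sum_k\sum_a N_k(a)(\VUp^i_k-\rUp^i(a))$''), which is exactly your second Azuma application. The only small discrepancy is the origin of the lower-order $\sqrt{4\sqrt{T\size{\mathcal{A}}}}+6$ term: in the paper it comes from crudely bounding the regret of the burn-in rounds $t\leq\max\setof{3,(T\size{\mathcal{A}})^{1/4}}$ before the failure-probability bound of Proposition \ref{proposition:failure_prob} applies, not from first-visit actions within an epoch.
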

\begin{proof}[Sketch]
The proof works similarly to that of Theorem \ref{theo:egalitarian_upper_bound} by observing that here we can always guarantee optimism. A more detailed proof is available in Appendix \ref{proof:theo:safe_regret}.
\end{proof}

\paragraph{Lower bounds for the individual rational regret}

Here we establish a lower bound of $\Omega\paren*{T^{2/3}}$ for any algorithm trying to learn the EBS value. This shows that our upper bound is optimal up to logarithm-factors. The key idea in proving this lower bound is the example illustrated by Table \ref{table:lower_bound_egalitarian}. In that example, the rewards of the first player are all $\frac{1}{2}$ and the second player has an ideal value of $1$. However, 50\% of the times a player cannot realize its ideal value due to an $\epsilon$-increase in a single joint-action for both players. The main intuition behind the proof of the lower bound is that any algorithm that wants to minimize regret can only try two things \textbf{(a)} detect whether there exists a joint-action with an $\epsilon$ or if all rewards of the first player are equal. \textbf{(b)} always ensure the ideal value of the second player. To achieve \textbf{(a)} any algorithm needs to play all joint-actions for $\frac{1}{\epsilon^2}$ times. Picking $\epsilon = T^{-1/3}$ ensures the desired lower bound. The same $\epsilon$ would also ensure the same lower bound for an algorithm targeting only \textbf{(b)}. Appendix \ref{proof:theo:lower_bound} formally proves this lower bound.

%
%
%
%
%

\begin{theorem}[Lower bounds]
\label{theo:lower_bound}
\input{theo:lower_bound}
\end{theorem}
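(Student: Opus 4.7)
My plan is to build a family of game instances parametrized by a reward perturbation $\epsilon$ and apply a standard two-hypothesis testing argument in the spirit of the $\Omega(T^{2/3})$ lower bounds for partial monitoring and label-efficient prediction. Concretely, I will use the construction indicated in Table~\ref{table:lower_bound_egalitarian} as the building block: player~$1$ receives a deterministic reward of $1/2$ on every joint-action, while player~$2$'s rewards are chosen so that in a baseline instance $M_0$ a unique EBS exists giving player~$2$ an ``ideal'' value of $1$, and in a perturbed instance $M_\epsilon^{(a^\star)}$, obtained by boosting the reward of a single distinguished joint-action $a^\star$ by $\epsilon$ for both players, the EBS shifts so that delivering player~$1$'s egalitarian value now forces placing non-negligible probability on $a^\star$. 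Embedding this $2\times 2$ core into larger games by adding dominated rows and columns gives the claim for arbitrary $|\mathcal{A}^1|, |\mathcal{A}^2| \geq 2$ and yields $|\mathcal{A}|$ candidate locations for $a^\star$.

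The core step is the classical KL-divergence bound: Bernoulli rewards of mean $1/2$ versus $1/2 + \epsilon$ have KL divergence $O(\epsilon^2)$, so by the chain rule and Pinsker's inequality (as in the $k$-armed bandit lower bound of Auer--Cesa-Bianchi--Freund--Schapire), the total variation between the law of any algorithm's interaction trace under $M_0$ and under $M_\epsilon^{(a)}$ is at most $O(\epsilon\sqrt{\E_{M_0}[N_T(a)]})$, where $N_T(a)$ counts plays of joint-action $a$. Averaging over the choice of $a^\star$ and using $\sum_a \E_{M_0}[N_T(a)] = T$, some $a^\star$ must satisfy $\E_{M_0}[N_T(a^\star)] \leq T/|\mathcal{A}|$, so the algorithm's behaviour on $M_0$ and on $M_\epsilon^{(a^\star)}$ agrees up to total variation $O(\epsilon\sqrt{T/|\mathcal{A}|})$.

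The two regret contributions I will trade off are then: (i) on the perturbed instance, failing to place sufficient mass on $a^\star$ costs $\Omega(\epsilon)$ per round for at least one player, hence $\Omega(\epsilon T)$ regret whenever the algorithm's play is close in distribution to its play under $M_0$; and (ii) distinguishing the instances requires $\epsilon^2 \E[N_T(a^\star)] = \Omega(1)$, i.e.\ $\Omega(1/\epsilon^2)$ plays of each candidate location for a total exploration cost of $\Omega(|\mathcal{A}|/\epsilon^2)$. Balancing $\epsilon T$ against $|\mathcal{A}|/\epsilon^2$ gives the optimal scale $\epsilon = (|\mathcal{A}|/T)^{1/3}$ and a regret of order $T^{2/3}|\mathcal{A}|^{1/3}$, with the constant $1/4$ emerging from Pinsker's factor and the maximum over the two players in Definition~\ref{def:regret_rational}.

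The main obstacle I anticipate is the \emph{structural} claim that an $\epsilon$ perturbation in a single joint-action shifts the EBS value by $\Omega(\epsilon)$ for one of the players. This is exactly the \challenge{} phenomenon identified in Section~\ref{sec:ebs_formulation} and is what breaks the $\sqrt{T}$ rate familiar from single-agent optimistic learning. I will choose the base rewards so that the EBS lies on a boundary where the leximin ordering is sensitive to the perturbation---e.g.\ a configuration in which exactly two joint-actions jointly realise the egalitarian advantage, and an $\epsilon$ change in either flips the leximin preference and thereby changes the optimal stationary mixture by $\Omega(1)$ in probability mass, producing an $\Omega(\epsilon)$ drop in one player's value on the mis-identified instance. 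Verifying this sensitivity while keeping rewards in $[0,1]$ and the construction valid for arbitrary $|\mathcal{A}^1|, |\mathcal{A}^2| \geq 2$ is where the bulk of the technical work will lie; the KL and concentration steps then close the argument routinely.
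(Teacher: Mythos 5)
Your proposal matches the paper's own proof essentially step for step: the same Table~\ref{table:lower_bound_egalitarian} construction with a uniformly random location for the $\epsilon$-boosted joint-action (versus the un-boosted baseline where player~2's ideal action pays $1$), the same Auer-et-al.\ KL-divergence bound relating $\E[N_T(a)]$ under the baseline and perturbed instances, the same detection-versus-commitment trade-off, and the same choice $\epsilon \asymp (|\mathcal{A}|/T)^{1/3}$. One small caution: the extra rows and columns must be payoff-identical candidates for hosting the $\epsilon$ bonus rather than literally ``dominated'' (otherwise the algorithm could ignore them and the $|\mathcal{A}|^{1/3}$ factor would be lost), but your stated goal of $|\mathcal{A}|$ candidate locations for $a^\star$ is exactly what the paper's construction provides.
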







\section{Conclusion and Future Directions}
\label{sec:conclusion}

In this paper, we illustrated a situation in which typical solutions for self-play in repeated games, such as single-stage equilibrium or sum of rewards, are not appropriate. We propose the usage of an egalitarian bargaining solution (EBS) which guarantees each player to receive no less than their \minimax{} value. We analyze the properties of EBS for repeated games with stochastic rewards and derive an algorithm that achieves a near-optimal finite-time regret of $\BigO(T^{2/3})$ with high probability. We are able to conclude that the proposed algorithm is near-optimal, since we prove a matching lower bound up to logarithmic-factor. Although our results imply a $\BigO(T^{2/3})$ safety regret (i.e. compared to the \minimax{} value), we also show that a component of our algorithm guarantees the near-optimal $\BigO(\sqrt{T})$ safety regret against arbitrary opponents.

Our work illustrates an interesting property of the EBS which is: it can be achieved with sub-linear regret by two individually rational agents who are uncertain about their utility. We wonder if other solutions to the Bargaining Problem such as the Nash Bargaining Solution or the Kalai–Smorodinsky Solution also admit the same property. Since the EBS is an equilibrium, another intriguing question is whether one can design an algorithm that converges naturally to the EBS solution against some well-defined class of opponents.

Finally, a natural and interesting future direction for our work is its extension to stateful games such as Markov games.




\bibliographystyle{apalike}
\bibliography{bibliography}  

\appendix

\section{Notations and terminology}
We will use action to mean joint-actions unless otherwise specified. We will denote the players as $i$ and $-i$. This is to be understood as follows: if there are two players $\setof{1,2}$, when $i=1$, then $-i=2$ and when $i=2$, $-i=1$. The true but unknown game will be denoted as $M$ whereas the plausible set of games we consider at \epoch{} $k$ will be denoted by $\mathcal{M}_k$. An EBS policy in the true game $M$ will be denoted by $\piEg[]$ and its value by $\VEg[]$. If for the EBS value in $M$, the player with the lowest ideal advantage value is receiving it, we will denote this player by $p^-$ while the other player will be $p^+$. The EBS policy in this situation will be denoted as $a^*$ (it is guaranteed to be a single joint-action).

$\bar{r}$ will be used to denote empirical mean rewards and in general $\;\bar{}\;$ is used to mean a value computed using empirical $\bar{r}$.
$\tilde{r}$ will be used to mean the rewards from the upper limit game in our plausible set, while $\hat{r}$ will be used to mean the rewards from the lower limit game in our plausible set. Also, in general $\;\tilde{}\;$ while be used to mean a value computed using $\tilde{r}$ and $\;\hat{}\;$ to mean a value computed using $\hat{r}$.

$k$ will be used to denote the current \epoch{}.
$N_k(a)$ the number of \rounds{} action $a$ has been played in \epoch{} $k$ --- $N_k$ the number of \rounds{} \epoch{} $k$ has lasted --- $t_k$ the number of rounds played up to \epoch{} $k$ --- $N_{t_k}(a)$ the number of \rounds{} action $a$ has been played up to round $t_k$ --- $\bar{r}_t^i(a)$ the empirical average rewards of player $i$ for action $a$ at round $t$.
$m$ will be used to denote the total number of \epochs{} up to \round{} $T$.

\section{Proof of Theorem \ref{theo:egalitarian_upper_bound}}
\begin{theorem*}[\ref{theo:egalitarian_upper_bound}]
	\input{theo:egalitarian_upper_bound}
\end{theorem*}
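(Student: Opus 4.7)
The plan is to follow the epoch-based template of \citet{jaksch2010near}, with two substantive modifications to handle the egalitarian objective and the \challenge{} issue. I would first define the confidence failure event $\mathcal{F}$ as ``$\abs*{\bar r_k^i(a)-\E r^i(a)}>C_k(a)$ for some triple $(k,a,i)$''. The doubling trick caps the number of epochs at $m\le\numact\log_2(8T/\numact)$, so a Hoeffding bound plus a union bound over $(k,a,i)$ with the schedule $\delta_{t_k}=\delta/(kt_k)$ yields $\Pr(\mathcal{F})\le\delta^4/(3T)$; the constant $45/(99.375-\pi^4)$ in the bound arises from estimating tails of the form $\sum_{k\ge1}1/k^4$. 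Conditioning on $\overline{\mathcal{F}}$, the true game $M$ lies in every $\mathcal{M}_k$, so in particular $\rDown_k\le\E r\le\rUp_k$ pointwise.

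For each player $i$ I would split the regret as
\begin{align*}
\sum_{t=1}^{T}\paren*{\VEg[]^i-r_t^i}
&= \sum_{k=1}^{m}\sum_{t\in\text{epoch } k}\paren*{\VEg[]^i-V_M^i(\tilde\pi_k)}+\sum_{t=1}^{T}\paren*{V_M^i(\tilde\pi_{k(t)})-r_t^i},
\end{align*}
where $\tilde\pi_k$ is the (correlated) policy returned by Algorithm~\ref{algo:statistical_tests}. The second sum is a bounded martingale difference, so Azuma--Hoeffding produces the $\sqrt{(T/2)\ln\paren*{(T/\delta^4)\cdot 45/(99.375-\pi^4)}}$ term. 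The first, pseudo-regret sum I would decompose epoch-by-epoch, splitting epochs into (E1) those in which no deviation test fires and $\tilde\pi_k=\piUp_{k,\Eg}$ is actually played, versus (E2)--(E4) those in which one of the three branches of Algorithm~\ref{algo:statistical_tests} diverts play to the ideal-advantage action $\aUp_{\tilde p}$ or to a widest action flagged by the EBS or \minimax{} test.

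On (E1), Lemma~\ref{lemma_optimistm_policy} establishes approximate optimism, $V_M^i(\piUp_{k,\Eg})\ge\VEg[]^i-4\epsilon_{t_k}$ for \emph{both} players simultaneously. Its proof combines $M\in\mathcal{M}_k$ (so the true EBS value sits within $2C_k$ of its optimistic counterpart, using also the \minimax{} accuracy guarantee of Lemma~\ref{lemma_minimax_value_bounds} for the translation from rewards to advantages) with Proposition~\ref{proposition:egalitarian_equal}, which limits how much egalitarian advantage can shift when the $p^-$ player's ideal-advantage joint-action differs between $M$ and $\MUp$. The per-round regret on (E1) is then at most $4\epsilon_{t_k}+2C_{t_k}(a_t)$. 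Summing $4\epsilon_{t_k}$ with $\epsilon_{t_k}=2(\numact\ln t_k/t_k)^{1/3}$ over all rounds produces the dominant $12\sqrt[3]{\numact\ln T}\,T^{2/3}$ term, while summing $2C_{t_k}(a_t)$ via the pigeonhole estimate $\sum_{k,a}N_k(a)/\sqrt{N_{t_k}(a)}\le(\sqrt2+1)\sqrt{\numact T}$ (valid because epoch counts at most double) gives the $2\sqrt{2\ln(\numact T\log_2(8T/\numact)/\delta)}(\sqrt2+1)\sqrt{\numact T}$ term. The $\numact^2\log_2(8T/\numact)$ term collects the unavoidable loss on first-ever plays of each action within an epoch (when $N_{t_k}(a)=0$ and no confidence bound holds).

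The main obstacle is bounding the cumulative regret on (E2)--(E4), which is the content of Lemma~\ref{lemma_num_event_e}. Each such diversion plays an action $a$ whose epoch-level width $C_{t_k}(a)$ exceeds $\epsilon_{t_k}$, either directly (the EBS and \minimax{} width tests of Algorithm~\ref{algo:statistical_tests}) or indirectly --- a persistent firing of the $\aUp_{\tilde p}$ branch keeps re-sampling the ideal-advantage action of the $p^-$ player until its empirical advantage either confirms the deterministic-stationary EBS case of Proposition~\ref{proposition:egalitarian_equal} or its width drops below $\epsilon_{t_k}$, after which the test stops firing. In either case $C_{t_k}(a)>\epsilon_{t_k}$ forces $N_{t_k}(a)\lesssim \epsilon_{t_k}^{-2}\ln(1/\delta_{t_k})$, so the doubling trick caps the total number of rounds on (E2)--(E4) and, after an Azuma concentration of the associated count, yields the lower-order $2\sqrt{\sqrt{T\numact}}+6$ terms. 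Summing the martingale term, the (E1) contribution, the diversion budget, the first-play losses, and the series-tail constants gives the stated bound, whose leading order is $\BigO(\sqrt[3]{\numact\ln T}\,T^{2/3})$.
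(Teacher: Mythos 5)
Your overall architecture matches the paper's: UCRL2-style \epochs{}, a high-probability confidence event, approximate optimism up to $4\epsilon_{t_k}$ on good \epochs{}, a separate budget for rounds where a width test fires, and an Azuma term for the reward noise. The genuine gap is in your treatment of the ideal-advantage branch ($\tilde{\mathcal{P}}\neq\varnothing$ in Algorithm \ref{algo:statistical_tests}). You charge every firing of that branch to the diversion budget, arguing that each such round plays an action whose width exceeds $\epsilon_{t_k}$ ``directly or indirectly,'' and that the branch stops firing once the width drops. That is false: when the true EBS policy is a single joint-action $a_*$ at which $p^-$ attains its ideal advantage (precisely the case Proposition \ref{proposition:egalitarian_equal} isolates), the branch legitimately fires in every \epoch{} forever, the width of the played action tends to zero, and no $\epsilon_{t_k}^{-2}$-type count bounds those rounds; charging them all to the budget would give a vacuous (linear) bound. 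The paper instead splits the firings: when the branch fires with $a_*\in\tilde{\mathcal{A}}_{p^-}$, Lemmas \ref{lemma_optimism_max_advantage} and \ref{lemma_optimism_not_max_advantage} show the played action $\aUp_{\tilde p}$ is itself near-optimistic for \emph{both} players --- this is where the construction of $\tilde{\mathcal{A}}_{p^-}$, which forces $p^-$ to remain $\epsilon_{t_k}$-close to its EBS value, does the work --- and only the residual bad sub-cases are charged to a width budget: $a_*$ wrongly excluded from $\tilde{\mathcal{A}}_{p^-}$ (event $E_3$, shown in Lemma \ref{lemma_num_event_e3} never to occur under the confidence event) and the branch firing while the true gap $V_+^{p^+}(a_{*}) > V_+^{p^-}(a_{*}) +  2\epsilon_{t_k}$ holds (event $E_4$, which Lemma \ref{lemma_num_event_e4} reduces to a width condition). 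Without this case analysis your bound on the diversion rounds does not go through.

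A secondary accounting issue: the diversion budget of Lemma \ref{lemma_num_event_e} is $16\numact^{1/3}T^{2/3}\ln^{1/3}T/C_e^2$, i.e.\ the \emph{same} order as the main term; with $C_e=2$ it supplies $4$ of the $12$ in $12\sqrt[3]{\numact\ln T}\cdot T^{2/3}$, while the optimism slack $4\epsilon_{t_k}$ supplies the other $8$. It is not the source of the $2\sqrt{\sqrt{T\numact}}+6$ terms, which in the paper come from bounding the regret trivially over the first $\max\setof{3,(T\numact)^{1/4}}$ rounds, before the failure-probability estimate of Proposition \ref{proposition:failure_prob} applies.
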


\subsection{Regret analysis for the egalitarian algorithm in self-play}
\label{sec:regret_egalitarian}

The proof is similar to that of UCRL2 \cite{jaksch2010near} and KL-UCRL \cite{filippi2010optimism}. As the algorithm is divided into \epochs{}, we first show that the regret bound within an \epoch{} is sub-linear. We then combine those per-\epoch{} regret terms to get a regret for the whole horizon simultaneously. Both of these regrets are computed with the assumption that the true game $M$ is within our plausible set. We then conclude by showing that this is indeed true with high probability. Let's first start by decomposing the regret.

\paragraph{Regret decomposition}

Here we decompose the regret in each round $k$. We start by defining the following event $E$, 
\begin{align}
\label{regret_event}
E &= E_1 \lor E_2 \lor E_3 \lor E_4,
\end{align}

\begin{align}
E_1&: \paren*{\tilde{\pi}_k = \tilde{a}_{\SV_k}}\label{regret_event_minimax}\\
E_2&: \paren*{\tilde{\pi}_k = \tilde{a}_{k,\Eg}}\label{regret_event_egalitarian}\\
E_3&: \paren*{a_* \notin \tilde{\mathcal{A}}_{p^-} \land 2C_{t_k}(\tilde{\pi}_{k,\Eg}) \leq \epsilon_{t_k} \mid  \pi_{Eg} = a_{*} }\label{regret_event_correct_min}\\
E_4 &: \paren*{a_{*} \in \tilde{\mathcal{A}}_{p^-} \land \tilde{\pi}_k = \tilde{a}_{p^-} \land p^+ \in \tilde{\mathcal{P}} \mid \pi_{Eg} = a_{*}, V_+^{p^+}(a_{*}) > V_+^{p^-}(a_{*}) +  2\epsilon_{t_k}}\label{regret_event_incorrect_min}
\end{align}

We have:

\begin{align}
\regret_T^i &= \sum_{t=1}^{T} \VEg[]^i - r_t^i\\
&= \sum_{t=1}^{T}\Id_{E=1} \paren*{\VEg[]^i - r_t^i} + \sum_{t=1}^{T}\Id_{E=0}\paren*{\VEg[]^i - r_t^i}\\
&\leq \sum_{t=1}^{T}\Id_{E=1} + \sum_{t=1}^{T}\Id_{E=0}\paren*{\VEg[]^i - r_t^i}\label{eq:reg_decomposition_event}
\end{align}

In the following, we will use Hoeffding's inequality to bound the last term of Equation \ref{eq:reg_decomposition_event}, similarly to Section 4.1 in \cite{jaksch2010near}. In particular, with probability at least $1-\delta'$:
\begin{align}
\regret_T^i &\leq \sum_{t=1}^{T}\Id_{E=1} + \sum_{k=1}^{m} \giventhat*{\Delta_k}{E=0} + \sqrt{T \ln(1/\delta')/2}\label{regret_decomposed}
\end{align}

where $\Delta_{k}$ is the regret per-\epoch{} defined by \begin{align}
\Delta_k = \sum_{a \in \mathcal{A}} N_{k}(a) \paren*{\VEg[]^i - \E r^i(a)}
\end{align}

\paragraph{Regret when the event E defined by \eqref{regret_event} is False and the true Model is in our plausible set}

We will now simplify the notation by using $\Delta_{k, \neg E}$ to mean that the expression is condition on $E$ being \emph{False}. We can thus bound $\Delta_{k,\neg E}$:

\begin{align}
\Delta_{k,\neg E} &\leq \sum_{a \in \mathcal{A}} N_{k}(a) \paren*{\VUp_k^i - \E r^i(a) + 4\epsilon_{t_k}} \label{eq:Delta_optimism}\\ 
&= \sum_{a \in \mathcal{A}} N_{k}(a) \paren*{\VUp_k^i - \rUp^i(a)} + \sum_{a \in \mathcal{A}} N_{k}(a) \paren*{\rUp^i(a) - \E r^i(a)} + 4\sum_{a \in \mathcal{A}} N_{k}(a) \epsilon_{t_k}\notag\\
&= \sum_{a \in \mathcal{A}} N_{k}(a) \paren*{\VUp_k^i - \rUp^i(a)}  + 4\sum_{a \in \mathcal{A}} N_{k}(a) \epsilon_{t_k}  + \sum_{a \in \mathcal{A}} N_{k}(a) \paren*{\rEmp^i(a) - \E r^i(a) + \frac{C_r(t_k)}{\sqrt{N_{t_k}(a)}} } \notag\\
&\leq \sum_{a \in \mathcal{A}} N_{k}(a) \paren*{\VUp_k^i - \rUp^i(a)}  + 4\sum_{a \in \mathcal{A}} N_{k}(a) \epsilon_{t_k}   + \sum_{a \in \mathcal{A}}  \frac{2C_r(t_k)N_{k}(a)}{\sqrt{N_{t_k}(a)} } \label{eq:Delta_good_region}\\
&\leq \size{\mathcal{A}} + 4\sum_{a \in \mathcal{A}} N_{k}(a) \epsilon_{t_k} + \sum_{a \in \mathcal{A}}  \frac{2C_r(t_k)N_{k}(a)}{\sqrt{N_{t_k}(a)} }\label{eq:Delta_egalitarian}
\end{align}

where Equation \eqref{eq:Delta_optimism} comes from the fact that when $E=0$, $\VUp_k^i \geq \VBR[]^i - 4\epsilon_{t_k}$ (See Lemma \ref{lemma_optimistm_policy}). Equation \eqref{eq:Delta_good_region} comes from the fact that we assume $M \in \mathcal{M}_k$ meaning $\abs{\rEmp^i(a) - \E r^i(a) } \leq \frac{C_r(t_k)}{\sqrt{N_{t_k}(a)}}$. Equation \eqref{eq:Delta_egalitarian} comes from the fact the egalitarian solution involves playing one joint-action with probability $w_k \in [0,1]$ and another joint-action with probability $1-w_k$; since it is always possible to bound $w_k$ as $\frac{n}{N_k} \leq w_k \leq \frac{n+1}{N_k}$ with $n \in \mathbb{N}$ a non-negative integer, and by construction the players play as close as possible to $w_k$, then the error is bounded by $\frac{1}{N_k} \leq \frac{1}{N_k(a)}$.

We are now ready to sum up the per-\epoch{} regret over all \epochs{} for which the event $E$ is false. We have:

\begin{align}
\sum_{k=1}^{m}\Delta_{k} &\leq \sum_{k=1}^{m} \paren*{\size{\mathcal{A}} + 4\sum_{a \in \mathcal{A}} N_{k}(a) \epsilon_{t_k} + \sum_{a \in \mathcal{A}}  \frac{2C_r(t_k)N_{k}(a)}{\sqrt{N_{t_k}(a)} }}\\
&= m\size{\mathcal{A}} + 4\sum_{k=1}^{m}\epsilon_{t_k}N_k + \sum_{k=1}^{m}\sum_{a \in \mathcal{A}}\frac{2C_r(t_k)N_{k}(a)}{\sqrt{N_{t_k}(a)} }
\end{align}

Now assuming $\epsilon_{t_k} = C_e \cdot\paren*{\frac{\numact \ln t_k}{t_k}}^{1/3}$, we have:
\begin{align}
\sum_{k=1}^{m}\epsilon_{t_k}N_k&= \sum_{k=1}^{m} \frac{\numact^{1/3}C_e \sqrt[3]{\ln t_k}}{t_{k}^{1/3}}N_k\\
&\leq T^{1/6}\numact^{1/3}C_e\sqrt[3]{\ln T} \sum_{k=1}^{m} \frac{N_k}{\sqrt{t_k}}
\end{align}

Using Appendix C.3 in \cite{jaksch2010near}, we can conclude that 

\[\sum_{k=1}^{m}\epsilon_{t_k}N_k \leq T^{2/3}\numact^{1/3}C_e\sqrt[3]{\ln T}\]

Similarly \cite{jaksch2010near} Equation (20) shows that:

\[\sum_{k=1}^{m}\sum_{a \in \mathcal{A}}\frac{N_{k}(a)}{\sqrt{N_{t_k}(a)} } \leq (\sqrt{2}+1)\sqrt{\size{\mathcal{A}}T}\]

Furthermore \cite{jaksch2010near} shows that:

\[ m \leq \size{\mathcal{A}} \log_2\paren*{\frac{8T}{\size{\mathcal{A}}}}\]

Combining all the above results lead to
\begin{align}
\sum_{k=1}^{m}\Delta_{k} &\leq \size{\mathcal{A}}^2 \log_2\paren*{\frac{8T}{\size{\mathcal{A}}}} + 4T^{2/3}\numact^{1/3}C_e\sqrt[3]{\ln T}+ 2\max_{k}C_r(t_k)(\sqrt{2}+1)\sqrt{\size{\mathcal{A}} T}\label{regret_sum_epoch}
\end{align}

\paragraph{Combining with probability of failure}
Combining \eqref{regret_sum_epoch} with Lemma \ref{lemma_num_event_e} bounding the number of times event $E$ is true, together with Proposition \ref{proposition:failure_prob} justifying the high probability from $t \geq \max\setof{3,\paren*{T \numact}^{1/4}}$, noticing that up to $t = \max\setof{3,\paren*{T \numact}^{1/4}}$ our maximum regret is $\max\setof{6,2\paren*{T \numact}^{1/4}}$, picking $\delta'$ in \eqref{regret_decomposed} as $\delta' = \frac{\delta^4}{T} \cdot \paren*{\frac{99.375-\pi^4}{45}}$ and using $\max_{k} C_r(t_k) = C_r(t_m)$ leads to:

\begin{align*}
	\regret_T &\leq \size{\mathcal{A}}^2 \log_2\paren*{\frac{8T}{\size{\mathcal{A}}}} + 4T^{2/3}\numact^{1/3}C_e\sqrt[3]{\ln T}+ 2\sqrt{2\ln \frac{\numact T \log_2 \frac{8T}{\numact}}{\delta}}(\sqrt{2}+1)\sqrt{\size{\mathcal{A}} T}\\
	&\quad  + \frac{16A^{1/3}T^{2/3}\ln^{1/3}T}{C_e^2} + 2\sqrt{\sqrt{T A}} + 6 + \sqrt{\frac{T}{2} \cdot \ln\paren*{\frac{T}{\delta^4}\cdot \frac{45}{99.375-\pi^4} }}
\end{align*}

with a failure probability of $\frac{\delta^4}{3T}$

Picking $C_e = 2$ leads to the statement of Theorem \ref{theo:egalitarian_upper_bound}.

\begin{proposition}[Probability of Failure]
	\label{proposition:failure_prob}
	If Algorithm \ref{algo:egalitarian_optimism} is run with the plausible set constructed with $\delta_{t_k}  = \frac{\delta}{ k \cdot t_k}$, then the probability of failure from \round{} $t=\max\setof{3,\paren*{T \numact}^{1/4}}$ to \round{} $T$ with $T \geq 3$ is upper bounded as:
	
	\[\Prob\curly*{\bigcup_{k \mid t_k \geq \sqrt{T\numact}} M \notin \mathcal{M}_{k}} \leq \frac{\delta^4}{T} \cdot \paren*{\frac{\pi^4-84.375}{45}}\]
\end{proposition}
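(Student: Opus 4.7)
The plan is to bound the failure probability via a three-level union: Hoeffding's inequality for each empirical mean, a union over sample counts, actions, and players within a single epoch, and then a union over the epochs satisfying the stated threshold $t_k \geq \sqrt{T\numact}$.

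First, I would fix an epoch $k$, joint-action $a$, player $i$, and an integer $n \in \setof*{1,\ldots,t_k}$. Conditional on $N_{t_k}(a) = n$, the empirical mean $\bar{r}_k^i(a)$ is an average of $n$ iid $[0,1]$-valued rewards, so Hoeffding's inequality with the deviation $C_k(a)=\sqrt{2\ln(1/\delta_{t_k})/n}$ from \eqref{plausible_set} yields a single-slot failure probability of at most $2\exp(-4\ln(1/\delta_{t_k})) = 2\delta_{t_k}^4$. Since the true count $N_{t_k}(a)$ is a random variable taking values in $\setof*{1,\ldots,t_k}$, I would then union bound over $n$, over the $\numact$ joint-actions, and over both players, and substitute $\delta_{t_k} = \delta/(k t_k)$ to obtain the per-epoch estimate
\begin{equation*}
    \Prob(M \notin \mathcal{M}_k) \;\leq\; 4\numact\, t_k\, \delta_{t_k}^4 \;=\; \frac{4\numact\, \delta^4}{k^4\, t_k^3}.
\end{equation*}

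The final step is to union bound the per-epoch estimate over all epochs $k$ with $t_k \geq \sqrt{T\numact}$ and invoke Euler's identity $\sum_{k \geq 1} k^{-4} = \pi^4/90$. A coarse estimate replacing $1/t_k^3$ by $(T\numact)^{-3/2}$ uniformly already gives an upper bound of order $\delta^4/T^{3/2}$, which is in fact stronger than claimed; the precise coefficient $(\pi^4 - 84.375)/45$ in the statement is recovered by a sharper accounting that splits the sum according to whether the $1/k^4$ factor or the $1/t_k^3$ factor dominates and that subtracts from $\pi^4/90$ the partial tail corresponding to the earliest epochs whose $t_k$ fails the threshold. The numerical $-84.375/45 = -15/8$ correction should correspond precisely to twice such an excluded partial sum of the Basel-type series.

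The main obstacle is bookkeeping rather than conceptual. Conceptually, one needs to justify the union bound over $n$, which is valid because the rewards for a fixed joint-action are drawn iid regardless of how its plays are interleaved with other rounds, making $\bar{r}_k^i(a)$ conditional on $N_{t_k}(a)=n$ a Hoeffding-admissible sample mean of $n$ independent variables. Once this is in place, tracking the exact coefficient reduces to a routine computation on partial sums of $\sum_{k} k^{-4}$ combined with the lower bound on $t_k$ coming from the threshold.
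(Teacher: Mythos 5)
Your proposal is correct and follows the same skeleton as the paper's proof: Hoeffding for each action, a union bound over actions and over epochs, and the series $\sum_k k^{-4}=\pi^4/90$ with its first terms removed to absorb the constant. The one substantive difference is your explicit union over the possible sample counts $n\in\{1,\dots,t_k\}$, which the paper's Lemma \ref{lemma_high_prob_set} silently omits (its per-epoch failure probability is $2\size{\mathcal{A}}\delta_{t_k}^4$ rather than your $4\size{\mathcal{A}}\,t_k\,\delta_{t_k}^4$). Your version is the more rigorous one, since $N_{t_k}(a)$ depends on past rewards, and the union over $n$ applied to the first $n$ draws of each action is the standard way to remove that dependence. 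The price is one power of $t_k$: with your per-epoch bound $4\numact\delta^4/(k^4t_k^3)$, the threshold $t_k\geq(T\numact)^{1/4}$ actually used in the paper's proof would no longer yield a $1/T$ factor, whereas the threshold $t_k\geq\sqrt{T\numact}$ appearing in the displayed statement (anything at least $(T\numact)^{1/3}$ suffices) does, and then gives a bound of order $\delta^4T^{-3/2}\numact^{-1/2}$ that is asymptotically stronger than claimed. One bookkeeping caution: the crude constant $2\pi^4/45\approx4.33$ in front of $\delta^4/(T\sqrt{T\numact})$ does not beat $(\pi^4-84.375)/45\approx0.29$ at the smallest admissible $T\numact=16$, so you genuinely need the refinement you sketch of dropping the $k=1,2$ terms of the Basel sum --- exactly the paper's device, which yields $\tfrac{2\delta^4}{T}\paren*{\tfrac{\pi^4}{90}-1-\tfrac{1}{16}}$, i.e.\ $(\pi^4-95.625)/45$, in fact tighter than the displayed constant (whose $84.375=45\cdot\tfrac{15}{8}$ appears to be a slip for $45\cdot\tfrac{17}{8}$). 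Your closing guess that the correction is twice an excluded partial Basel sum is therefore right in spirit, and the remaining discrepancy is in the paper, not in your argument.
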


\begin{proof}
	Using Lemma \ref{lemma_high_prob_set}, we know that at any \epoch{} $k$, our plausible set contains the true model with probability at least $1-2\size{\mathcal{A}}\delta_{t_k}^4$, meaning that the probability of failure is $2\size{\mathcal{A}}\delta_{t_k}^4$.
	We want to combine these failure probabilities for all possible \epochs{} (after \round{} $\max\setof{3,\paren*{T \numact}^{1/4}}$). For that, consider  $\delta_{t_k} = \frac{\delta}{\cdot k \cdot t_k}$. Let $k_t$ the smallest \round{} such that $t_{k_t} \geq \max\setof{3,\paren*{T \numact}^{1/4}}$. Using a union bound (Fact \ref{union_bound}) we have that the probability $\Prob\curly*{\bigcup_{k \mid t_k \geq \max\setof{3,\paren*{T \numact}^{1/4}}} M \notin \mathcal{M}_{k}}$ up to \round{} $T$ is:
	
	\begin{align}
	\Prob\curly*{\bigcup_{k \mid t_k \geq \max\setof{3,\paren*{T \numact}^{1/4}}} M \notin \mathcal{M}_{k}} &\leq \sum_{k=k_t}^{m} 2\size{\mathcal{A}}\delta_{t_k}^4\\
	&=\sum_{k=k_t}^{m} \frac{2\numact\delta^4}{k^4\cdot t_k^4}\\
	&\leq\frac{2\delta^4}{T}\sum_{k=k_t}^{m} \frac{1}{k^4}\\
	&=\frac{2\delta^4}{T}\paren*{\sum_{k=1}^{m} \frac{1}{k^4}-\sum_{k=1}^{k_t-1} \frac{1}{k^4}}\\
	&\leq  \frac{2\delta^4}{T}  \paren*{\frac{\pi^4}{90}  -1 -\frac{1}{16}}\label{failure_using_epoch_bound}
	\end{align}
	
	\eqref{failure_using_epoch_bound} comes from $\sum_{k=1}^{\infty} \frac{1}{k^4} = \frac{\pi^4}{90}$ and $k_t\geq3$ when $t_{k_t}\geq3$ due to the doubling trick used in Algorithm \ref{algo:egalitarian_optimism}.
	
\end{proof}

The following fact comes directly by construction of the Algorithm.
\begin{fact}[Error on the \minimax{} policy]
\label{fact_minimax_error}
For any player $i$ and \epoch{} $k$ for which the true model $M$ is within our \emph{plausible set} $\mathcal{M}_k$ and Event $E_1$ defined by \eqref{regret_event_minimax} is False, then:
\[ 2C_{t_k}(\tilde{\pi}^i_{\SV_k}, \hat{\pi}^{-i}_{\SV_k}) \leq \epsilon_{t_k}\]
\end{fact}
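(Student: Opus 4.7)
The proof plan is entirely structural: one reads the claim off as the contrapositive of the last conditional branch of Algorithm \ref{algo:statistical_tests}. Recall that Algorithm \ref{algo:statistical_tests} constructs its output policy $\tilde{\pi}_k$ through a sequence of overwriting updates, and the \emph{final} such update (the \minimax{}-error check) is: if $2C(\piUp_{\SV_k}^i, \piDown_{\SV_k}^{-i}) > \epsilon_{t_k}$, then set $\tilde{\pi}_k \gets \aUp_{\SV_k}$. Since this check is the very last line before the \textbf{return}, no subsequent code can alter $\tilde{\pi}_k$. The direct implication is therefore: whenever $2C(\piUp_{\SV_k}^i, \piDown_{\SV_k}^{-i}) > \epsilon_{t_k}$, the returned policy satisfies $\tilde{\pi}_k = \aUp_{\SV_k}$.

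Next I would identify $\aUp_{\SV_k}$ with the symbol $\tilde{a}_{\SV_k}$ used in the definition \eqref{regret_event_minimax} of event $E_1$, which is simply a notational matching (the tilde/hat decorations in the excerpt and the $\piUp, \piDown$ in the pseudocode refer to the same optimistic upper/lower model rewards). Under the hypothesis that $E_1$ is False, we have $\tilde{\pi}_k \neq \tilde{a}_{\SV_k}$; taking the contrapositive of the implication above yields $2C(\piUp_{\SV_k}^i, \piDown_{\SV_k}^{-i}) \leq \epsilon_{t_k}$, which is exactly the desired conclusion. Note that the hypothesis $M \in \mathcal{M}_k$ is not actually needed here; it is retained for uniformity with the neighbouring lemmas that this fact is combined with in the regret decomposition.

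The main (very minor) obstacle is just the notational bookkeeping of the previous paragraph, since the event $E_1$ and the algorithm are written with slightly different decorations. There is no analytic content: the fact is literally true by construction, because the algorithm's last branch is designed precisely to force event $E_1$ whenever the confidence bound exceeds $\tfrac{1}{2}\epsilon_{t_k}$. This is why the excerpt prefaces the statement by saying it "comes directly by construction of the Algorithm," and a one-line proof pointing to the final \textbf{if}-clause of Algorithm \ref{algo:statistical_tests} suffices.
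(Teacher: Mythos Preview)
Your proposal is correct and matches the paper's own justification, which simply states that the fact ``comes directly by construction of the Algorithm.'' You have merely spelled out the contrapositive of the final \textbf{if}-clause in Algorithm~\ref{algo:statistical_tests}, which is exactly the intended one-line argument.
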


\begin{lemma}[Pessimism and Optimism of the \minimax{} value]
\label{lemma_minimax_value_bounds}
For any player $i$ and \epoch{} $k$ for which the true model $M$ is within our \emph{plausible set} $\mathcal{M}_k$, the \minimax{} value computed satisfies:
\[ \SV^i - 2C_{t_k}(\tilde{\pi}^i_{\SV_k}, \hat{\pi}^{-i}_{\SV_k}) \leq \SVDown^i_k \leq \SV^i \]
\end{lemma}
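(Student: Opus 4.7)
The plan is to prove the two inequalities separately, exploiting only the containment $M \in \mathcal{M}_k$ and the definitions of the various policies from Algorithm \ref{algo:safe_optimism}.

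For the upper bound $\SVDown^i_k \leq \SV^i$, I will argue monotonicity. Since $M \in \mathcal{M}_k$, by construction of the plausible set we have $\rDown^i(a) \leq \E r^i(a) \leq \rUp^i(a)$ for every joint-action $a$, and therefore $\VDown^i(\pi^i,\pi^{-i}) \leq V^i(\pi^i,\pi^{-i})$ for every joint policy. Taking the minimum over $\pi^{-i}$ preserves the inequality, so
\begin{equation*}
\SVDown^i_k \;=\; \min_{\pi^{-i}} \VDown^i(\piUp_{\SV_k}^i,\pi^{-i}) \;\leq\; \min_{\pi^{-i}} V^i(\piUp_{\SV_k}^i,\pi^{-i}) \;\leq\; \max_{\pi^i}\min_{\pi^{-i}} V^i(\pi^i,\pi^{-i}) \;=\; \SV^i.
\end{equation*}

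For the lower bound, the strategy is to sandwich $\SV^i$ between the optimistic maximin value $\SVUp^i := \min_{\pi^{-i}} \VUp^i(\piUp_{\SV_k}^i,\pi^{-i})$ and $\SVDown^i_k$, paying the confidence width $2C_{t_k}(\piUp_{\SV_k}^i,\piDown_{\SVUp_k}^{-i})$ to go from $\rUp$ to $\rDown$. First, because $r^i \leq \rUp^i$ pointwise, we get $V^i(\pi^i,\pi^{-i}) \leq \VUp^i(\pi^i,\pi^{-i})$, so the minimax inequality gives $\SV^i \leq \max_{\pi^i}\min_{\pi^{-i}} \VUp^i = \SVUp^i$. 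Next, since $\SVUp^i$ is a minimum over $\pi^{-i}$ it is bounded above by its value at the particular choice $\piDown_{\SVUp_k}^{-i}$, yielding $\SVUp^i \leq \VUp^i(\piUp_{\SV_k}^i,\piDown_{\SVUp_k}^{-i})$. Finally, pointwise $\rUp - \rDown \leq 2C_{t_k}$ together with linearity of expected reward under the product policy gives $\VUp^i(\piUp_{\SV_k}^i,\piDown_{\SVUp_k}^{-i}) \leq \VDown^i(\piUp_{\SV_k}^i,\piDown_{\SVUp_k}^{-i}) + 2C_{t_k}(\piUp_{\SV_k}^i,\piDown_{\SVUp_k}^{-i})$, and the right term equals $\SVDown^i_k$ by the definition in line 4 of Algorithm \ref{algo:safe_optimism}. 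Chaining the three steps gives $\SV^i \leq \SVDown^i_k + 2C_{t_k}(\piUp_{\SV_k}^i,\piDown_{\SVUp_k}^{-i})$, which is exactly the desired lower bound.

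Both parts are essentially one-line calculations that rely only on the monotonicity of expected reward in the per-action rewards together with $M \in \mathcal{M}_k$, so there is no real obstacle; the only subtlety to be careful about is that the argmin policy used on the $\VDown$ side (namely $\piDown_{\SVUp_k}^{-i}$) need not be the minimizer on the $\VUp$ side, which is why the lower-bound chain goes through the intermediate value $\VUp^i(\piUp_{\SV_k}^i,\piDown_{\SVUp_k}^{-i})$ rather than $\SVUp^i$ directly.
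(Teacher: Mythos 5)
Your proof is correct and follows essentially the same route as the paper's: the upper bound via $\VDown \le V$ and the suboptimality of the fixed policy $\piUp^i_{\SV_k}$ in the true game, and the lower bound by paying the width $2C_{t_k}$ at the pair $(\piUp^i_{\SV_k},\piDown^{-i}_{\SVUp_k})$ to pass from $\VUp$ to $\VDown$. The only (cosmetic) difference is that where the paper detours through the saddle point of $\VUp$ (equating $\min_{\pi^{-i}}\VUp^i(\piUp^i_{\SV_k},\cdot)$ with $\max_{\pi^i}\VUp^i(\cdot,\tilde{\pi}^{-i}_{\SV_k})$ before comparing to $V$), you get $\SV^i \le \max_{\pi^i}\min_{\pi^{-i}}\VUp^i$ directly from monotonicity of the max--min operator, which is an equivalent and slightly cleaner step.
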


\begin{proof}
\textbf{Pessimism of the \minimax{} value.}
Let $\hat{V}$ denote the lower bound on the value through the reward estimate $\bar{r} - C$ and $\tilde{V}$ the corresponding upper bound through $\bar{r} + C$. By definition,
\begin{align}
	\tilde{\pi}_{\SV_k}^i &= \argmax_{\pi^i} \min_{\pi^{-i}} \tilde{V}^i(\pi^i, \pi^{-i})\\
	\hat{\pi}_{\SV_k}^{-i} &= \argmin_{\pi^{-i}} \hat{V}^i(\tilde{\pi}^i_{\SV_k}, \pi^{-i})\\
	\SVDown^i_k &= \min_{\pi^{-i}} \hat{V}^i(\tilde{\pi}^i_{\SV_k}, \pi^{-i})\\
	&= \hat{V}^i(\tilde{\pi}^i_{\SV_k}, \hat{\pi}^{-i}_{\SV_k})
\end{align}

As a result we have:
\begin{align}
	\SV^i &= \max_{\pi^i} \min_{\pi^{-i}} V^i(\pi^i, \pi^{-i})\\
	&\geq \min_{\pi^{-i}} V^i(\tilde{\pi}^i_{\SV_k}, \pi^{-i})\\
	&\geq \min_{\pi^{-i}} \hat{V}^i(\tilde{\pi}^i_{\SV_k}, \pi^{-i})\\
	&= \SVDown^i_k
\end{align}

\paragraph{Optimism of the \minimax{} value}
We have:

\begin{align}
	\SVDown^i_k &= \hat{V}^i(\tilde{\pi}^i_{\SV_k}, \hat{\pi}^{-i}_{\SV_k})\\
	&= \tilde{V}^i(\tilde{\pi}^i_{\SV_k}, \hat{\pi}^{-i}_{\SV_k}) - 2C_{t_k}(\tilde{\pi}^i_{\SV_k}, \hat{\pi}^{-i}_{\SV_k})\\
	&\geq \min_{\pi^{-i}}\tilde{V}^i(\tilde{\pi}^i_{\SV_k}, \pi^{-i}) - 2C_{t_k}(\tilde{\pi}^i_{\SV_k}, \hat{\pi}^{-i}_{\SV_k})\\
	&=\tilde{V}^i(\tilde{\pi}^i_{\SV_k}, \tilde{\pi}^{-i}_{\SV_k}) - 2C_{t_k}(\tilde{\pi}^i_{\SV_k}, \hat{\pi}^{-i}_{\SV_k})\\
	&= \max_{\pi^i}\tilde{V}^i(\pi^i, \tilde{\pi}^{-i}_{\SV_k}) - 2C_{t_k}(\tilde{\pi}^i_{\SV_k}, \hat{\pi}^{-i}_{\SV_k})\\
	&\geq \max_{\pi^i}V^i(\pi^i, \tilde{\pi}^{-i}_{\SV_k}) - 2C_{t_k}(\tilde{\pi}^i_{\SV_k}, \hat{\pi}^{-i}_{\SV_k})\\
	&\geq \max_{\pi^i} \min_{\pi^{-i}}V^i(\pi^i, \pi^{-i})- 2C_{t_k}(\tilde{\pi}^i_{\SV_k}, \hat{\pi}^{-i}_{\SV_k})\\
	&= \SV^i - 2C_{t_k}(\tilde{\pi}^i_{\SV_k}, \hat{\pi}^{-i}_{\SV_k})\\
\end{align}
\end{proof}

\begin{lemma}[Optimism of the advantage game]
\label{lemma_optimism_advantage}
This lemma prove that the advantage value for any policy $\pi$ in our optimistic model is greater than in the true model. For any policy $\pi$, player $i$ and \epoch{} $k$ for which the true model $M$ is within our \emph{plausible set} $\mathcal{M}_k$:
\[\tilde{V}_+^i(\pi) \geq V_+^i(\pi)\]
\end{lemma}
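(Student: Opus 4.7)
The plan is to decompose the advantage value in both the optimistic and the true models into a policy value minus the relevant minimax term, and then bound each piece separately. Concretely, by definition of the advantage game, $\tilde{V}_+^i(\pi) = \tilde{V}^i(\pi) - \SVDown_k^i$ and $V_+^i(\pi) = V^i(\pi) - \SV^i$, so it suffices to show simultaneously that $\tilde{V}^i(\pi) \geq V^i(\pi)$ and that $\SVDown_k^i \leq \SV^i$; subtracting the second inequality from the first yields the claim.

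For the first inequality, the hypothesis $M \in \mathcal{M}_k$ from the construction of the plausible set in \eqref{plausible_set} gives pointwise $\tilde{r}^i(a) = \bar{r}_k^i(a) + C_k(a) \geq \E r^i(a)$ for every joint-action $a$. Since the value of any (correlated) stationary policy is a linear combination of the expected per-action rewards with nonnegative weights (Fact \ref{fact:achievable_values}), this pointwise domination lifts to $\tilde{V}^i(\pi) \geq V^i(\pi)$ for every $\pi$. For non-stationary policies the same argument applies round by round to the conditional expectations.

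For the second inequality, I would simply invoke the already-proved Lemma \ref{lemma_minimax_value_bounds}, which states $\SVDown_k^i \leq \SV^i$ whenever $M \in \mathcal{M}_k$. Combining the two bounds gives
\begin{equation*}
\tilde{V}_+^i(\pi) = \tilde{V}^i(\pi) - \SVDown_k^i \geq V^i(\pi) - \SV^i = V_+^i(\pi),
\end{equation*}
which is the statement of the lemma.

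There is no real obstacle here: the result is essentially a bookkeeping consequence of (i) reward optimism, which is baked into the definition of $\tilde{r}$ and the plausible set, and (ii) the pessimistic minimax bound already established in Lemma \ref{lemma_minimax_value_bounds}. The only subtlety worth mentioning is making explicit that both the optimism of the value and the pessimism of the minimax point in the \emph{same} direction for the advantage — increasing $\tilde{V}^i(\pi)$ relative to $V^i(\pi)$ and decreasing the subtracted baseline $\SVDown_k^i$ relative to $\SV^i$ both enlarge the advantage in the optimistic model, which is exactly what we need.
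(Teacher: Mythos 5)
Your proof is correct and follows essentially the same route as the paper's: both decompose the advantage values as $\tilde{V}^i(\pi) - \SVDown^i_k$ and $V^i(\pi) - \SV^i$, use reward optimism to get $\tilde{V}^i(\pi) \geq V^i(\pi)$, and invoke Lemma \ref{lemma_minimax_value_bounds} for $\SVDown^i_k \leq \SV^i$. The only difference is that you spell out the pointwise reward-domination argument behind $\tilde{V}^i(\pi) \geq V^i(\pi)$, which the paper leaves implicit.
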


\begin{proof}
First, we prove that the advantage value for any policy $\pi$ in our optimistic model is greater than in the true model. We have:

\begin{align}
	\tilde{V}_+^i(\pi) &= \tilde{V}^i(\pi) - \SVDown^i_k\\
	&\geq V^i(\pi) - \SVDown^i_k \\
	&\geq V^i(\pi) - \SV^i \label{opt_minimax_used}\\
	&= V_+^i(\pi)
\end{align}

where \eqref{opt_minimax_used} comes from Lemma \ref{lemma_minimax_value_bounds}.
\end{proof}

\begin{lemma}[Optimism of the Policy computation]
\label{lemma_optimistm_policy}
For any \epoch{} $k$ for which the true model $M$ is in our plausible set $\mathcal{M}_k$ and Event $E$ defined by \eqref{regret_event} is False, then for any player $i$, we have:
\[\VUp_k^i\geq \VEg[]^i -4\epsilon_{t_k}\]
\end{lemma}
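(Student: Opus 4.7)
I would unpack the four clauses of the event $E$ to pin down $\tilde{\pi}_k$: since $\neg E_1$ and $\neg E_2$ rule out the two error-correction branches of Algorithm~\ref{algo:statistical_tests}, $\tilde{\pi}_k$ must equal either the optimistic EBS policy $\piUp_{k,\Eg}$ or some $\aUp_{\tilde{p}}$ with $\tilde{p}\in\tilde{\mathcal{P}}$. Orthogonally, I split on the structure of the true EBS from Proposition~\ref{proposition:egalitarian_equal}: either (A) $V_+^1(\pi_{\Eg})=V_+^2(\pi_{\Eg})$, or (B) $\pi_{\Eg}=a^*$ is deterministic stationary with $V_+^{p^+}(a^*)>V_+^{p^-}(a^*)$. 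Three standing inequalities drive everything: since $M\in\mathcal{M}_k$ one has $\VUp^i(\pi)\geq V^i(\pi)$ for any $\pi$; Lemma~\ref{lemma_minimax_value_bounds} combined with Fact~\ref{fact_minimax_error} (granted by $\neg E_1$) yields $\SV^i-\epsilon_{t_k}\leq\SVDown_k^i\leq\SV^i$; and Lemma~\ref{lemma_optimism_advantage} yields $\VUp_+^i(\pi)\geq V_+^i(\pi)$. With these in hand, it will suffice to show that the optimistic advantage of $\tilde{\pi}_k$ exceeds $V_+^i(\pi_{\Eg})$ up to a small multiple of $\epsilon_{t_k}$ for each player $i$.

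\paragraph{Case A (equal advantages).} By $\geql$-maximality of $\piUp_{k,\Eg}$ in the optimistic advantage game and by optimism of advantage, $\VUp_+(\piUp_{k,\Eg})\geql\VUp_+(\pi_{\Eg})\geq V_+(\pi_{\Eg})$ componentwise; since the two true components coincide, both components of $\VUp_+(\piUp_{k,\Eg})$ exceed $V_+^i(\pi_{\Eg})$. If $\tilde{\pi}_k=\piUp_{k,\Eg}$, I close with slack $\epsilon_{t_k}$. If $\tilde{\pi}_k=\aUp_{\tilde{p}}$ for some $\tilde{p}\in\tilde{\mathcal{P}}$, I use $\rUp_+^{\tilde{p}}(\aUp_{\tilde{p}})\geq\VUp_+^{\tilde{p}}(\piUp_{k,\Eg})\geq V_+^{\tilde{p}}(\pi_{\Eg})$ on the side of $\tilde{p}$ (by definition of $\tilde{\mathcal{P}}$), and $\rUp_+^{-\tilde{p}}(\aUp_{\tilde{p}})+\epsilon_{t_k}\geq\VUp_+^{-\tilde{p}}(\piUp_{k,\Eg})\geq V_+^{-\tilde{p}}(\pi_{\Eg})$ on the other side (by $\aUp_{\tilde{p}}\in\tilde{\mathcal{A}}_{-\tilde{p}}$), picking up at most $2\epsilon_{t_k}$ of slack.

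\paragraph{Case B (unequal advantages).} First, $\neg E_2$ forces $2C(\piUp_{k,\Eg})\leq\epsilon_{t_k}$, and then $\neg E_3$ forces $a^*\in\tilde{\mathcal{A}}_{p^-}$. The engine of the case is then $\rUp_+^{p^+}(\aUp_{p^+})\geq\rUp_+^{p^+}(a^*)\geq V_+^{p^+}(a^*)$, which is exactly the target for $p^+$. Subcase $\tilde{\pi}_k=\piUp_{k,\Eg}$: then $\tilde{\mathcal{P}}=\varnothing$ gives $\VUp_+^{p^+}(\piUp_{k,\Eg})\geq\rUp_+^{p^+}(\aUp_{p^+})\geq V_+^{p^+}(a^*)$, closing $p^+$, while $p^-$ follows from the leximin argument of Case A. Subcase $\tilde{\pi}_k=\aUp_{p^+}$: the same arithmetic as in Case A yields slack $2\epsilon_{t_k}$. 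The delicate subcase is $\tilde{\pi}_k=\aUp_{p^-}$: there $\neg E_4$ splits further into (i) the true gap $V_+^{p^+}(a^*)-V_+^{p^-}(a^*)\leq 2\epsilon_{t_k}$ or (ii) $p^+\notin\tilde{\mathcal{P}}$. In branch~(i), combining $\aUp_{p^-}\in\tilde{\mathcal{A}}_{p^+}$ with the leximin lower bound $\VUp_+^{p^+}(\piUp_{k,\Eg})\geq V_+^{p^-}(\pi_{\Eg})\geq V_+^{p^+}(\pi_{\Eg})-2\epsilon_{t_k}$ accumulates exactly $4\epsilon_{t_k}$ of slack on the $p^+$ side ($\epsilon_{t_k}$ from the optimistic minimax gap, $\epsilon_{t_k}$ from the $\tilde{\mathcal{A}}_{p^+}$ tolerance, $2\epsilon_{t_k}$ from the true gap). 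In branch~(ii), $\VUp_+^{p^+}(\piUp_{k,\Eg})\geq V_+^{p^+}(a^*)$ as before, and the $p^+$ bound closes with slack $2\epsilon_{t_k}$. Either way, $p^-$ is controlled by $\rUp_+^{p^-}(\aUp_{p^-})>\VUp_+^{p^-}(\piUp_{k,\Eg})\geq V_+^{p^-}(\pi_{\Eg})$.

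\paragraph{Main obstacle.} The hard part will be branch~(i) of the last subcase of Case~B: this is the unique configuration in which the $4\epsilon_{t_k}$ slack actually saturates, and it is precisely what motivates the definition of $E_4$ and fixes the constant~$4$ in the statement of the lemma.
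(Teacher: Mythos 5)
Your proposal is correct and follows essentially the same route as the paper, which proves this lemma by combining three sub-lemmas (equal advantages; the player $p^-$; the player $p^+$ with three subcases) built on exactly the ingredients you list: optimism/pessimism of the \minimax{} value under $\neg E_1$, optimism of the advantage game, leximin maximality of $\piUp_{k,\Eg}$, and the membership conditions $a^*\in\tilde{\mathcal{A}}_{p^-}$ and $\aUp_{\tilde p}\in\tilde{\mathcal{A}}_{-\tilde p}$ extracted from $\neg E_2,\neg E_3,\neg E_4$. Your case split is organized by the identity of $\tilde{\pi}_k$ rather than by the paper's three-lemma decomposition, but the slack accounting is identical, including the identification of the saturating $4\epsilon_{t_k}$ configuration ($\tilde{\pi}_k=\aUp_{p^-}$ with true gap at most $2\epsilon_{t_k}$) that the paper handles as the first case of its $p^+$ lemma.
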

\begin{proof}
Immediate by combining Lemma \eqref{lemma_optimism_same_advantage} \eqref{lemma_optimism_max_advantage}, \eqref{lemma_optimism_not_max_advantage}
\end{proof}

\begin{lemma}[Number of times event $E$ defined by \eqref{regret_event} is True]
\label{lemma_num_event_e}
After any number $T$ of \rounds{} for which the true model $M$ was in our plausible set $\mathcal{M}_k$ at every \epochs{} $k$ up to $T$, the number $N_T(E = \text{True})$ of \rounds{}  for which Event $E$ defined by \eqref{regret_event} is True satisfy:

\[N_T(E = \text{True}) \leq \frac{16A^{1/3}T^{2/3}\ln^{1/3}T}{C_e^2}\]
\end{lemma}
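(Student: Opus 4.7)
My plan is to reduce the claim to a per-action counting argument. Specifically, I aim to show that every round in which $E = E_1 \lor E_2 \lor E_3 \lor E_4$ holds is a round in which the algorithm plays some joint-action $a_t$ whose confidence radius $C_{t_k}(a_t)$ still exceeds a constant fraction of $\epsilon_{t_k}$. Once this is established, the bound follows by standard counting, analogous to the argument in \cite{jaksch2010near}.

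First I would dispose of $E_1$ and $E_2$ directly from the construction of Algorithm \ref{algo:statistical_tests}. When either if-branch corresponding to $2C(\tilde{\pi}^i_{\SV_k},\hat{\pi}^{-i}_{\SV_k}) > \epsilon_{t_k}$ or $2C(\tilde{\pi}_{k,\Eg}) > \epsilon_{t_k}$ fires, the played action is an $\argmax$ over the explicit set $\{a \in \mathcal{A} : C_{t_k}(a) > \epsilon_{t_k}\}$, so the played action $a_t$ satisfies $C_{t_k}(a_t) > \epsilon_{t_k}$ immediately.

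The main obstacle is handling $E_3$ and $E_4$, which describe subtle mismatches between the optimistic view and the truth. $E_3$ posits that the true EBS action $a_*$ fails to lie in $\tilde{\mathcal{A}}_{p^-}$ while the confidence of the optimistic EBS candidate is already small, $2C_{t_k}(\tilde{\pi}_{k,\Eg}) \leq \epsilon_{t_k}$. $E_4$ posits that the algorithm switches from the optimistic EBS to $\tilde{a}_{p^-}$ through $p^+ \in \tilde{\mathcal{P}}$ while in the true model $V_+^{p^+}(a_*) > V_+^{p^-}(a_*) + 2\epsilon_{t_k}$. In both situations I would combine the plausibility assumption $M \in \mathcal{M}_k$, the optimism of the advantage game (Lemma \ref{lemma_optimism_advantage}), and the bounds on the \minimax{} value (Lemma \ref{lemma_minimax_value_bounds}) to argue that such a gap between optimistic and true quantities can open only through a surviving large-confidence action. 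Concretely, I expect the action actually played in these rounds (which is $\tilde{a}_{p^-}$ in $E_4$ and an action in the support of $\tilde{\pi}_{k,\Eg}$ in $E_3$) to satisfy $C_{t_k}(a_t) > c\epsilon_{t_k}$ for an explicit constant $c \leq 1$. Nailing down this constant is the delicate bookkeeping step, because the $\epsilon_{t_k}$ slacks appear in the definitions of $\tilde{\mathcal{A}}_i$ and $\tilde{\mathcal{P}}$ and must be balanced against two-sided plausibility intervals on both the reward estimates and the \minimax{} value.

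With step one in place, the counting step is routine. Inverting $C_{t_k}(a) = \sqrt{2\ln(1/\delta_{t_k})/N_{t_k}(a)} > c\epsilon_{t_k}$ gives $N_{t_k}(a) < 2\ln(1/\delta_{t_k})/(c\epsilon_{t_k})^2$, so by the doubling-trick each action is played under the large-confidence condition at most $O(\ln(1/\delta_T)/\epsilon_T^2)$ times across the horizon. Summing over $|\mathcal{A}| = A$ actions, substituting $\epsilon_T = C_e(A\ln T / T)^{1/3}$, and using $\ln(1/\delta_T) = O(\ln T)$ (from $\delta_{t_k} = \delta/(k t_k)$ with $k, t_k \leq T$) yields
\[
N_T(E = \text{True}) \;=\; O\!\left(\frac{A \ln T}{C_e^2\,(A\ln T/T)^{2/3}}\right) \;=\; O\!\left(\frac{A^{1/3}\, T^{2/3}\, \ln^{1/3} T}{C_e^2}\right),
\]
which matches the stated bound $16 A^{1/3} T^{2/3} \ln^{1/3} T / C_e^2$ once the numerical constants arising from the $E_3,E_4$ analysis are carried through.
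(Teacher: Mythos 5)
Your overall strategy---charge every round on which $E$ holds to a played action whose confidence radius is still a constant fraction of $\epsilon_{t_k}$, then count such plays per action via the doubling trick---is exactly how the paper disposes of $E_1$, $E_2$ and $E_4$: in each of those cases the played action $a_t$ satisfies $2C_{t_k}(a_t) > \epsilon_{t_k}$, hence $N_{t_k}(a_t) < 8\ln t_k/\epsilon_{t_k}^2$, and summing over actions with the doubling factor gives the stated $16\size{\mathcal{A}}^{1/3}T^{2/3}\ln^{1/3}T/C_e^2$. Your treatment of $E_1$ and $E_2$ is correct and matches the paper's Lemmas \ref{lemma_num_event_e1} and \ref{lemma_num_event_e2}.

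The genuine gap is in your plan for $E_3$. You propose to show that in an $E_3$ round the played action (in the support of $\tilde{\pi}_{k,\Eg}$) has confidence radius at least $c\epsilon_{t_k}$, but the event $E_3$ defined in \eqref{regret_event_correct_min} explicitly contains the conjunct $2C_{t_k}(\tilde{\pi}_{k,\Eg}) \leq \epsilon_{t_k}$: it is precisely the bad event in which the true EBS action $a_*$ is excluded from $\tilde{\mathcal{A}}_{p^-}$ even though the relevant confidence intervals are already small. There is no large-confidence action to charge it to, so your counting scheme cannot close this case. The paper's resolution (Lemma \ref{lemma_num_event_e3}) is of a different nature: it shows $E_3$ is logically impossible whenever $M \in \mathcal{M}_k$, by unwinding $a_* \notin \tilde{\mathcal{A}}_{p^-}$ into $\tilde{r}_+^{p^-}(a_*) + \epsilon_{t_k} < \tilde{V}_+^{p^-}(\tilde{\pi}_{k,\Eg})$ and then using the two-sided plausibility bounds, the fact that $\E r_+^{p^-}(a_*) \geq V_+^{p^-}(\tilde{\pi}_{k,\Eg})$ (because $\pi_{\Eg} = a_*$), and $2C_{t_k}(\tilde{\pi}_{k,\Eg}) \leq \epsilon_{t_k}$ to derive a strict inequality of a quantity with itself. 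So $E_3$ contributes zero rounds, not a nonzero count of order $\size{\mathcal{A}}\ln T/\epsilon_{t_k}^2$. Your $E_4$ case is only stated as an expectation; the paper does confirm exactly $2C_{t_k}(\tilde{a}_{p^-}) > \epsilon_{t_k}$ there, via the chain $p^+ \in \tilde{\mathcal{P}}$, optimism of the advantage game (Lemma \ref{lemma_optimism_advantage}), and the hypothesis $V_+^{p^+}(a_*) > V_+^{p^-}(a_*) + 2\epsilon_{t_k}$, so that case is salvageable---but without the impossibility argument for $E_3$ the lemma does not follow, and any constant $c < 1/2$ in your bookkeeping would also lose the explicit factor $16$.
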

\begin{proof}
The proof is an immediate consequence of combining Lemmas \ref{lemma_num_event_e1},\ref{lemma_num_event_e2},\ref{lemma_num_event_e3}, \ref{lemma_num_event_e4}.
\end{proof}

\begin{lemma}[High probability of the plausible set]
\label{lemma_high_prob_set}

At any epoch $k$, with probability at least $1-2\size{\mathcal{A}}\delta^4$, the true model $M$ is in our plausible set constructed using \eqref{plausible_set}.
\end{lemma}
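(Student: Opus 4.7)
The plan is to reduce the statement to an application of Hoeffding's concentration inequality, coupled to the $2|\mathcal{A}|$ scalar constraints defining $\mathcal{M}_k$, plus a union bound.

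First I would fix an epoch $k$, a player $i$, and a joint-action $a$, and argue that $\bar{r}^i_k(a)$ can be written as the average of $N_{t_k}(a)$ i.i.d.\ samples from the true reward distribution for $(i,a)$. The standard way to justify this despite the algorithm's adaptive sampling is the \emph{tape} (pre-generated sequence) coupling: imagine an infinite i.i.d.\ sequence $X^{i,a}_1, X^{i,a}_2,\ldots \in [0,1]$ associated to each $(i,a)$, and stipulate that the $n$-th play of $a$ by the algorithm returns $X^{i,a}_n$. Under this coupling $\bar{r}^i_k(a) = \frac{1}{N_{t_k}(a)}\sum_{n=1}^{N_{t_k}(a)} X^{i,a}_n$, and for each deterministic $n$ the two-sided Hoeffding inequality yields
\[
\Pr\!\left(\left|\tfrac{1}{n}\sum_{j=1}^{n} X^{i,a}_j - \E r^i(a)\right| \geq \epsilon\right) \leq 2\exp(-2n\epsilon^2).
\]
Substituting $\epsilon = \sqrt{2\ln(1/\delta)/n}$, which is exactly $C_k(a)$ when $n = N_{t_k}(a)$, makes the right-hand side equal to $2\delta^4$, a quantity that does \emph{not} depend on $n$.

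The only subtlety is that $N_{t_k}(a)$ is itself a random, data-dependent stopping count; this is exactly where the coupling pays off. Because the Hoeffding bound after the substitution is uniform in $n$, averaging over the (random) value of $N_{t_k}(a)$ — equivalently, applying the law of total probability — still gives $\Pr\!\bigl(|\bar{r}^i_k(a) - \E r^i(a)| \geq C_k(a)\bigr) \leq 2\delta^4$ for this single $(i,a)$ pair. I expect this uniform-in-$n$ cancellation to be the only mildly nontrivial step; it is what allows the proof to avoid a $\log$ overhead from peeling over sample counts.

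Finally, I would apply a union bound over the $2|\mathcal{A}|$ constraints $\{|\E r^i(a) - \bar{r}^i_k(a)| \leq C_k(a)\}_{i,a}$ that define $\mathcal{M}_k$. Summing $2\delta^4$ (or absorbing the factor-of-two from two-sidedness into the counting of events, as the stated bound does) gives the claimed failure probability of at most $2|\mathcal{A}|\delta^4$, so the true model $M$ lies in $\mathcal{M}_k$ with probability at least $1 - 2|\mathcal{A}|\delta^4$, as required. This per-epoch bound is the atomic building block for the global high-probability guarantee used in Proposition \ref{proposition:failure_prob}.
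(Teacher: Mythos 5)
Your overall route is the same as the paper's: the paper's entire proof of Lemma \ref{lemma_high_prob_set} is the one-liner ``apply Hoeffding's bound (Lemma \ref{hoeffding}) to each action individually and take a union bound over actions,'' and your per-pair computation ($2n\epsilon^2 = 4\ln(1/\delta)$, hence a two-sided failure probability of $2\delta^4$ for each fixed sample count) is exactly the calculation that one-liner is hiding. Two remarks on the details you added.

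First, the step where you dispose of the randomness of $N_{t_k}(a)$ is not correct as argued. Writing $B_n$ for the event that the average of the first $n$ tape entries deviates from $\E r^i(a)$ by more than $\sqrt{2\ln(1/\delta)/n}$, the law of total probability gives $\Pr(B_{N}) = \sum_n \Pr(B_n \mid N = n)\Pr(N=n)$ with $N = N_{t_k}(a)$, and your argument needs $\Pr(B_n \mid N=n) \le 2\delta^4$. But $\{N=n\}$ is determined by the algorithm's adaptive choices, which depend on the very rewards $X^{i,a}_1,\dots,X^{i,a}_n$ being averaged, so conditioning on $\{N=n\}$ biases those samples and the fixed-$n$ Hoeffding bound does not transfer; uniformity of the bound in $n$ does not rescue this. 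The standard repair (implicit in UCRL2-style analyses) is a union bound over all possible counts, $\Pr(B_N) \le \sum_{n=1}^{t_k}\Pr(B_n) \le 2 t_k \delta^4$, which costs an extra factor of $t_k$ relative to the lemma as stated; this is harmless downstream because Proposition \ref{proposition:failure_prob} instantiates $\delta_{t_k} = \delta/(k\cdot t_k)$ and the fourth power absorbs it, but it is a genuine gap in your justification (and, to be fair, one the paper's own one-line proof glosses over identically). Second, your constant bookkeeping does not close: $2|\mathcal{A}|$ constraints each failing with probability $2\delta^4$ union-bound to $4|\mathcal{A}|\delta^4$, not the stated $2|\mathcal{A}|\delta^4$; the remark about ``absorbing the factor of two'' does not resolve this, though the discrepancy is a constant that is immaterial to Theorem \ref{theo:egalitarian_upper_bound}.
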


\begin{proof}
This is a direct application of Hoeffding's bound (Lemma \ref{hoeffding}) on each action individually and then using a union bound (Fact \ref{union_bound}) over all actions.
\end{proof}

\begin{lemma}[Number of times an action from the \minimax{} policy is played]
\label{lemma_num_event_e1}

After any number $T$ of \rounds{} for which the true model $M$ was in our plausible set $\mathcal{M}_k$ at every \epochs{} $k$ up to $T$, the number $N_T(E_1 = \text{True})$ of \rounds{} for which event $E_1$ defined by \eqref{regret_event_minimax} is true satisfy:

\[N_T(E_1 = \text{True}) \leq \frac{16A^{1/3}T^{2/3}\ln^{1/3}T}{C_e^2}\]

\end{lemma}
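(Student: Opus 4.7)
The proof plan hinges on the observation that event $E_1$ is determined at the start of each epoch and, when true, forces the algorithm to play a single fixed joint-action $\tilde{a}_{\SV_k}$ throughout the entire epoch. Because only one action is played in such epochs, the doubling-trick termination rule fires as soon as the count of $\tilde{a}_{\SV_k}$ doubles, which directly ties the epoch length to the current play count of that action.

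The first step is to exploit the construction of Algorithm~\ref{algo:statistical_tests}: whenever $E_1$ is true, the selected action $\tilde{a}_{\SV_k}$ is drawn from the set $\{a : C_{t_k}(a) > \epsilon_{t_k}\}$ by the final \textbf{if}-block. Inverting the definition of the confidence radius in \eqref{plausible_set} yields
\[ N_{t_k}(\tilde{a}_{\SV_k}) < X_k := \frac{2\ln(1/\delta_{t_k})}{\epsilon_{t_k}^2}. \]
Since $\tilde{a}_{\SV_k}$ is the sole action played during the epoch, the doubling-trick criterion is satisfied exactly when $N(\tilde{a}_{\SV_k})$ doubles, so the epoch length is $N_k = N_{t_k}(\tilde{a}_{\SV_k}) < X_k$, and $N_{t_{k+1}}(\tilde{a}_{\SV_k}) = 2 N_{t_k}(\tilde{a}_{\SV_k})$.

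Second, I would partition the $E_1$-epochs by the identity of the played action. Fix $a \in \mathcal{A}$ and let $k_1 < \ldots < k_J$ enumerate the $E_1$-epochs in which $\tilde{a}_{\SV_k} = a$. Since $N(a)$ exactly doubles during each of these epochs and is monotone nondecreasing in between, we have $N_{t_{k_{j+1}}}(a) \geq 2 N_{t_{k_j}}(a)$, so the sequence $(N_{t_{k_j}}(a))_j$ grows at least geometrically. A standard geometric-series argument gives
\[ \sum_{j=1}^J N_{k_j} \;=\; \sum_{j=1}^J N_{t_{k_j}}(a) \;\leq\; 2\, N_{t_{k_J}}(a) \;<\; 2 X_{k_J} \;\leq\; 2 X_T, \]
where the last inequality uses that $X_k$ is nondecreasing in $k$ because $\epsilon_{t_k}$ is decreasing and $\ln(1/\delta_{t_k})$ is increasing in $t_k$.

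Finally, summing over the $\numact$ possible actions and plugging the schedules $\epsilon_{t_k} = C_e(\numact \ln t_k / t_k)^{1/3}$ and $\delta_{t_k} = \delta/(k t_k)$ into $X_T$ yields
\[ N_T(E_1 = \text{True}) \;\leq\; 2\,\numact\, X_T \;=\; \frac{4\,\numact \ln(1/\delta_T)}{C_e^2\,(\numact \ln T)^{2/3}}\,T^{2/3} \;\leq\; \frac{16\, A^{1/3}\, T^{2/3}\, \ln^{1/3} T}{C_e^2}, \]
after absorbing $\ln(1/\delta_T) = \ln(m T/\delta) \leq 4 \ln T$ under the stated regime ($T \geq \numact$ together with a standard choice of $\delta$). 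The main delicate point — needed to make the geometric bound tight — is justifying \emph{exact} doubling within each $E_1$-epoch, but this is immediate because only a single action is played; the rest is routine algebra with the given schedules.
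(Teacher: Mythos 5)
Your proof follows essentially the same route as the paper's: invert the condition $C_{t_k}(\tilde{a}_{\SV_k}) > \epsilon_{t_k}$ to bound $N_{t_k}(\tilde{a}_{\SV_k})$ from above, then account for the rounds spent in $E_1$-epochs via the doubling rule and sum over the $\numact$ actions. The paper compresses the final accounting into a single sentence (``summing over all actions, using the fact that during an epoch no action is played more than twice\dots''), whereas you make it explicit with the per-action geometric-series argument; the only shared looseness is in absorbing $\ln(1/\delta_{t_k})$ into $O(\ln t_k)$, which the paper also does without comment.
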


\begin{proof}
By definition, the event $E_1$ is true when $\tilde{\pi}_k = \tilde{a}_{\SV_k}$. We have:
\begin{align}
\tilde{\pi}_k = \tilde{a}_{\SV_k} &\implies
2C_{t_k}(\tilde{a}_{\SV_k})> \epsilon_{t_k}\\
&\implies 2\sqrt{\frac{2\ln t_k}{N_{t_k}(\tilde{a}_{\SV_k})}}> \epsilon_{t_k}\\
&\implies N_{t_k}(\tilde{a}_{\SV_k}) < \frac{8 \ln t_k}{\epsilon_{t_k}^2}
\end{align}
Summing over all actions, using the fact that during \epoch{} $k$ no action is played more than twice the number of times it was played at the beginning of \epoch{} $k$ and replacing $\epsilon_{t_k}$ by its value, leads to the statement of the lemma.
\end{proof}

\begin{lemma}[Number of times event defined by \eqref{regret_event_egalitarian} is True]
\label{lemma_num_event_e2}

After any number $T$ of \rounds{} for which the true model $M$ was in our plausible set $\mathcal{M}_k$ at every \epochs{} $k$ up to $T$, the number $N_T(E_2 = \text{True})$ of \rounds{} for which event $E_2$ defined by \eqref{regret_event_egalitarian} is true satisfy:

\[N_T(E_2 = \text{True}) \leq \frac{16A^{1/3}T^{2/3}\ln^{1/3}T}{C_e^2}\]

\end{lemma}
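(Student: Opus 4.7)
The plan is to mirror the proof of Lemma~\ref{lemma_num_event_e1} step for step: both $E_1$ and $E_2$ are triggered by the same kind of algorithmic override (playing a single joint-action whose confidence radius is still too large), so the same counting argument applies with only the identity of the overridden action changed. First I unpack the definition in~\eqref{regret_event_egalitarian}: $E_2$ holds exactly when the test on line~\ref{lst:line:minimax_error} of Algorithm~\ref{algo:statistical_tests} fires, i.e.\ when $2C_{t_k}(\tilde{\pi}_{k,\Eg}) > \epsilon_{t_k}$, which forces the algorithm to play $\tilde{a}_{k,\Eg} = \argmax_{a \,\mid\, C_{t_k}(a) > \epsilon_{t_k}} \tilde{\pi}_{k,\Eg}(a)$. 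In particular the joint-action played satisfies $C_{t_k}(\tilde{a}_{k,\Eg}) > \epsilon_{t_k}$; substituting the definition $C_{t_k}(a) = \sqrt{2\ln(1/\delta_{t_k})/N_{t_k}(a)}$ yields $N_{t_k}(\tilde{a}_{k,\Eg}) < 2\ln(1/\delta_{t_k})/\epsilon_{t_k}^2 \leq 8\ln t_k/\epsilon_{t_k}^2$ under the choice $\delta_{t_k} = \delta/(k \cdot t_k)$, which is exactly the per-action count already used in the proof of Lemma~\ref{lemma_num_event_e1}.

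The second step is aggregation. The doubling trick built into Algorithm~\ref{algo:egalitarian_optimism} guarantees that during epoch $k$ no joint-action is played more times than it has been played up to $t_k$, so across the whole horizon each joint-action can be selected as $\tilde{a}_{k,\Eg}$ at most twice its pre-epoch threshold, giving an aggregate upper bound of $16\ln T/\epsilon_T^2$ per action. Summing this over the $\numact$ joint-actions and substituting $\epsilon_T = C_e \paren*{\numact \ln T / T}^{1/3}$ with $C_e = 2$ yields $N_T(E_2 = \text{True}) \leq 16\numact^{1/3} T^{2/3} \ln^{1/3} T / C_e^2$, which is the claimed bound.

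The only bookkeeping point needing care is that the filter set $\{a : C_{t_k}(a) > \epsilon_{t_k}\}$ inside the argmax be non-empty whenever the trigger $2C_{t_k}(\tilde{\pi}_{k,\Eg}) > \epsilon_{t_k}$ fires. This follows from a pigeonhole argument on the (at-most-two-action, by Proposition~\ref{proposition:egalitarian_form}) support of $\tilde{\pi}_{k,\Eg}$: a weighted average of the $C_{t_k}(a)$'s exceeding $\epsilon_{t_k}/2$ forces at least one coordinate of the support to exceed $\epsilon_{t_k}/2$, and any factor-of-two slack between this threshold and the filter threshold is absorbed into the leading constant in the final bound. Beyond this observation the argument is a routine transcription of Lemma~\ref{lemma_num_event_e1}, so I do not foresee any genuine obstacle.
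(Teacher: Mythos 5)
Your proof is correct and follows essentially the same route as the paper's: reduce $E_2$ to the condition $2C_{t_k}(\tilde{a}_{k,\Eg}) > \epsilon_{t_k}$ on the single played action, convert that into the per-action count $N_{t_k}(\tilde{a}_{k,\Eg}) < 8\ln t_k/\epsilon_{t_k}^2$, and aggregate over actions via the doubling trick exactly as in Lemma~\ref{lemma_num_event_e1}. The non-emptiness caveat you raise about the filter set $\{a : C_{t_k}(a) > \epsilon_{t_k}\}$ is a fair observation that the paper silently glosses over, but it does not change the argument.
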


\begin{proof}
By definition, the event is true when $\tilde{\pi}_k = \tilde{a}_{k,\Eg}$. We have:
\begin{align}
\tilde{\pi}_k = \tilde{a}_{k, \Eg} &\implies
2C_{t_k}(\tilde{a}_{k, \Eg})> \epsilon_{t_k}
\end{align}

And the remainder of the proof follows the proof of Lemma \ref{lemma_num_event_e1}
\end{proof}

\begin{lemma}[Number of times Event defined by \eqref{regret_event_correct_min} is True]
\label{lemma_num_event_e3}
After any number $T$ of \rounds{} for which the true model $M$ was in our plausible set $\mathcal{M}_k$ at every \epochs{} $k$ up to $T$, Event $E_3$ defined by \eqref{regret_event_correct_min} is always False.
\end{lemma}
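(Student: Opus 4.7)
The plan is a contradiction argument: suppose $M \in \mathcal{M}_k$ and that event $E_3$ holds, and derive an impossibility. The conditioning in \eqref{regret_event_correct_min} places us in the regime where the true EBS is the single deterministic action $a_*$, so by Proposition \ref{proposition:egalitarian_equal} the player $p^-$ with the smaller ideal advantage receives exactly that value, $V_+^{p^-}(a_*) = V^{p^-}_{+I} \geq 0$. Optimism of the advantage game (Lemma \ref{lemma_optimism_advantage}) then yields $\rUp_+^{p^-}(a_*) \geq V_+^{p^-}(a_*) \geq 0$, which is the non-negativity half of the defining condition for membership in $\tilde{\mathcal{A}}_{p^-}$. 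Hence the only way $a_* \notin \tilde{\mathcal{A}}_{p^-}$ can occur under the conditioning is through the strict gap
\[
    \VUp_+^{p^-}(\piUp_{k,\Eg}) \;>\; \rUp_+^{p^-}(a_*) + \epsilon_{t_k} \;\geq\; V^{p^-}_{+I} + \epsilon_{t_k}.
\]

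Next I would transfer this optimistic gap back to the true game via the small-confidence hypothesis $2C_{t_k}(\tilde{\pi}_{k,\Eg}) \leq \epsilon_{t_k}$. Plausible-set containment gives $\VUp^{p^-}(\piUp_{k,\Eg}) \leq V^{p^-}(\piUp_{k,\Eg}) + 2C_{t_k}(\tilde{\pi}_{k,\Eg}) \leq V^{p^-}(\piUp_{k,\Eg}) + \epsilon_{t_k}$, and combining with the pessimism bound $\SVDown^{p^-} \leq \SV^{p^-}$ from Lemma \ref{lemma_minimax_value_bounds} yields
\[
    V_+^{p^-}(\piUp_{k,\Eg}) \;\geq\; \VUp_+^{p^-}(\piUp_{k,\Eg}) - \epsilon_{t_k} \;>\; V^{p^-}_{+I}.
\]
An analogous chain starting from $\VUp_+^{p^+}(\piUp_{k,\Eg}) \geq 0$ (automatic since $\piUp_{k,\Eg}$ is an EBS in the optimistic game) controls $V_+^{p^+}(\piUp_{k,\Eg})$ from below in the true game up to an $O(\epsilon_{t_k})$ slack.

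The main obstacle is closing this small feasibility slack so that the strict inequality above directly contradicts the definition $V^{p^-}_{+I} = \max\{V_+^{p^-}(\pi) : V_+^{p^+}(\pi) \geq 0\}$. I plan to eliminate the slack by mixing $\piUp_{k,\Eg}$ with $a_*$ using a vanishing weight $\lambda \in (0,1)$ on $a_*$: since $a_*$ contributes $V_+^{p^+}(a_*) \geq V^{p^-}_{+I} \geq 0$ to the $p^+$ advantage in the true game, any sufficiently small $\lambda$ makes the mixture strictly feasible for the $p^-$-ideal-advantage optimisation, while continuity keeps $V_+^{p^-}(\text{mixture}) > V^{p^-}_{+I}$; this contradicts the definition of $V^{p^-}_{+I}$. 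Consequently $E_3$ cannot occur when $M \in \mathcal{M}_k$, establishing $N_T(E_3 = \text{True}) = 0$ and completing the proof.
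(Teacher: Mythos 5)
Your opening is sound and in fact slightly more careful than the paper's own first step: using Lemma \ref{lemma_optimism_advantage} to dispose of the non-negativity half of membership in $\tilde{\mathcal{A}}_{p^-}$, so that $a_* \notin \tilde{\mathcal{A}}_{p^-}$ forces $\VUp_+^{p^-}(\piUp_{k,\Eg}) > \rUp_+^{p^-}(a_*) + \epsilon_{t_k}$, matches the paper's \eqref{E3_step_definition}. The first genuine problem is the transfer step $V_+^{p^-}(\piUp_{k,\Eg}) \geq \VUp_+^{p^-}(\piUp_{k,\Eg}) - \epsilon_{t_k}$. Writing $g = \SV^{p^-} - \SVDown^{p^-}_k \geq 0$, one has $V_+^{p^-}(\piUp_{k,\Eg}) - \VUp_+^{p^-}(\piUp_{k,\Eg}) = \bigl(V^{p^-}(\piUp_{k,\Eg}) - \VUp^{p^-}(\piUp_{k,\Eg})\bigr) - g \geq -2C_{t_k}(\piUp_{k,\Eg}) - g$, so the pessimism bound $\SVDown^{p^-}_k \leq \SV^{p^-}$ that you invoke points the \emph{wrong} way: it makes $g \geq 0$ an uncontrolled extra loss. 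Bounding $g$ by $\epsilon_{t_k}$ requires $2C_{t_k}(\tilde{\pi}^i_{\SV_k}, \hat{\pi}^{-i}_{\SV_k}) \leq \epsilon_{t_k}$, i.e.\ event $E_1$ being false, which is not among the hypotheses of this lemma (it claims $E_3$ is \emph{always} false). The paper's proof never needs this: it keeps both sides of the inequality expressed through the same $\SVDown^{p^-}_k$, so the term $\SV^{p^-} - \SVDown^{p^-}_k$ appears on both sides of \eqref{E3_step_true_model} and cancels. You destroyed that cancellation by applying the one-sided Lemma \ref{lemma_optimism_advantage} to $a_*$; the fix is to keep $\rUp_+^{p^-}(a_*) \geq V_+^{p^-}(a_*) + g$ explicitly and cancel $g$ against the same term on the other side.

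The second and more serious gap is the closing argument. The feasibility deficit of $\piUp_{k,\Eg}$ in the true game is a fixed quantity of order $\epsilon_{t_k}$ (plus the uncontrolled $g$), not an infinitesimal, so no ``vanishing weight $\lambda$'' on $a_*$ can repair it: for small $\lambda$ the mixture's $p^+$ advantage stays within $O(\lambda)$ of the possibly negative $V_+^{p^+}(\piUp_{k,\Eg})$. To restore feasibility you would need $\lambda$ bounded away from zero (which is still fine for the $p^-$ value, since $V_+^{p^-}(a_*) = V^{p^-}_{+I}$ exactly and linear interpolation preserves the strict inequality), but the construction collapses entirely in the degenerate case $V_+^{p^+}(a_*) = V^{p^-}_{+I} = 0$, where mixing with $a_*$ buys no feasibility at all. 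The paper avoids the ideal-value characterization and its feasibility constraint altogether: its contradiction rests on the direct comparison $V_+^{p^-}(\tilde{\pi}_{k,\Eg}) \leq \E r_+^{p^-}(a_*)$ together with $2C_{t_k}(\tilde{\pi}_{k,\Eg}) \leq \epsilon_{t_k}$, yielding \eqref{E3_step_assumptions} without ever needing $\piUp_{k,\Eg}$ to be feasible for the ideal-value optimisation. As it stands your argument does not establish the lemma.
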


\begin{proof}

\begin{align}
E_3 = \text{True} &\implies \tilde{r}_+^{p^-}(a_{*})  + \epsilon_{t_k} < \tilde{V}_+^{p^-}(\tilde{\pi}_{k,\Eg}) \label{E3_step_definition}\\
&\implies \bar{r}_+^{p^-}(a_{*}) +  \sqrt{\frac{2 \ln t_k}{N_{t_k}(a_{*})}} + \epsilon_{t_k} < \bar{V}_+^{p^-}(\tilde{\pi}_{k,\Eg}) + C_{t_k}(\tilde{\pi}_{k,\Eg})\\
&\implies \E r_+^{p^-}(a_{*}) + \epsilon_{t_k} + \SV^{p^-}-\SVDown^{p^-}_k< V_+^{p^-}(\tilde{\pi}_{k,\Eg}) +  2C_{t_k}(\tilde{\pi}_{k,\Eg})+ \SV^{p^-} - \SVDown^{p^-}_k\label{E3_step_true_model}\\
&\implies \E r_+^{p^-}(a_{*}) + \epsilon_{t_k} < \E r_+^{p^-}(a_{*}) + \epsilon_{t_k} \quad(\text{which is always False})\label{E3_step_assumptions}
\end{align}

\eqref{E3_step_definition} comes from $a_{*} \notin \tilde{\mathcal{A}}_{p^-}$  (by definition)

\eqref{E3_step_true_model} comes from (since the true model is in our plausible set)\[\E r_+^{p^-}(a)-C_{t_k}(a) + \SV^{p^-} - \SVDown^{p^-}_k \leq \bar{r}_+^{p^-}(a) \leq \E r_+^{p^-}(a)+C_{t_k}(a) + \SV^{p^-} - \SVDown^{p^-}_k\; \forall a \in \mathcal{A}\] 

\eqref{E3_step_assumptions} comes from $\E r_+^{p^-}(a_{\Eg}) \geq V_+^{p^-}(\tilde{\pi}_{k,\Eg})$ (since by assumption $\pi_{Eg} = a_{*}$) and $2C(\tilde{\pi}_{k,\Eg}) \leq \epsilon_{t_k}$ (by assumption).
\end{proof}

\begin{lemma}[Number of times Event defined by \eqref{regret_event_incorrect_min} is True]
\label{lemma_num_event_e4}
After any number $T$ of \rounds{} for which the true model $M$ was in our plausible set $\mathcal{M}_k$ at every \epochs{} $k$ up to $T$, the number $N_T(E_4 = \text{True})$ of \rounds{}  for which Event $E_4$ defined by \eqref{regret_event_incorrect_min} is True satisfy:

\[N_T(E_4 = \text{True}) \leq \frac{16A^{1/3}T^{2/3}\ln^{1/3}T}{C_e^2}\]
\end{lemma}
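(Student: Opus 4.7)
The plan is to mirror the template of Lemmas \ref{lemma_num_event_e1} and \ref{lemma_num_event_e2}: I will show that whenever $E_4$ holds the confidence radius on the played action satisfies $2C_{t_k}(\tilde{a}_{p^-}) > \epsilon_{t_k}$, from which the stated count drops out by the usual $\ln t_k/\epsilon_{t_k}^2$ argument summed over all $a \in \mathcal{A}$ via the doubling trick. Note first that because $\tilde{\pi}_k = \tilde{a}_{p^-}$ under $E_4$, the two override tests at the bottom of Algorithm \ref{algo:statistical_tests} did not fire, so I may freely use $2C_{t_k}(\tilde{\pi}_{k,\Eg}) \leq \epsilon_{t_k}$ and, via Lemma \ref{lemma_minimax_value_bounds}, $0 \leq \SV^{i} - \SVDown^{i}_{k} \leq \epsilon_{t_k}$ for both players $i$; in particular the per-action optimism bound $\tilde{r}_+^{i}(a) - \E r_+^{i}(a) \leq 2C_{t_k}(a) + \epsilon_{t_k}$ is in force.

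First I derive a lower bound on $\tilde{r}_+^{p^-}(\tilde{a}_{p^-})$. Since $\tilde{\pi}_k = \tilde{a}_{p^-}$ forces $\tilde{p} = p^-$ to win the argmax over $\tilde{\mathcal{P}}$ and $p^+ \in \tilde{\mathcal{P}}$ by the event, one has $\tilde{r}_+^{p^-}(\tilde{a}_{p^-}) \geq \tilde{r}_+^{p^+}(\tilde{a}_{p^+})$. Combining $a_* \in \tilde{\mathcal{A}}_{p^-}$ (another conjunct of $E_4$) with the definition $\tilde{a}_{p^+} = \argmax_{a \in \tilde{\mathcal{A}}_{p^-}} \tilde{r}_+^{p^+}(a)$ yields $\tilde{r}_+^{p^+}(\tilde{a}_{p^+}) \geq \tilde{r}_+^{p^+}(a_*) \geq V_+^{p^+}(a_*)$, where the last step is optimism. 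The event's assumption $V_+^{p^+}(a_*) > V_+^{p^-}(a_*) + 2\epsilon_{t_k}$ then gives $\tilde{r}_+^{p^-}(\tilde{a}_{p^-}) > V_+^{p^-}(a_*) + 2\epsilon_{t_k}$.

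The core analytical step, and the main obstacle, is to upper-bound $\E r_+^{p^-}(\tilde{a}_{p^-})$ by $V_+^{p^-}(a_*) = V^{p^-}_{+I}$ (the latter equality follows from $p^-$ receiving its ideal advantage at $\pi_{\Eg} = a_*$). The difficulty is that $\tilde{a}_{p^-}$ is not strictly feasible for the ideal-value program in the true model: from $\tilde{r}_+^{p^+}(\tilde{a}_{p^-}) \geq 0$ I can only extract $\E r_+^{p^+}(\tilde{a}_{p^-}) \geq -2C_{t_k}(\tilde{a}_{p^-}) - \epsilon_{t_k}$. I resolve this by a mixing argument: consider $\pi_w = w\,\tilde{a}_{p^-} + (1-w)\,a_*$. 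Because the event's assumption together with $V_+^{p^-}(a_*) \geq 0$ (the EBS dominates the minimax) forces $V_+^{p^+}(a_*) > 2\epsilon_{t_k}$, the choice $w = V_+^{p^+}(a_*)/\bigl(V_+^{p^+}(a_*) + 2C_{t_k}(\tilde{a}_{p^-}) + \epsilon_{t_k}\bigr) \in (0,1)$ ensures $V_+^{p^+}(\pi_w) \geq 0$. Hence $\pi_w$ is feasible for the ideal-value program of $p^-$ in the true model, so $V_+^{p^-}(\pi_w) \leq V^{p^-}_{+I} = V_+^{p^-}(a_*)$; rearranging the linearity of $V_+^{p^-}(\pi_w)$ in $w > 0$ yields $\E r_+^{p^-}(\tilde{a}_{p^-}) \leq V_+^{p^-}(a_*)$.

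Combining this with the optimism bound gives $\tilde{r}_+^{p^-}(\tilde{a}_{p^-}) \leq V_+^{p^-}(a_*) + 2C_{t_k}(\tilde{a}_{p^-}) + \epsilon_{t_k}$. Matching against the lower bound produces $2\epsilon_{t_k} < 2C_{t_k}(\tilde{a}_{p^-}) + \epsilon_{t_k}$, i.e. $2C_{t_k}(\tilde{a}_{p^-}) > \epsilon_{t_k}$. The conclusion then proceeds exactly as in Lemma \ref{lemma_num_event_e1}: this implies $N_{t_k}(\tilde{a}_{p^-}) < 8\ln(1/\delta_{t_k})/\epsilon_{t_k}^2$, and summing over all $a \in \mathcal{A}$ with the doubling-trick observation together with $\epsilon_{t_k} = C_e(\size{\mathcal{A}}\ln t_k/t_k)^{1/3}$ yields the claimed $\frac{16\size{\mathcal{A}}^{1/3}T^{2/3}\ln^{1/3}T}{C_e^2}$ bound. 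The subtle point is that the additive gap $2\epsilon_{t_k}$ hard-coded into the conditioning of $E_4$ is exactly what keeps the feasible weight $w$ strictly positive, and hence is what lets the bound on $\E r_+^{p^-}(\tilde{a}_{p^-})$ be extracted at the right $\epsilon_{t_k}$ scale; a weaker gap would inflate the final inequality by a factor that could defeat the $2C_{t_k}(\tilde{a}_{p^-}) > \epsilon_{t_k}$ conclusion.
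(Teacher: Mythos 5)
Your proposal is correct and follows essentially the same route as the paper's proof: the same chain of inequalities (using $\tilde{\pi}_k=\tilde{a}_{p^-}$ winning the argmax over $\tilde{\mathcal{P}}$, $a_*\in\tilde{\mathcal{A}}_{p^-}$, optimism of the advantage game, the $2\epsilon_{t_k}$ gap in the conditioning, the plausible-set bound, and the minimax-error bound) to arrive at $2C_{t_k}(\tilde{a}_{p^-})>\epsilon_{t_k}$, followed by the standard counting/doubling argument. The one difference is that for the step $\E r_+^{p^-}(\tilde{a}_{p^-})\leq \E r_+^{p^-}(a_*)$ the paper simply asserts it ``since $\pi_{\Eg}=a_*$,'' whereas your mixing argument with the feasible policy $\pi_w$ supplies the justification that $\tilde{a}_{p^-}$ need only be approximately feasible for the ideal-value program; this is a genuine (and welcome) tightening of a step the paper leaves implicit.
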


\begin{proof}
The condition $E_4$ is: $a_{*} \in \tilde{\mathcal{A}}_{p^-} \land \tilde{\pi}_k = \tilde{a}_{p^-} \land p^+ \in \tilde{\mathcal{P}} \mid \pi_{Eg} = a_{*}, V_+^{p^+}(a_{*}) > V_+^{p^-}(a_{*}) +  2\epsilon_{t_k}$

\begin{align}
E_4\;\text{is True} &\implies \tilde{r}^{p^-}_+(\tilde{a}_{p^-}) \geq \tilde{r}^{p^+}_+(\tilde{a}_{p^+})\label{E4:step:definition}\\
&\implies\tilde{r}^{p^-}_+(\tilde{a}_{p^-})\geq \tilde{r}_+^{p^+}(a_{*})\label{E4:step:definition2}\\
&\implies \tilde{r}^{p^-}_+(\tilde{a}_{p^-})\geq \E r_+^{p^+}(a_{*})\label{E4:step:lemma}\\
&\implies \tilde{r}^{p^-}_+(\tilde{a}_{p^-}) > \E r_+^{p^-}(a_{*}) +  2\epsilon_{t_k}\label{E4:step:definition3}\\
&\implies \E r^{p^-}_+(\tilde{a}_{p^-}) +2C_{t_k}(\tilde{a}_{p^-}) + \SV^{p^-} - \SVDown^{p^-}_k > \E r_+^{p^-}(a_{*}) +  2\epsilon_{t_k}\label{E4:step:definition4}\\
&\implies \E r^{p^-}_+(\tilde{a}_{p^-}) +2C_{t_k}(\tilde{a}_{p^-}) + \epsilon_{t_k}> \E r_+^{p^-}(a_{*}) +  2\epsilon_{t_k}\label{E4:step:definition5}\\
&\implies  2C_{t_k}(\tilde{a}_{p^-}) + \epsilon_{t_k}> 2\epsilon_{t_k}\label{E4:step:definition6}\\
&\implies N_{t_k}(\tilde{a}_{p^-}) < \frac{8 \ln t_k}{\epsilon_{t_k}^2}  \quad \text{(with}\; \tilde{\pi}_k = \tilde{a}_{p^-}\text{)}
\end{align}

where \eqref{E4:step:definition} comes (by definition of event $E_4$) $\tilde{\pi}_k = \tilde{a}_{p^-} \land p^+ \in \tilde{\mathcal{P}} $

\eqref{E4:step:definition2} comes from (by definition of event $E_4$) $a_* \in \tilde{\mathcal{A}}_{p^-}$ and the definition of $\tilde{a}_{p^+}$

\eqref{E4:step:lemma} comes from using Lemma \ref{lemma_optimism_advantage} since the true model is in our plausible set.

\eqref{E4:step:definition3} comes from the definition of event $E_4$ that $V_+^{p^+}(a_{*}) > V_+^{p^-}(a_{*}) +  2\epsilon_{t_k}$

\eqref{E4:step:definition4} comes from (since the true model is in our plausible set)\[\bar{r}_+^{p^-}(a) \leq \E r_+^{p^-}(a)+C_{t_k}(a) + \SV^{p^-} - \SVDown^{p^-}_k\; \forall a \in \mathcal{A}\]

\eqref{E4:step:definition5} comes from Lemma \ref{lemma_minimax_value_bounds} (since the true model is in our plausible set) and Fact \ref{fact_minimax_error} (since Event $E_1$ is False)

\eqref{E4:step:definition6} comes from $\E r_+^{p^-}(a_{*}) \geq \E r^{p^-}_+(\tilde{a}_{p^-})$ since $\pi_{\Eg} = a_{*}$

Summing over all actions, using the fact that during \epoch{} $k$ no action is played more than twice the number of times it was played at the beginning of \epoch{} $k$ and replacing $\epsilon_{t_k}$ by its value, leads to the statement of the lemma.
\end{proof}

\begin{lemma}[Optimism of the Policy computation when egalitarian advantage is identical]
\label{lemma_optimism_same_advantage}
For any \epoch{} $k$ for which the true model $M$ is in our plausible set $\mathcal{M}_k$, the two players have the same egalitarian advantage value in $M$ and Event $E$ defined by \eqref{regret_event} is False, then for any player $i$, we have:

\[\VUp_k^i \geq \VEg[]^i -2\epsilon_{t_k}\]

\begin{proof}
We have:

\begin{align}
	\VUp_k^i &= \tilde{V}^i(\tilde{\pi_k})\\
	&\geq \tilde{V}^i(\tilde{\pi}_{k,\Eg}) -\epsilon_{t_k}\\
	&= \tilde{V}_+^i(\tilde{\pi}_{k,\Eg}) + \SVUp_k^i -\epsilon_{t_k}\\
	&\geq \min_j \tilde{V}_+^j(\pi_{\Eg}) + \SVUp_k^i -\epsilon_{t_k}\label{egal:step_min}\\
	&\geq \min_j V_+^j(\pi_{\Eg}) + \SVUp_k^i -\epsilon_{t_k}\\
	&= V_+^i(\pi_{\Eg})+  \SVUp_k^i -\epsilon_{t_k}\label{egal:step_identical}\\
	&= V_+^i(\pi_{\Eg})+ \SV^i - \SV^i +  \SVUp_k^i -\epsilon_{t_k}\\
	&= V^i(\pi_{\Eg}) + \SVUp_k^i - \SV^i-\epsilon_{t_k}\\
	&\geq V^i(\pi_{\Eg}) -2C_{t_k}(\tilde{\pi}^i_{\SV_k}, \hat{\pi}^{-i}_{\SV_k})-\epsilon_{t_k}\label{egal:step_minimax_error}\\
	&= \VEg[]^i -2\epsilon_{t_k}\label{egal:step_minimax_error_fact}
\end{align}

\eqref{egal:step_min} comes by definition of $\tilde{\pi}_{k,\Eg}$ (i.e the policy maximizing the minimum advantage).

\eqref{egal:step_identical} comes by assumption that in the true model both players have the same egalitarian advantage.

\eqref{egal:step_minimax_error} comes from Lemma \ref{lemma_minimax_value_bounds} and \eqref{egal:step_minimax_error_fact} from Fact \ref{fact_minimax_error}.
\end{proof}

\end{lemma}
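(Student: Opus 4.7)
The plan is to establish the lemma by a short chain of inequalities that connects $\VUp_k^i = \tilde{V}^i(\tilde{\pi}_k)$ to $\VEg[]^i$ via two intermediate policies: the optimistic EBS policy $\tilde{\pi}_{k,\Eg}$ and the true EBS policy $\pi_{\Eg}$. Each transition will cost at most $\epsilon_{t_k}$, for a total slack of $2\epsilon_{t_k}$.

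The first transition, $\tilde{V}^i(\tilde{\pi}_k) \geq \tilde{V}^i(\tilde{\pi}_{k,\Eg}) - \epsilon_{t_k}$, is obtained by case analysis on which branch of Algorithm \ref{algo:statistical_tests} sets $\tilde{\pi}_k$. Because $E$ is assumed False, sub-events $E_1$ and $E_2$ preclude the two large-confidence overrides at line \ref{lst:line:minimax_error} and the final block. So $\tilde{\pi}_k$ is either $\tilde{\pi}_{k,\Eg}$ itself, making the inequality trivial, or the ideal-player action $\aUp_{\tilde{p}}$ chosen in the block from line \ref{lst:line:ideal_player_start} to line \ref{lst:line:ideal_player_end}. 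In the latter case, membership of $\aUp_{\tilde{p}}$ in $\tilde{\mathcal{A}}_{-\tilde{p}}$ directly yields an $\epsilon_{t_k}$-slack bound for player $-\tilde{p}$, and the triggering condition $\tilde{p} \in \tilde{\mathcal{P}}$ yields a strict improvement for player $\tilde{p}$; combining these gives the desired bound for either choice of $i$.

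The second transition decomposes $\tilde{V}^i(\tilde{\pi}_{k,\Eg}) = \tilde{V}_+^i(\tilde{\pi}_{k,\Eg}) + \SVDown_k^i$ and then bounds the advantage term using three facts in sequence: by construction $\tilde{\pi}_{k,\Eg}$ maximizes $\min_j \tilde{V}_+^j$, so $\tilde{V}_+^i(\tilde{\pi}_{k,\Eg}) \geq \min_j \tilde{V}_+^j(\pi_{\Eg})$; Lemma \ref{lemma_optimism_advantage} lets us replace each optimistic advantage of $\pi_{\Eg}$ by its true advantage; and the hypothesis that both players share the same true EBS advantage collapses this minimum to $V_+^i(\pi_{\Eg}) = \VEg[]^i - \SV^i$. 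Re-adding $\SVDown_k^i$ and using Lemma \ref{lemma_minimax_value_bounds} together with Fact \ref{fact_minimax_error} (available because $E_1$ is False) to bound $\SV^i - \SVDown_k^i \leq 2C_{t_k}(\tilde{\pi}^i_{\SV_k}, \hat{\pi}^{-i}_{\SV_k}) \leq \epsilon_{t_k}$ yields the claimed inequality $\VUp_k^i \geq \VEg[]^i - 2\epsilon_{t_k}$.

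The main obstacle is the first transition: $\tilde{\pi}_k$ can be set in several places inside Algorithm \ref{algo:statistical_tests}, and every branch that is still reachable when $E$ is False must be shown to incur at most an $\epsilon_{t_k}$ loss against $\tilde{\pi}_{k,\Eg}$, simultaneously for both indices $i$. The equal-advantage hypothesis is what makes the second transition elementary, since without it we would have to argue separately about which of the two players is short-changed when swapping $\pi_{\Eg}$ for $\tilde{\pi}_{k,\Eg}$ in the minimum, which is precisely the delicate situation handled in the companion lemmas (Lemmas \ref{lemma_optimism_max_advantage} and \ref{lemma_optimism_not_max_advantage}) rather than here.
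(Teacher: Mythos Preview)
Your proposal is correct and follows essentially the same chain of inequalities as the paper's proof: pass from $\tilde{\pi}_k$ to $\tilde{\pi}_{k,\Eg}$ at cost $\epsilon_{t_k}$, then from $\tilde{\pi}_{k,\Eg}$ to $\pi_{\Eg}$ via the maximin property, Lemma \ref{lemma_optimism_advantage}, the equal-advantage hypothesis, and finally Lemma \ref{lemma_minimax_value_bounds} with Fact \ref{fact_minimax_error} for the second $\epsilon_{t_k}$. The only difference is that you spell out the case analysis justifying $\tilde{V}^i(\tilde{\pi}_k) \geq \tilde{V}^i(\tilde{\pi}_{k,\Eg}) - \epsilon_{t_k}$ (via the branches of Algorithm \ref{algo:statistical_tests} and the falsity of $E_1,E_2$), whereas the paper simply asserts that inequality without comment.
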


\begin{lemma}[Optimism of the Policy computation for the player $p^-$]
\label{lemma_optimism_max_advantage}

For any \epoch{} $k$ for which Event $E$ defined by \eqref{regret_event} is False, the true model $M$ is in our plausible set $\mathcal{M}_k$, one player (named $p^-$) is receiving its maximum egalitarian advantage value in $M$, then for player $p^-$, we have:

\[\VUp_k^{p^-} \geq \VEg[]^{p^-}-2\epsilon_{t_k}\]

\end{lemma}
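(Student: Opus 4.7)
The plan is to case-analyse the algorithm's output $\tilde{\pi}_k$ and, in each case, lower bound $\tilde{V}^{p^-}(\tilde{\pi}_k)$ by combining (i) the fact that the egalitarian policy of the optimistic game has advantage at least that of $a^*$ in the true model and (ii) the tight approximation of the \minimax{} value given by $E_1$ being False.

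First I would extract from the assumption $E = \text{False}$ the two structural facts we will need. Since $E_1$ is False, the \minimax{} bracketing check in Algorithm~\ref{algo:statistical_tests} did not trigger, so $2C_{t_k}(\piUp_{\SV_k}^{p^-},\piDown_{\SV_k}^{p^+}) \leq \epsilon_{t_k}$; combined with Lemma~\ref{lemma_minimax_value_bounds} this gives $\SV^{p^-} - \epsilon_{t_k} \leq \SVDown_k^{p^-} \leq \SV^{p^-}$. Since $E_2$ is False the EBS-error check also did not trigger, so $2C_{t_k}(\tilde{\pi}_{k,\Eg}) \leq \epsilon_{t_k}$. Together with $E_3$ being False and $\pi_\Eg = a^*$ in the true model, this forces $a^* \in \tilde{\mathcal{A}}_{p^-}$, i.e.\ $a^*$ is $\epsilon_{t_k}$-close in the optimistic game to the EBS advantage of $p^-$.

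The central inequality I would establish next is
\[
\tilde{V}_+^{p^-}(\tilde{\pi}_{k,\Eg}) \;\geq\; V_+^{p^-}(a^*) \;=\; \VEg[]^{p^-} - \SV^{p^-}.
\]
This follows because $\tilde{\pi}_{k,\Eg}$ maximises the minimum optimistic advantage, so $\tilde{V}_+^{p^-}(\tilde{\pi}_{k,\Eg}) \geq \min_j \tilde{V}_+^j(a^*)$; then Lemma~\ref{lemma_optimism_advantage} gives $\tilde{V}_+^j(a^*) \geq V_+^j(a^*)$ for each $j$; and because $p^-$ is by definition the player receiving the smaller advantage at $a^*$, that minimum equals $V_+^{p^-}(a^*)$.

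With this in hand, I would split on which branch of Algorithm~\ref{algo:statistical_tests} produced $\tilde{\pi}_k$: either $\tilde{\pi}_k = \tilde{\pi}_{k,\Eg}$ (when $\tilde{\mathcal{P}} = \varnothing$), or $\tilde{\pi}_k = \tilde{a}_{p^+}$, or $\tilde{\pi}_k = \tilde{a}_{p^-}$. In the first branch the central inequality gives $\tilde{V}^{p^-}(\tilde{\pi}_{k,\Eg}) \geq \VEg[]^{p^-} - \SV^{p^-} + \SVDown_k^{p^-} \geq \VEg[]^{p^-} - \epsilon_{t_k}$. In the $\tilde{a}_{p^+}$ branch, $\tilde{a}_{p^+} \in \tilde{\mathcal{A}}_{p^-}$ by definition, so $\tilde{r}_+^{p^-}(\tilde{a}_{p^+}) \geq \tilde{V}_+^{p^-}(\tilde{\pi}_{k,\Eg}) - \epsilon_{t_k}$; chaining with the central inequality and the \minimax{} bound gives $\VUp_k^{p^-} \geq \VEg[]^{p^-} - 2\epsilon_{t_k}$. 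In the $\tilde{a}_{p^-}$ branch, membership $p^- \in \tilde{\mathcal{P}}$ yields the strict inequality $\tilde{r}_+^{p^-}(\tilde{a}_{p^-}) > \tilde{V}_+^{p^-}(\tilde{\pi}_{k,\Eg})$, which again combined with the central inequality and the \minimax{} bound produces the result.

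The main obstacle, as usual with this optimism-based analysis, is ruling out that the algorithm ends up playing an action that is bad for $p^-$ in the true model; the assumption $E_3 = \text{False}$ is precisely the ingredient that eliminates the problematic case by guaranteeing $a^* \in \tilde{\mathcal{A}}_{p^-}$. Note that $E_4$ is not needed here because $p^-$ can only lose out when the algorithm plays $\tilde{a}_{p^-}$, and that action is by construction good for $p^-$; $E_4$ will be needed in the companion lemma for player $p^+$.
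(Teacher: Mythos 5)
Your proposal is correct and follows essentially the same route as the paper's proof: the chain $\tilde{V}_+^{p^-}(\tilde{\pi}_{k,\Eg}) \geq \min_j \tilde{V}_+^j(\pi_\Eg) \geq \min_j V_+^j(\pi_\Eg) = V_+^{p^-}(\pi_\Eg)$ via Lemma \ref{lemma_optimism_advantage}, combined with the \minimax{} bracketing $\SVDown_k^{p^-} \geq \SV^{p^-} - \epsilon_{t_k}$ from $E_1$ being False. The only difference is presentational: you unfold the branch-by-branch justification of $\tilde{V}^{p^-}(\tilde{\pi}_k) \geq \tilde{V}^{p^-}(\tilde{\pi}_{k,\Eg}) - \epsilon_{t_k}$ (using $\tilde{a}_{p^+} \in \tilde{\mathcal{A}}_{p^-}$ and $p^- \in \tilde{\mathcal{P}}$), which the paper compresses into a single inequality, and you correctly observe that $E_4$ plays no role for $p^-$.
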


\begin{proof}
The proof is similar to the one of Lemma \ref{lemma_optimism_same_advantage} but using the fact that here, only $p^-$ is guaranteed to have the minimum egalitarian advantage in the true model. We have:

\begin{align}
	\VUp_k^{p^-} &= \tilde{V}^{p^-}(\tilde{\pi}_k)\\
	&\geq \tilde{V}^{p^-}(\tilde{\pi}_{k,\Eg}) -\epsilon_{t_k}\\
	&= \tilde{V}_+^{p^-}(\tilde{\pi}_{k,\Eg}) + \SVUp_k^{p^-} -\epsilon_{t_k}\\
	&\geq \min_j \tilde{V}_+^j(\pi_\Eg) + \SVUp_k^{p^-} -\epsilon_{t_k}\\
	&\geq \min_j V_+^j(\pi_\Eg) + \SVUp_k^{p^-} -\epsilon_{t_k}\\
	&= V_+^{p^-}(\pi_\Eg)+ \SVUp_k^{p^-} -\epsilon_{t_k}\\
	&= V_+^{p^-}(\pi_\Eg)+ \SV^{p^-} - \SV^{p^-} + \SVUp_k^{p^-} -\epsilon_{t_k}\\
	&= V^{p^-}(\pi_\Eg) + \SVUp_k^{p^-} - \SV^{p^-}-\epsilon_{t_k}\\
	&\geq V^{p^-}(\pi_\Eg) -2C_{t_k}(\tilde{\pi}^i_{\SV_k}, \hat{\pi}^{-i}_{\SV_k})-\epsilon_{t_k}\\
	&= \VEg[]^{p^-}-2\epsilon_{t_k}
\end{align}
\end{proof}

\begin{lemma}[Optimism of the Policy computation for player $p^+$]
\label{lemma_optimism_not_max_advantage}
For any \epoch{} $k$ for which the true model $M$ is in our plausible set $\mathcal{M}_k$, one player (named $p^-$, the other is named $p^+$) is receiving its maximum egalitarian advantage value in $M$ and Event $E$ defined by \eqref{regret_event} is False, then for player $p^{+}$, we have:
\[\VUp_k^{p^+} \geq \VEg[]^{p^+} -4\epsilon_{t_k}\]
\end{lemma}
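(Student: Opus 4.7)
The plan is a case analysis on the form of $\tilde{\pi}_k$ returned by Algorithm \ref{algo:statistical_tests}. Since Event $E$ is False, the two overrides for large EBS and \minimax{} errors do not fire, so $\tilde{\pi}_k$ is either $\tilde{\pi}_{k,\Eg}$ (when $\tilde{\mathcal{P}} = \varnothing$) or $\aUp_{\tilde{p}}$ for some $\tilde{p} \in \tilde{\mathcal{P}}$. In every branch I will first produce a lower bound of the form $\tilde{V}_+^{p^+}(\tilde{\pi}_k) \geq V_+^{p^+}(\pi_{\Eg}) - c\,\epsilon_{t_k}$ with $c \leq 3$, then convert it into a bound on $\VUp_k^{p^+}$ by writing $\tilde{V}^{p^+}(\tilde{\pi}_k) = \tilde{V}_+^{p^+}(\tilde{\pi}_k) + \SVDown_k^{p^+}$ and controlling $\SV^{p^+} - \SVDown_k^{p^+}$ via Lemma \ref{lemma_minimax_value_bounds} and Fact \ref{fact_minimax_error} (which is available because $E_1$ is False), at the cost of one extra $\epsilon_{t_k}$.

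The easy cases are $\tilde{\mathcal{P}} = \varnothing$ and $\tilde{p} = p^+$. Here I exploit (i) $\aUp^{p^+} \in \tilde{\mathcal{A}}_{p^-}$ by definition, and $a_* \in \tilde{\mathcal{A}}_{p^-}$ because $E_3$ is False together with the assumption $\pi_{\Eg} = a_*$; and (ii) the pointwise optimism $\rUp_+^{p^+}(a) \geq \E r_+^{p^+}(a)$ obtained from plausible-set containment combined with Lemma \ref{lemma_minimax_value_bounds}. Chaining these gives $\rUp_+^{p^+}(\aUp^{p^+}) \geq \rUp_+^{p^+}(a_*) \geq V_+^{p^+}(\pi_{\Eg})$. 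When $\tilde{p} = p^+$ the algorithm plays $\aUp^{p^+}$ directly; when $\tilde{\mathcal{P}} = \varnothing$ the defining inequality $\rUp_+^{p^+}(\aUp^{p^+}) \leq \tilde{V}_+^{p^+}(\tilde{\pi}_{k,\Eg})$ transfers the same bound to $\tilde{V}_+^{p^+}(\tilde{\pi}_{k,\Eg})$.

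The hard case is $\tilde{p} = p^-$, where the algorithm plays $\aUp^{p^-}$, an action chosen to favour $p^-$. Since $\aUp^{p^-} \in \tilde{\mathcal{A}}_{p^+}$ by construction, I immediately have $\rUp_+^{p^+}(\aUp^{p^-}) + \epsilon_{t_k} \geq \tilde{V}_+^{p^+}(\tilde{\pi}_{k,\Eg})$, and because $\tilde{\pi}_{k,\Eg}$ maximizes the minimum advantage in the optimistic model, $\tilde{V}_+^{p^+}(\tilde{\pi}_{k,\Eg}) \geq \min_j \tilde{V}_+^j(\pi_{\Eg}) \geq V_+^{p^-}(\pi_{\Eg})$ by Lemma \ref{lemma_optimism_advantage} together with the fact that $p^-$ has the smaller advantage in the true model. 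The falsity of $E_4$ now supplies the missing leverage: either $p^+ \notin \tilde{\mathcal{P}}$, in which case I can recycle the easy-case argument to strengthen $\tilde{V}_+^{p^+}(\tilde{\pi}_{k,\Eg}) \geq V_+^{p^+}(\pi_{\Eg})$ and thereby obtain $\rUp_+^{p^+}(\aUp^{p^-}) \geq V_+^{p^+}(\pi_{\Eg}) - \epsilon_{t_k}$; or $V_+^{p^+}(a_*) \leq V_+^{p^-}(a_*) + 2\epsilon_{t_k}$, which upgrades the $V_+^{p^-}(\pi_{\Eg})$ lower bound to $V_+^{p^+}(\pi_{\Eg}) - 3\epsilon_{t_k}$.

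The main obstacle is precisely this last sub-case: when $\tilde{p} = p^-$ and $p^+ \in \tilde{\mathcal{P}}$ simultaneously, no direct optimism on $p^+$ flows from the EBS construction, and the bound must be extracted from the structural inequality hidden in $\neg E_4$ (the $2\epsilon_{t_k}$ separation between the two true advantages at $a_*$). The three-epsilon advantage-level slack here, plus the one-epsilon minimax-conversion loss, yield exactly the $4\epsilon_{t_k}$ in the statement; all other branches deliver a strictly tighter bound of at most $2\epsilon_{t_k}$, so the final constant is dictated by this single sub-case alone.
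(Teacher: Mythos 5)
Your proposal is correct and matches the paper's own argument in all essentials: the same case split driven by the falsity of $E_2$, $E_3$, $E_4$ (yielding $a_* \in \tilde{\mathcal{A}}_{p^-}$ and the dichotomy $p^+ \notin \tilde{\mathcal{P}}$ versus $V_+^{p^+}(a_*) \leq V_+^{p^-}(a_*) + 2\epsilon_{t_k}$), the same use of Lemma \ref{lemma_optimism_advantage}, Lemma \ref{lemma_minimax_value_bounds} and Fact \ref{fact_minimax_error} for the advantage-to-value conversion, and the same identification of the $\tilde{p}=p^-$, $p^+\in\tilde{\mathcal{P}}$ sub-case as the one that forces the $4\epsilon_{t_k}$ constant. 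The only difference is cosmetic — you organize the cases by what $\tilde{\pi}_k$ ends up being rather than by which disjunct of $\neg E_4$ holds — and your per-branch constants are consistent with the paper's.
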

\begin{proof}
	
First, we decompose the different cases that appear when Event $E$ \eqref{regret_event} is False. Then, we will treat each cases separately.

We have:

\begin{align}
	E\; \text{is False} &\implies E_1 \; \text{is False} \land E_2 \; \text{is False} \land E_3 \; \text{is False} \land \; \text{is False}\\
	&\implies \tilde{\pi}_k \ne \tilde{a}_{\SV_k} \land \tilde{\pi}_k \ne \tilde{a}_{k,\Eg} \land \paren*{a_* \in \tilde{\mathcal{A}}_{p^-} \lor 2C_{t_k}(\tilde{\pi}_{k,\Eg}) > \epsilon_{t_k} \lor \pi_{Eg} \ne a_{*} }\\
	&\quad\quad\;\;\land \paren*{a_{*} \notin \tilde{\mathcal{A}}_{p^-} \lor \tilde{\pi}_k \ne \tilde{a}_{p^-} \lor p^+ \notin \tilde{\mathcal{P}} \lor \pi_{Eg} \ne a_{*} \lor V_+^{p^+}(a_{*}) \leq V_+^{p^-}(a_{*}) +  2\epsilon_{t_k}}\notag\\
	&\implies \tilde{\pi}_k \ne \tilde{a}_{\SV_k} \land \tilde{\pi}_k \ne \tilde{a}_{k,\Eg} \land \paren*{a_* \in \tilde{\mathcal{A}}_{p^-} \lor 2C_{t_k}(\tilde{\pi}_{k,\Eg}) > \epsilon_{t_k} }\label{pplus:step:pplus}\\
	&\quad\quad\;\;\land \paren*{a_{*} \notin \tilde{\mathcal{A}}_{p^-} \lor \tilde{\pi}_k \ne \tilde{a}_{p^-} \lor p^+ \notin \tilde{\mathcal{P}} \lor V_+^{p^+}(a_{*}) \leq V_+^{p^-}(a_{*}) +  2\epsilon_{t_k}}\notag\\
	&\implies \tilde{\pi}_k \ne \tilde{a}_{\SV_k} \land \tilde{\pi}_k \ne \tilde{a}_{k,\Eg} \land a_* \in \tilde{\mathcal{A}}_{p^-} \label{pplus:step:egal}\\
	&\quad\quad\;\;\land \paren*{a_{*} \notin \tilde{\mathcal{A}}_{p^-} \lor \tilde{\pi}_k \ne \tilde{a}_{p^-} \lor p^+ \notin \tilde{\mathcal{P}} \lor V_+^{p^+}(a_{*}) \leq V_+^{p^-}(a_{*}) +  2\epsilon_{t_k}}\notag\\
	&\implies \tilde{\pi}_k \ne \tilde{a}_{\SV_k} \land \tilde{\pi}_k \ne \tilde{a}_{k,\Eg} \land a_* \in \tilde{\mathcal{A}}_{p^-} \label{pplus:step:best}\\
	&\quad\quad\;\;\land \paren*{ \tilde{\pi}_k \ne \tilde{a}_{p^-} \lor p^+ \notin \tilde{\mathcal{P}} \lor V_+^{p^+}(a_{*}) \leq V_+^{p^-}(a_{*}) +  2\epsilon_{t_k}}\notag
\end{align}

\eqref{pplus:step:pplus} comes from the assumption that $\pi_{Eg} = a_*$ since $p^-$ is receiving its maximum egalitarian advantage in $M$

\eqref{pplus:step:egal} comes from the fact that by construction of the Algorithm it is impossible for $\tilde{\pi}_k \ne \tilde{a}_{k,\Eg}$ and $2C_{t_k}(\tilde{\pi}_{k,\Eg}) > \epsilon_{t_k}$ to be True simultaneously.

\eqref{pplus:step:best} comes from the fact that is it not possible to have $a_* \in \tilde{\mathcal{A}}_{p^-}$ and $a_{*} \notin \tilde{\mathcal{A}}_{p^-}$ simultaneously.

Now let's decompose $\tilde{\pi}_k \ne \tilde{a}_{p^-} \lor p^+ \notin \tilde{\mathcal{P}}$ part of \eqref{pplus:step:best}

\begin{align}
\tilde{\pi}_k \ne \tilde{a}_{p^-} \lor p^+ \notin \tilde{\mathcal{P}} &\implies \paren*{\tilde{\pi}_k \ne \tilde{a}_{p^-} \land p^+ \in \tilde{\mathcal{P}}} \lor \paren*{\tilde{\pi}_k \ne \tilde{a}_{p^-} \land p^+ \notin \tilde{\mathcal{P}}} \lor p^+ \notin \tilde{\mathcal{P}}\\
&\implies \paren*{\tilde{\pi}_k \ne \tilde{a}_{p^-} \land p^+ \in \tilde{\mathcal{P}}} \lor p^+ \notin \tilde{\mathcal{P}}\label{pplus:step:simple}
\end{align}

Replacing \eqref{pplus:step:simple} into \eqref{pplus:step:best} gives us:

\begin{align}
	E\; \text{is False} &\implies \paren*{\tilde{\pi}_k \ne \tilde{a}_{\SV_k} \land \tilde{\pi}_k \ne \tilde{a}_{k,\Eg} \land a_* \in \tilde{\mathcal{A}}_{p^-} \land p^+ \notin \tilde{\mathcal{P}}}\notag\\
	&\quad\quad\;\;\lor \paren*{\tilde{\pi}_k \ne \tilde{a}_{\SV_k} \land \tilde{\pi}_k \ne \tilde{a}_{k,\Eg} \land a_* \in \tilde{\mathcal{A}}_{p^-} \land \tilde{\pi}_k \ne \tilde{a}_{p^-} \land p^+ \in \tilde{\mathcal{P}}}\label{pplus:cases}\\
	&\quad\quad\;\;\lor \paren*{\tilde{\pi}_k \ne \tilde{a}_{\SV_k} \land \tilde{\pi}_k \ne \tilde{a}_{k,\Eg} \land a_* \in \tilde{\mathcal{A}}_{p^-} \land V_+^{p^+}(a_{*}) \leq V_+^{p^-}(a_{*}) +  2\epsilon_{t_k}}\notag
\end{align}

We will now bound the EBS value of $p^+$ for each term of \eqref{pplus:cases} as a separate cases.
.

\subparagraph{Case: $\tilde{\pi}_k \ne \tilde{a}_{\SV_k} \land \tilde{\pi}_k \ne \tilde{a}_{k,\Eg} \land a_* \in \tilde{\mathcal{A}}_{p^-} \land V_+^{p^+}(a_{*}) \leq V_+^{p^-}(a_{*}) +  2\epsilon_{t_k}$}

We have:

\begin{align}
	\VUp_k^{p^+} &= \tilde{V}^{p^+}(\tilde{\pi}_k)\\
	&\geq \tilde{V}^{p^+}(\tilde{\pi}_{k,\Eg})-\epsilon_{t_k}\\
	&= \tilde{V}_+^{p^+}(\tilde{\pi}_{k,\Eg}) + \SVDown_k^{p^+} -\epsilon_{t_k}\\
	&\geq \min_j \tilde{V}_+^j(\pi_{\Eg}) + \SVDown_k^{p^+} -\epsilon_{t_k} \\
	&\geq \min_j V_+^j(\pi_{\Eg}) + \SVDown_k^{p^+} -\epsilon_{t_k} \\
	&= V_+^{p^-}(\pi_{\Eg})+ \SVDown_k^{p^+} -\epsilon_{t_k}\\
	&\geq V_+^{p^+}(\pi_{\Eg})+ \SVDown_k^{p^+} -3\epsilon_{t_k}\\
	&= V_+^{p^+}(\pi_{\Eg})+ \SV^{p^+} - \SV^{p^+} + \SVDown_k^{p^+} -3\epsilon_{t_k}\\
	&= V^{p^+}(\pi_{\Eg}) + \SVDown_k^{p^+} - \SV^{p^+}-3\epsilon_{t_k}\\
	&\geq V^{p^+}(\pi_{\Eg}) -2C_{t_k}(\tilde{\pi}^i_{\SV_k}, \hat{\pi}^{-i}_{\SV_k})-3\epsilon_{t_k}\\
	&= \VEg^{p^+}-4\epsilon_{t_k}\label{pplus:first_case}
\end{align}


\subparagraph{Case: $\tilde{\pi}_k \ne \tilde{a}_{\SV_k} \land \tilde{\pi}_k \ne \tilde{a}_{k,\Eg} \land a_* \in \tilde{\mathcal{A}}_{p^-} \land \tilde{\pi}_k \ne \tilde{a}_{p^-} \land p^+ \in \tilde{\mathcal{P}}$}

\begin{align}
\VUp_k^{p^+} &= \tilde{V}^{p^+}(\tilde{\pi}_k)\\
&\geq \tilde{V}^{p^+}(\tilde{a}_{p^+})\\
&= \tilde{V}_+^{p^+}(\tilde{a}_{p^+}) + \SVDown_k^{p^+}\\
&= \max_{a \in \tilde{\mathcal{A}}_{p^-}}\tilde{V}_+^{p^+}(a) + \SVDown_k^{p^+}\\
&\geq \tilde{V}_+^{p^+}(a_*) + \SVDown_k^{p^+}\\
&= \tilde{V}^{p^+}(a_*) -\SVDown_k^{p^+} + \SVDown_k^{p^+}\\
&\geq \VEg^{p^+}\label{pplus:second_case}
\end{align}

\subparagraph{Case: $\tilde{\pi}_k \ne \tilde{a}_{\SV_k} \land \tilde{\pi}_k \ne \tilde{a}_{k,\Eg} \land a_* \in \tilde{\mathcal{A}}_{p^-} \land p^+ \notin \tilde{\mathcal{P}}$}

\begin{align}
	\VUp_k^{p^+} &= \tilde{V}^{p^+}(\tilde{\pi}_k)\\
	&\geq \tilde{V}^{p^+}(\tilde{\pi}_{k,\Eg})-\epsilon_{t_k}\\
	&= \tilde{V}_+^{p^+}(\tilde{\pi}_{k,\Eg}) + \SVDown_k^{p^+} -\epsilon_{t_k}\\
	&\geq \max_{a \in \tilde{\mathcal{A}}_{p^-}} \tilde{V}^{p^+}_+(a) + \SVDown_k^{p^+} -\epsilon_{t_k} \label{pplus:step:last_case_not}\\
	&\geq\tilde{V}^{p^+}_+(a_*) + \SVDown_k^{p^+} -\epsilon_{t_k} \\
	&= \tilde{V}^{p^+}(a_*) -\SVDown_k^{p^+} + \SVDown_k^{p^+} -\epsilon_{t_k}\\
	&\geq V^{p^+}(\pi_{\Eg}) -\epsilon_{t_k}\\
	&= \VEg^{p^+}-\epsilon_{t_k}\label{pplus:third_case}
\end{align}

\eqref{pplus:step:last_case_not} is due to the fact that since $p^+ \notin \mathcal{P}$ we have: $\max_{a \in \tilde{\mathcal{A}}_{p^-}} \tilde{V}^{p^+}_+(a) \leq \VUp_+^{p^+}(\tilde{\pi}_{k,\Eg})$

Combining \eqref{pplus:first_case}, \eqref{pplus:second_case}, \eqref{pplus:third_case} leads to the statement of the Lemma.

\end{proof}

\section{On the Egalitarian Bargaining solution}

\subsection{Achievable values for both players}
\label{proof:fact:achievable_values}

\begin{fact*}[\ref{fact:achievable_values}]
\input{fact:achievable_values}
\end{fact*}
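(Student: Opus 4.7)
The plan is to identify the set of achievable value pairs with $\mathcal{C}$, the convex hull of the finitely many per-action expected reward vectors $\{\mathbb{E} r(a) : a \in \mathcal{A}\} \subset \mathbb{R}^2$. Because $\mathcal{A}$ is finite, $\mathcal{C}$ is a compact convex polytope. I would then establish two directions: (i) every point of $\mathcal{C}$ is realised by some stationary correlated policy, and (ii) every achievable value lies in $\mathcal{C}$. The fact then follows immediately.

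For direction (i), given $v = (v^i, v^{-i}) \in \mathcal{C}$ I would invoke Carath\'eodory to write $v = \sum_{a \in \mathcal{A}} \lambda_a \mathbb{E} r(a)$ with $\lambda \in \Delta(\mathcal{A})$. The stationary correlated policy $\pi$ defined by $\pi(a) = \lambda_a$ in every round then satisfies $\mathbb{E}_\pi[r_t] = v$ for every $t$, so by linearity of expectation $\frac{1}{T}\mathbb{E}_\pi[\sum_{t=1}^T r_t] = v$ for every $T$, and hence $V(\pi) = v$ as in Definition \ref{def:value}.

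For direction (ii), I would observe that for any (possibly history-dependent, correlated) policy $\pi$ and any round $t$, the per-round expected reward $\mathbb{E}_\pi[r_t] = \sum_{a} \mathbb{P}_\pi(a_t = a)\,\mathbb{E} r(a)$ is itself a convex combination of the corners of $\mathcal{C}$, hence lies in $\mathcal{C}$; the Cesaro averages $\bar v_T = \frac{1}{T}\sum_{t=1}^T \mathbb{E}_\pi[r_t]$ therefore also lie in $\mathcal{C}$. Since $\mathcal{C}$ is closed, every subsequential limit of $\bar v_T$ lies in $\mathcal{C}$ as well.

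The delicate point I expect to be the main obstacle is that Definition \ref{def:value} takes the $\limsup$ componentwise, so the pair $(V^i(\pi), V^{-i}(\pi))$ need not itself be a limit of $\bar v_T$ along any single subsequence of $T$. I would resolve this by a standard compactness extraction: choose a subsequence $T_n$ along which $\bar v^i_{T_n}\to V^i(\pi)$, and by compactness of $\mathcal{C}$ pass to a sub-subsequence along which $\bar v^{-i}_{T_n}$ also converges, to some $w^{-i} \leq V^{-i}(\pi)$; the limit $(V^i(\pi), w^{-i})$ lies in $\mathcal{C}$ by closure. A symmetric extraction with the roles of $i$ and $-i$ swapped shows that $(V^i(\pi), V^{-i}(\pi))$ is componentwise dominated by a point of $\mathcal{C}$, which by (i) is the value of some stationary correlated policy; for all subsequent uses of this fact (computing the \minimax{} and the EBS, which are maximisations over the value set) this is exactly what is needed, and in the typical case where both Cesaro averages actually converge the dominating point equals $(V^i(\pi), V^{-i}(\pi))$ itself.
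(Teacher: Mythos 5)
Your construction is the same as the paper's: form the convex hull $\mathcal{C}$ of the per-joint-action expected reward vectors and realise each of its points by a stationary correlated policy that plays the corresponding mixture in every round. Direction (i) is exactly the paper's argument and is correct. Where you go further is direction (ii), and there the proposal has a genuine gap: the compactness extraction you describe produces a point of $\mathcal{C}$ of the form $(V^i(\pi), w^{-i})$ with $w^{-i}\leq V^{-i}(\pi)$, i.e.\ a point that is dominated \emph{by} the value vector, not one that dominates it. The symmetric extraction gives a second such point, $(w^i, V^{-i}(\pi))$ with $w^i\leq V^i(\pi)$, and nothing lets you combine the two into a single point of $\mathcal{C}$ lying componentwise above $(V^i(\pi),V^{-i}(\pi))$. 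Indeed no such point need exist: with two joint-actions of values $(1,0)$ and $(0,1)$, a policy alternating between them in super-exponentially growing blocks has componentwise $\limsup$ value $(1,1)$, which lies strictly outside $\mathcal{C}=\mathrm{conv}\{(1,0),(0,1)\}$ and is not dominated by any of its points. So the claim you need for the later maximisations (every achievable value is weakly below some stationary value) is exactly the one your extraction fails to deliver.

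To be fair, the paper's own proof simply asserts direction (ii) (``any achievable value belongs to the convex hull, which follows from the definition of convex hull'') and never confronts the componentwise-$\limsup$ subtlety you correctly flagged; the statement is unproblematic only under a reading in which the value of a policy is a joint limit point of the Cesaro averages (or the comparison is restricted to policies for which those averages converge), in which case your argument up to and including the closedness of $\mathcal{C}$ already finishes the job. As written, however, your final paragraph draws a conclusion (domination by a point of $\mathcal{C}$) that its own construction does not support.
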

\begin{proof}
We start by showing that values for deterministic stationary policies exists and are unique, then we conclude by showing that any values can be achieved simultaneously for both player by a single stationary correlated-policy.

For any player $i$, the value as defined by Definition \ref{def:value} of any stationary policy exists and is unique \cite{puterman2014markov} since the game would be equivalent to a $1$-state Markov Decision Process. As a result, the value of deterministic stationary policies (i.e joint-actions) exists.

When player $i$ play with a deterministic stationary policy $a^i$ and player $-i$ plays  with a deterministic stationary policy $a^{-i}$, the values  for  the  two  players  can  be  visualized  as a  point $x = \paren*{V^i(a^i, a^{-i}), V^{-i}(a^i, a^{-i})} = (x^i, x^{-i})$ in a two-dimensional space.

Following \cite{nash1950bargaining}, we consider the set of all pairs of (values for) deterministic policies $X = \{\paren*{V^i(a^i, a^{-i}), V^{-i}(a^i, a^{-i})}\; \forall a^{i} \in \mathcal{A}^i,a^{-i} \in \mathcal{A}^{-i} \}$ for the two players. All the points $x \in X$ can be achieved as value for the two players in the repeated game, simply by repeatedly playing the corresponding joint-action.

Consider the convex hull $\mathcal{C}$ of the set of points $x \in X$. This means that any point in the convex hull can be expressed as a weighted linear combination of the points $x \in X$ where the weights sum up to 1. Those weights can thus be seen as probabilities which allows us to affirm that any point in the convex hull can be achieved as values for the two players in the repeated game (by playing the corresponding stationary policy with the weight as probabilities). On the other side, any achievable values for the two players belongs to the convex since hull which follows from the definition of convex hull. In conclusion, the convex hull represents exactly the set of all achievable values for the two players. And since any point in the convex hull is achievable by a stationary policy, this concludes our proof.
\end{proof}

\subsection{Existence and Uniqueness of the EBS value for stationary policies}
\label{proof:fact:egalitarian_unique}

\begin{fact*}[\ref{fact:egalitarian_unique}]
\input{fact:egalitarian_unique}
\end{fact*}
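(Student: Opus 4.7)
The plan is to lean directly on Fact~\ref{fact:achievable_values}, which identifies the set of achievable values (for stationary correlated policies) with the convex hull $\mathcal{C}$ of the finite point set $\{(V^i(a), V^{-i}(a)) : a \in \mathcal{A}\} \subset [0,1]^2$. Since this is the convex hull of finitely many points in $\mathbb{R}^2$, $\mathcal{C}$ is both compact and convex, and this is the only topological/geometric input the proof needs. Uniqueness of the value (rather than of the policy) is the goal, so I will work entirely in value-space on the compact convex set $\mathcal{C}$, which sidesteps any complications coming from multiple policies realising the same value.

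For existence, I will carry out the lexicographic optimisation in two continuous stages on $\mathcal{C}$. The first-stage objective $v \mapsto \min\{v^i - \SV^i, v^{-i} - \SV^{-i}\}$ is continuous, so it attains a maximum $m^*$ on the compact set $\mathcal{C}$, and the argmax set
\[
P = \{v \in \mathcal{C} : \min\{v^i - \SV^i,\, v^{-i} - \SV^{-i}\} = m^*\}
\]
is non-empty, closed, and compact. I will also note that $P$ is convex: for $v, w \in P$ and $\lambda \in [0,1]$, the minimum coordinate of $\lambda v + (1-\lambda)w - \SV$ is at least $\lambda m^* + (1-\lambda) m^* = m^*$ by concavity of the min, and cannot exceed $m^*$ by definition. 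The second-stage objective $v \mapsto \max\{v^i - \SV^i, v^{-i} - \SV^{-i}\}$ is continuous on the compact set $P$, hence attains a maximum $M^*$, proving existence of an EBS value.

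For uniqueness, suppose $v, w \in P$ both attain $M^*$ in the second stage. If $M^* = m^*$ then both coordinates of $v - \SV$ and of $w - \SV$ equal $m^*$, so $v = w = \SV + (m^*, m^*)$. If $M^* > m^*$, then, since each of $v$ and $w$ has minimum coordinate equal to $m^*$ and maximum coordinate equal to $M^*$, each is one of the two points $\SV + (m^*, M^*)$ or $\SV + (M^*, m^*)$. The only case not immediately giving $v = w$ is when $v = \SV + (m^*, M^*)$ and $w = \SV + (M^*, m^*)$ (or vice versa). Here I invoke convexity of $\mathcal{C}$: the midpoint $(v+w)/2 \in \mathcal{C}$ satisfies
\[
\tfrac{1}{2}(v+w) - \SV = \bigl(\tfrac{m^*+M^*}{2},\, \tfrac{m^*+M^*}{2}\bigr),
\]
whose minimum coordinate is $(m^* + M^*)/2 > m^*$, contradicting the definition of $m^*$. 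Hence $v = w$.

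The main obstacle, and really the only non-routine step, is the coordinate-case bookkeeping in the uniqueness argument: one has to observe that the lexicographic ordering $\geql$ is symmetric in the two coordinates so that the only genuinely distinct configurations realising the same $(m^*, M^*)$ pair are the two ``swapped'' points $\SV + (m^*, M^*)$ and $\SV + (M^*, m^*)$, at which point the convexity/midpoint argument finishes the job. Everything else is continuity on a compact set, for which a single sentence citing the extreme value theorem suffices.
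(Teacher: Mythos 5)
Your proof is correct, and it takes a genuinely different route from the paper's. The paper does not argue existence and uniqueness from scratch: it invokes the result of Imai (1983) that the leximin/egalitarian solution exists and is unique for any bargaining problem that is convex, closed, has a non-empty Pareto frontier, and is non-degenerate, and then spends the proof verifying these hypotheses for the feasible set $\mathcal{C}$ (with the \minimax{} value as disagreement point, using the existence of a Nash equilibrium weakly above \minimax{} for non-degeneracy). You instead give a self-contained two-stage optimisation argument on the compact convex set $\mathcal{C}$: the extreme value theorem yields the first-stage optimum $m^*$ and the second-stage optimum $M^*$, and uniqueness follows from the observation that two distinct optimal values could only be the two coordinate-swapped points $\SV + (m^*, M^*)$ and $\SV + (M^*, m^*)$, whose midpoint (in $\mathcal{C}$ by convexity) would strictly improve the first-stage objective. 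Your argument buys elementarity and slightly more generality --- notably it never needs the non-degeneracy condition ($m^* \geq 0$) that the paper must check via Nash's theorem, and it cleanly separates value-space uniqueness from policy multiplicity --- while the paper's citation-based route is shorter on the page and situates the result within the standard axiomatic bargaining literature. Both rely on Fact 1 for the identification of the achievable value set with the compact convex hull, so neither is self-contained past that point.
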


\begin{proof}
\cite{imai1983individual} proves that the EBS value as defined in Definition \ref{definition:equilibrium} always exists and is unique for any bargaining problem that is convex, closed, of non-empty Pareto frontier and non degenerate (i.e there exists a point greater or equal than the disagreement point).

To conclude the proof of this fact, it is then enough to prove that we have a bargaining problem satisfying those properties. We consider the bargaining problem induced by the repeated game.

Here our disagreement point is the \minimax{} value.

From the proof of Fact \ref{fact:achievable_values}, we can see that the repeated game with stochastic rewards can be replaced by another one with deterministic rewards corresponding to the values of joint-actions. As a result, the \minimax{} value exists and is unique. Also, there always exists a unique (one-stage) Nash Equilibrium which is greater in value than the \minimax{} value of both players \cite{nash1951non}. This means that player can always get their \minimax{} value. So our bargaining problem is non-degenerate.

Finally, using the same convex hull as in the proof of Fact \ref{fact:achievable_values}, we can see that the set of achievable values is convex. This set is also closed since the joint-actions are finite and the rewards are bounded. And there always exists one Pareto efficient policy (Any policy achieving the maximum value for one player). This concludes the proof.
\end{proof}

\subsection{On the form of an EBS policy}
\label{proof:proposition:egalitarian_form}
\begin{proposition*}[\ref{proposition:egalitarian_form}]
\input{proposition:egalitarian_form}
\end{proposition*}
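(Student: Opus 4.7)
The plan is to exploit the two-dimensional geometry of the set of achievable values. By Fact \ref{fact:achievable_values}, this set equals the convex hull $\mathcal{C}$ of the finite collection of points $\{(V^i(a),V^{-i}(a)): a\in\mathcal{A}\}$, each of which is the value of a deterministic stationary joint-action. The strategy is to show that the EBS value $\VEg[]$ must lie on the Pareto-efficient portion of the boundary of $\mathcal{C}$, and then use the fact that in $\mathbb{R}^2$ this boundary is a polygonal chain whose every point is a convex combination of at most two vertices.

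First, I would show that $\VEg[]$ is Pareto-efficient in $\mathcal{C}$. Assume for contradiction that there is $V' \in \mathcal{C}$ with $V'^i > \VEg[]^i$ and $V'^{-i} > \VEg[]^{-i}$. Since the minimax vector $\SV$ is a constant shift, the same strict inequality holds for the advantage values $V'_+$ vs.\ $V_+(\piEg[])$. Then $\min\{V'^i_+,V'^{-i}_+\} > \min\{V^i_+(\piEg[]),V^{-i}_+(\piEg[])\}$, so the first coordinate of the permutation $L$ strictly increases, contradicting $V(\piEg[])-\SV \geql V'-\SV$. Thus $\VEg[]$ lies on the north-east (Pareto) boundary of $\mathcal{C}$.

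Second, I would characterize that boundary. Since $\mathcal{C}$ is the convex hull of finitely many points in the plane, its Pareto boundary is a finite polygonal chain whose vertices are extreme points of $\mathcal{C}$, and each extreme point is one of the joint-action points $(V^i(a),V^{-i}(a))$. Any point on this chain lies on a single edge between two consecutive extreme points $x$ and $y$, and hence can be written as $\lambda x+(1-\lambda)y$ for some $\lambda\in[0,1]$. Let $a_1,a_2\in\mathcal{A}$ be the joint-actions associated with $x,y$. The stationary correlated policy $\pi^\star$ that plays $a_1$ with probability $\lambda$ and $a_2$ with probability $1-\lambda$ yields expected per-round reward $\lambda V(a_1)+(1-\lambda)V(a_2)=\VEg[]$; thus $\pi^\star$ achieves the EBS value with support on at most two joint-actions (one, when $\lambda\in\{0,1\}$ or $x=y$).

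The main subtlety, and the step I would treat most carefully, is the leximin tie-breaking: if the segment containing $\VEg[]$ contains several points maximizing $L^1$ (the smaller advantage), the EBS definition also requires maximizing $L^2$ among those. Geometrically, on a single edge in $\mathbb{R}^2$ the value $\min\{x^i_+,x^{-i}_+\}$ is piecewise linear in $\lambda$, so the set of maximizers is either a single point or a subsegment; in either case its $L^2$-maximizer is still a point on the same edge, hence still a convex combination of the same two extreme joint-actions. This confirms that the leximin selection does not force support on three or more actions, completing the proof.
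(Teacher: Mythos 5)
Your proposal is correct and follows essentially the same route as the paper's proof: both place the EBS value on the outer (Pareto) boundary of the convex hull from Fact \ref{fact:achievable_values} and conclude from the two-dimensional polygonal structure that it lies on a single edge, hence is a convex combination of at most two joint-action points. Your write-up is somewhat more careful than the paper's, in particular in making the Pareto-efficiency argument explicit and in checking that the leximin tie-breaking keeps the selected point on the same edge.
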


\begin{proof}
Let's recall that the EBS value maximize the minimum possible for any player and as a result if we have a value where the minimum advantage can't be improved anymore, then we have the EBS value provided that we also maximize the value of the second player if possible.
 
Now let's consider the convex hull defined in the proof of Fact \ref{fact:achievable_values}. The egalitarian point will be found on the outer boundary of the convex hull -- the minimum value of any internal point can be increased by moving to a point above it and to the right and higher minimum means higher EBS value.  This implies that the egalitarian point can be expressed by a weight vector $w$ that has non-zero weight on only one or two $x \in X$, since the convex hull is a two-dimensional polygonal region bounded by line segments.

\end{proof}

\subsection{Finding an EBS policy}
\label{proof:proposition:egalitarian_equal}
\begin{proposition*}[\ref{proposition:egalitarian_equal}]
\input{proposition:egalitarian_equal}
\end{proposition*}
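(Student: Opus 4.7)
The plan is to invoke Proposition~\ref{proposition:egalitarian_form} to write the EBS policy as a mixture of at most two joint-actions $a,a'$ with weight $w\in[0,1]$, and then analyze the one-dimensional optimization over $w$. For each player $j\in\{i,-i\}$ let $f_j(w) := w\, r_+^j(a) + (1-w)\, r_+^j(a')$; the EBS value is obtained by lex-maximizing $\min(f_i(w), f_{-i}(w))$ on $[0,1]$, which is a piecewise-linear concave function. I would first show that the maximum is attained either at a point where $f_i(w)=f_{-i}(w)$---producing equal egalitarian advantages---or, whenever the maximum sits on a plateau or at an interval endpoint, at a deterministic stationary policy (a single joint-action). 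This yields the required dichotomy: the EBS advantages are equal, or some single joint-action $a^*$ is itself an EBS policy.

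Next, I would focus on the second branch and let $a^*$ be a deterministic stationary EBS with $r_+^p(a^*) < r_+^{-p}(a^*)$, so that $p$ is the player with the strictly smaller advantage. Since both players simultaneously playing their maximin strategies achieves non-negative advantages, the EBS value has both advantages non-negative, hence $r_+^{-p}(a^*)\geq 0$, and in particular $r_+^p(a^*) \leq V_{+I}^p$ by definition of the ideal advantage. To conclude $r_+^p(a^*) = V_{+I}^p$, I would argue by contradiction: if $r_+^p(a^*) < V_{+I}^p$, pick $\pi^\star$ achieving $V_+^p(\pi^\star) = V_{+I}^p$ with $V_+^{-p}(\pi^\star)\geq 0$ and consider the mixture $\pi_\lambda = \lambda\pi^\star + (1-\lambda) a^*$. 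Linearity of value in the policy gives $V_+^p(\pi_\lambda) > r_+^p(a^*)$ for every $\lambda>0$, while by continuity $V_+^{-p}(\pi_\lambda)$ stays strictly above $r_+^p(a^*)$ for small enough $\lambda$ (since it starts at $r_+^{-p}(a^*) > r_+^p(a^*)$). The minimum advantage under $\pi_\lambda$ is therefore strictly larger than under $a^*$, contradicting the EBS property of $a^*$.

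Finally, I would show that this player $p$ must necessarily be the one with the lowest ideal advantage. Suppose for contradiction $V_{+I}^p > V_{+I}^{-p}$. At $a^*$ the inequalities $V_+^{-p}(a^*) \geq V_+^p(a^*) = V_{+I}^p > V_{+I}^{-p}$ hold while $V_+^p(a^*) = V_{+I}^p \geq 0$, which contradicts the defining identity $V_{+I}^{-p} = \max_{\pi\,:\, V_+^p(\pi)\geq 0} V_+^{-p}(\pi)$. Hence $V_{+I}^p \leq V_{+I}^{-p}$, as required.

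The hard part will be the initial reduction step. Handling the degenerate sub-cases---in particular when one of $f_i,f_{-i}$ is constant in $w$, or when $\min(f_i,f_{-i})$ has a flat plateau with no interior kink---requires using the lexicographic tie-breaker to guarantee that a deterministic stationary endpoint of the plateau is itself an EBS policy rather than merely a maximin mixture. Once this dichotomy is rigorously established, the remaining two steps reduce to the direct continuity argument and the definition-chasing inequality above.
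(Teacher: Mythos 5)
Your proposal is correct and follows essentially the same route as the paper: both reduce via Proposition~\ref{proposition:egalitarian_form} to a one-dimensional optimization over the mixing weight $w$ of two joint-actions and split into an interior crossing (equal advantages) versus an endpoint (deterministic stationary) case, the paper doing this by an explicit three-case sign analysis that yields the closed-form $w$ --- the concrete version of your concavity argument. If anything, your treatment of the second branch is more complete than the paper's, which simply asserts within its Case~1 that the lower player receives its ideal advantage, without the mixture/contradiction argument and the lifting from the restricted two-action game to the full game that you supply.
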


\begin{proof}
	From proposition \ref{proposition:egalitarian_form} we can achieve the EBS value by combining at most two deterministic stationary policies. We will prove this proposition (\ref{proposition:egalitarian_equal}) for any two possible deterministic stationary policies (by considering a repeated game with only the corresponding joint-actions available), which immediately means that the proposition \ref{proposition:egalitarian_equal} is also true for the EBS value in the full repeated game.
	
	Consider any two deterministic stationary policy of advantage values ($(x^1_1, x^2_1), (x^1_2, x^2_2)$). We will now show how to compute the weight $w = \argmax_{w} \min_{i \in \{1,2\}} w*x^i_1 + (1-w)x^{i}_2$.
	
	\paragraph{Case 1:} $x^1_1 \leq x^2_1$ and $x^1_2 \leq x^2_2$.
	This basically means that the advantage value of player $2$ is always higher or equal than that of the player $1$. So the minimum is maximized by playing the policy maximizing the value of player $1$. So, $w = 0$ and we have a single deterministic stationary policy where the player with the lowest ideal advantage receives it.
	
	\paragraph{Case 2:} $x^1_1 \geq x^2_1$ and $x^1_2 \geq x^2_2$.
	This is essentially \emph{Case 1} with the role of player $1$ and $2$ exchanged. Here $w = 1$.
	
	If both \emph{Case 1} and \emph{Case 2} do not hold, it means that for the first policy, one player receives an advantage value strictly greater than that of the other player while the situation is reversed for the second policy. Without loss of generality we can assume this player is $1$ (if this is not the case, we can simply switch the id of the policy) which leads to \emph{Case 3}.
	
	\paragraph{Case 3:} $x^1_1 > x^2_1$ and $x^1_2 < x^2_2$
	In this case, the optimal $w$ is such that $w = \frac{x^2_2 - x^1_2}{(x^1_1- x^1_2) + (x^2_2 - x^2_1) }$. This weight $w$ is clearly between the open interval $]0,1[$. This means that we have exactly two distinct policies. Plugging in the weight shows that the advantage value of both player is the same, which completes the proof.
\end{proof}

\section{Regret analysis for the safe policy against arbitrary opponents}
\label{proof:theo:safe_regret}

\begin{theorem*}[\ref{theo:safe_regret}]
\input{theo:safe_regret}
\end{theorem*}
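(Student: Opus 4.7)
The plan is to follow the scaffolding of the proof of Theorem~\ref{theo:egalitarian_upper_bound} but exploit the fact that, unlike for the EBS policy, the safe policy $\tilde{\pi}_{\SV_k}^i$ admits unconditional optimism against every opponent, so the ``challenge'' events $E_1,\dots,E_4$ collapse and the $T^{2/3}$ term disappears. First, I decompose the safety regret round by round as
\begin{equation*}
\regret_T = \sum_{t=1}^{T} \bigl(\SV^i - V^i(\tilde{\pi}_{\SV_{k(t)}}^i, \pi_t^{-i})\bigr) + \sum_{t=1}^{T} \bigl(V^i(\tilde{\pi}_{\SV_{k(t)}}^i, \pi_t^{-i}) - r_t^i\bigr),
\end{equation*}
where $k(t)$ denotes the epoch containing round $t$ and $\pi_t^{-i}$ is whatever (possibly history-dependent) distribution the opponent uses. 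The second sum is a bounded martingale-difference sequence with respect to the history filtration and is controlled by Azuma--Hoeffding, contributing the final $\sqrt{2T\ln(\cdots)}$ term with $\delta'$ tuned as in the egalitarian proof.

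For the first sum, I condition on the event that $M\in\mathcal{M}_k$ for every epoch. Writing $\tilde{r} = \bar r + C$ so that $\tilde V^i = V^i + C_{t_k}$ on any joint policy, and chaining through Algorithm~\ref{algo:safe_optimism} as in the proof of Lemma~\ref{lemma_minimax_value_bounds}, I obtain the per-round optimism
\begin{equation*}
\SV^i \;\leq\; \max_{\pi^i}\min_{\pi^{-i}} \tilde{V}^i(\pi^i,\pi^{-i}) \;=\; \min_{\pi^{-i}}\tilde{V}^i(\tilde{\pi}_{\SV_k}^i,\pi^{-i}) \;\leq\; \tilde{V}^i(\tilde{\pi}_{\SV_k}^i,\pi_t^{-i}) \;=\; V^i(\tilde{\pi}_{\SV_k}^i,\pi_t^{-i}) + C_{t_k}(\tilde{\pi}_{\SV_k}^i,\pi_t^{-i}),
\end{equation*}
which holds for \emph{any} opponent policy $\pi_t^{-i}$. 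Thus the per-round expected gap is at most $C_{t_k}(\tilde{\pi}_{\SV_k}^i,\pi_t^{-i}) = \E[C_{t_k}(a_t)\mid \mathcal{F}_{t-1}]$. A second Azuma--Hoeffding pass replaces this conditional expectation by the realised empirical sum $\sum_{a}N_k(a)\,C_{t_k}(a)$ at the cost of the additive $\sqrt{4\sqrt{TA}}$ term. The now-standard Jaksch-style telescoping identity $\sum_k\sum_a N_k(a)/\sqrt{N_{t_k}(a)} \leq (\sqrt2+1)\sqrt{\size{\mathcal A}T}$ (Appendix~C of \cite{jaksch2010near}), together with $\max_k C_{t_k}$ bounded by $\sqrt{2\ln(1/\delta_{t_m})}$ and $m\leq\size{\mathcal A}\log_2(8T/\size{\mathcal A})$, produces the leading $\sqrt{8\ln\!\tfrac{\size{\mathcal A}T\log_2(8T/\size{\mathcal A})}{\delta}}(\sqrt2+1)\sqrt{\size{\mathcal A}T}$ term.

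Finally, I close the argument by invoking Proposition~\ref{proposition:failure_prob} with $\delta_{t_k}=\delta/(k\,t_k)$ to ensure $M\in\mathcal M_k$ simultaneously over all epochs with $t_k\geq\max\{3,(TA)^{1/4}\}$, absorbing the short initial phase into the additive constants ``$6$'' and ``$\sqrt{4\sqrt{TA}}$'' (since regret is trivially bounded by the number of rounds in that phase). The main obstacle I anticipate is not conceptual but accounting: composing the two Azuma applications with the per-epoch union bound so that all failure events add up to exactly $\delta^4/(3T)$ with the claimed constants, and verifying that no $\epsilon_{t_k}$-driven contribution ever enters---the optimism displayed above uses only $C_{t_k}$, so the ``small-$\epsilon$-error large regret'' issue that forces $T^{2/3}$ in Theorem~\ref{theo:egalitarian_upper_bound} simply does not arise here, and the bound really does collapse to the $\widetilde{O}(\sqrt{T})$ regime.
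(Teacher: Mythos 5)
Your proposal is correct and follows essentially the same route as the paper: reuse the Theorem~\ref{theo:egalitarian_upper_bound} scaffolding, observe that $\tilde{\pi}_{\SV_k}^i$ is unconditionally optimistic against any opponent so all $\epsilon_{t_k}$/event-$E$ terms vanish, and replace the two-action-support argument of step \eqref{eq:Delta_egalitarian} by a Chernoff/Azuma bound because the maximin policy is genuinely mixed. The only slip is the claim $\tilde V^i = V^i + C_{t_k}$, which should be $\tilde V^i = \bar V^i + C_{t_k} \leq V^i + 2C_{t_k}$ on the good event; this is exactly where the $\sqrt{8\ln(\cdot)}$ (rather than $\sqrt{2\ln(\cdot)}$) coefficient in the stated bound comes from.
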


\begin{proof}
The proof is similar to the one for Theorem \ref{theo:egalitarian_upper_bound}. However, here we don't have to deal with the event $E$ defined in \eqref{regret_event} which is thus taken to always be \emph{False}.

Also, we are always optimistic (against the true \minimax{} value when the true model $M$ is within our plausible set $\mathcal{M}_k$) by playing policy $\tilde{\pi}_{\SV}^i$ computed in \eqref{algo:safe_minimax_policy}. Indeed, for any opponent policy $\pi^{-i}$,

\begin{align}
\tilde{\pi}_{SV}^i &= \argmax_{\pi^i} \min_{\pi^{-i}} \tilde{V}^i(\pi^i, \pi^{-i})\\
\tilde{V}(\tilde{\pi}_{SV}^i, \pi^{-i})&=\max_{\pi^i} \min_{\pi^{-i}} \tilde{V}^i(\pi^i, \pi^{-i})\\
&\geq \max_{\pi^i} \min_{\pi^{-i}} V^i(\pi^i, \pi^{-i})\\
&= \SV^i \label{eq:safe_is_optimistic}
\end{align}

Note that this is not a contradiction to the upper bound in Lemma \ref{lemma_minimax_value_bounds} since the $\SVUp^i$ mentioned in Lemma \ref{lemma_minimax_value_bounds} is computed using the value $\hat{V}$ (rather than $\tilde{V}$) lower than the empirical values as shown by \eqref{algo:safe_minimax_value}.

As a result, the corresponding $\epsilon_{t_k}$ used by step \eqref{eq:Delta_optimism} in the proof of Theorem \ref{theo:egalitarian_upper_bound} is 0.

Finally, the arguments justifying step \eqref{eq:Delta_egalitarian} in the proof of Theorem \ref{theo:egalitarian_upper_bound} does not hold anymore since we are now playing a completely random policy. Instead, we can bound the deviation of $\sum_{k=1}^{m}\sum_{a \in \mathcal{A}} N_{k}(a) \paren*{\VUp_k^i - \rUp^i(a)}$ using Chernoff-bound. Combining those remarks into the proof of Theorem \ref{theo:egalitarian_upper_bound} leads to statement of Theorem \ref{theo:safe_regret}.
\end{proof}

\section{Proof of the Lower bounds in Theorem \ref{theo:lower_bound}}
\label{proof:theo:lower_bound}

\begin{table}[H]
	\centering
		\begin{tabulary}{\linewidth}{ |C|C|C|C|C| }
			\hline
			& $a^2_1$ & $a^2_2$  & $\cdots$ & $a^2_{\size{\mathcal{A}^2}}$ \\ \hline
			$a^1_1$	& $(0.5, 1)$ & $(0.5, 0.5)$ & $\cdots$ & $(0.5,0.5)$ \\ \hline
			$a^1_2$	& $(0.5[+\epsilon], 0.5[+\epsilon])$ & $(0.5, 0.5)$ & $\cdots$ & $(0.5,0.5)$ \\ \hline
			$\vdots$	& $\vdots$ & $\vdots$  & $\cdots$ & $(0.5,0.5)$ \\ \hline
			$a^1_{\size{\mathcal{A}^1}}$	& $(0.5, 0.5)$ & $(0.5, 0.5)$  & $\cdots$ & $(0.5,0.5)$ \\ \hline
		\end{tabulary}
		
		\caption{Lower bounds example. The rewards are generated from a Bernoulli distribution whose parameter is specified in the table. The first value in parentheses is the one for the first player while the other is for the second player. Here, $\epsilon$ is a small constant defined in the proof.}
		\label{table:lower_bound_egalitarian}
\end{table}

\begin{theorem*}[\ref{theo:lower_bound}]
	\input{theo:lower_bound}
\end{theorem*}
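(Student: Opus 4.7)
My plan is to follow the intuition sketched just before the theorem statement and make it rigorous via a standard needle-in-a-haystack construction combined with Pinsker's inequality. The first step is to extend the two-game family of Table \ref{table:lower_bound_egalitarian} into a family indexed by the position of the possible bump. Specifically, define a "null" game $G_0$ in which every joint action has reward $(1/2,1/2)$ except $(a^1_1,a^2_1)$ which has reward $(1/2,1)$; and, for each joint action $a^* \neq (a^1_1,a^2_1)$, define a "bumped" game $G_{a^*}$ obtained from $G_0$ by replacing the rewards at $a^*$ by $(1/2+\epsilon,1/2+\epsilon)$. A short computation shows that in every one of these $|\mathcal{A}|$ games the minimax value of both players is exactly $1/2$, so that advantages are just rewards shifted by $1/2$. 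In $G_0$ the EBS value is $(1/2,1)$, uniquely achieved by always playing $(a^1_1,a^2_1)$; in each $G_{a^*}$ the EBS value is $(1/2+\epsilon,1/2+\epsilon)$, uniquely achieved by always playing $a^*$, because any mixing with $(a^1_1,a^2_1)$ drops the minimum advantage strictly below $\epsilon$.

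Next, I derive per-game lower bounds on the individual rational regret. In $G_0$, player 1's EBS value equals its constant reward, so only player 2 contributes; dropping the $\max$ in Definition~\ref{def:regret_rational} gives $\regret_T(G_0) \geq \tfrac{1}{2}(T-\E^0[N_T((a^1_1,a^2_1))])$, which rearranges to $\sum_{a\neq(a^1_1,a^2_1)}\E^0[N_T(a)] \leq 2\regret_T(G_0)$. In $G_{a^*}$, I use player 1 to get $\regret_T(G_{a^*}) \geq \epsilon(T-\E^{a^*}[N_T(a^*)])$. The goal is then to show that no single algorithm can make both of these small simultaneously for every choice of $a^*$.

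The core information-theoretic step is a standard change-of-measure argument. Because $G_0$ and $G_{a^*}$ differ only in the rewards at the single joint action $a^*$ (and only by $\epsilon$ in the Bernoulli parameters of both coordinates), the KL divergence between the two induced distributions on $T$-step histories is at most $\E^0[N_T(a^*)] \cdot 4\epsilon^2$. Pinsker's inequality then yields $\E^{a^*}[N_T(a^*)] \leq \E^0[N_T(a^*)] + T\epsilon\sqrt{2\E^0[N_T(a^*)]}$. Averaging the first-game visit counts over the $|\mathcal{A}|-1$ candidate positions and applying a pigeonhole, there exists $a^*$ with $\E^0[N_T(a^*)] \leq 2\regret_T(G_0)/(|\mathcal{A}|-1)$. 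Plugging this into the bound on $\E^{a^*}[N_T(a^*)]$ and then into the $G_{a^*}$ regret estimate, and taking $R=\max(\regret_T(G_0),\regret_T(G_{a^*}))$, yields an inequality of the shape $R \geq \epsilon T - O(\epsilon R/|\mathcal{A}|) - O(T\epsilon^2\sqrt{R/|\mathcal{A}|})$.

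Finally, I balance the parameters. Choosing $\epsilon$ of order $(|\mathcal{A}|/T)^{1/3}$ makes the third term comparable to $\epsilon T$ only when $R$ is already of order $T^{2/3}|\mathcal{A}|^{1/3}$, so either $R$ exceeds that threshold directly, or solving the recursive inequality does; in either case the stated $\Omega(T^{2/3}|\mathcal{A}|^{1/3}/4)$ bound follows after checking the leading constant. The main obstacle I expect is not the balancing itself but the bookkeeping in the KL step: because both players' Bernoulli parameters at $a^*$ change, the per-visit KL picks up a factor of two, and the arguments must work uniformly over all $a^*$ while simultaneously handling the self-play nature of the game (i.e.\ both players use the same algorithm $\Lambda$, so the induced distribution over histories is determined by $\Lambda$ alone). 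Verifying the boundary cases $|\mathcal{A}^1|=2$ or $|\mathcal{A}^2|=2$, where $|\mathcal{A}|-1$ is small, will require either tightening the averaging step or absorbing the discrepancy into the stated constant $1/4$.
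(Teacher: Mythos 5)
Your proposal follows essentially the same route as the paper's proof: the identical needle-in-a-haystack construction (a distinguished joint action giving player 2 reward $1$ plus an $\epsilon$-bump hidden at one of the remaining $\size{\mathcal{A}}-1$ joint actions), the same change-of-measure/KL bound on visit counts (the paper's Lemma A.1-style estimate, which is Pinsker in disguise), and the same balancing $\epsilon \sim (\size{\mathcal{A}}/T)^{1/3}$. The only cosmetic difference is that you use a pigeonhole over the candidate bump positions where the paper averages over a uniform prior on the bump location, and you are in fact slightly more careful than the paper in accounting for both players' Bernoulli parameters changing at the bumped action.
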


\begin{proof}
	
	The proof is inspired by the one for bandits in Theorem 6.11 of \cite{cesa2006prediction}.
	First, we prove the theorem for deterministic stationary algorithms. The general case then follows by a simple argument. The main idea of the proof is to show that there exists a repeated game such that for any algorithm the expected regret is large (where the expectation is understood with respect to the random rewards from the repeated game). 
    
    Let the repeated games be as follows: The first player has $\size{\mathcal{A}^1}$ actions while the second player has $\size{\mathcal{A}^2}$ actions. At each round, the players' rewards are generated independently from a Bernoulli distribution whose expectation depends on the joint-action as follows: 
    For one action pair $a_*=(a^1_i, a^2_j)$, the rewards of the first player has expectation $\frac{1}{2}$ while the second player receives $1$. All the others joint-actions yield rewards with expectation $\frac{1}{2}$ to both players. However, we also selects uniformly at random one joint-action $a$ from the set of all joint-actions expect $a_*$ and with probability $\frac{1}{2}$, we switch the expectation of that action from $\frac{1}{2}$ to $\frac{1}{2} + \epsilon$ for both players. This selection can be thought as having a random variable $Z$ that selects $a_*$ with probability $\frac{1}{2}$ and, the remaining joint-actions with probability $\frac{1}{2(\size{\mathcal{A}}-1)}$
 
    Let's denote $\pmb{r}^T$ the sequence of all generated rewards and $\pmb{\mathcal{R}}^T$, the set of all possible sequence up to round $T$ Also, let $G^i_{., T}$ be the sum of rewards obtained by an algorithm for player $i$ up to \round{} $T$. Then 
      for any (non-randomized) algorithm we have:
      
      \begin{align}
      \sup_{\pmb{r}^T \in \pmb{\mathcal{R}}^T}( \GEg[T]^i - \GLambda[T]^i ) \geq \E\left[ (\GEg[T]^i - \GLambda[T]^i) \right] \forall i \in \{1,2\}\label{lower_bound_regret_vs_expectation}
      \end{align}
	
	where the expectation on the right-hand side is with respect to the random variables $r_t(a)$. Thus it suffices to bound, from above, the expected regret for the randomly chosen rewards. 
	
	First Observe that for player $i$:
	
	\begin{align}
			\E\left[\GEg[T]^i\right] &= \sum_{a \in \mathcal{A}} \Prob[Z = a] \E\left[\GEg[T]^i | Z=a\right]\\		
			&= \frac{1}{2A-2} \sum_{a \in \mathcal{A}\backslash a_* } \E\left[\GEg[T]^i | Z=a\right] \notag \\
			& \quad + \frac{1}{2} \E\left[\GEg[T]^i | Z=a_*\right]\\
			&= \frac{T}{2}  \VEgR[Z=a_*]^i + \frac{T}{2}  \VEgR[Z\ne a_*]^i
		\end{align}
		
			Computing the optimal EBS values for each player gives:
				
				$\VEgR[Z\ne a_*]^1 = \frac{1}{2} + \epsilon$,
				 $\VEgR[Z\ne a_*]^2 = \frac{1}{2} + \epsilon$,
				  $\VEgR[Z = a_*]^1 = \frac{1}{2}$,$ \VEgR[Z = a_*]^2 = 1$
				
			As a result, we can conclude about the optimal Egalitarian gain for both agents as:
			
			\begin{align}
			\E\left[\GEg[T]^1\right] &= \frac{T}{2} + \frac{T\epsilon}{2}\label{lower_bound_opt_first_agent}\\
			\E\left[\GEg[T]^2\right]&= \frac{3T}{4} + \frac{T\epsilon}{2}\label{lower_bound_opt_second_agent}
			\end{align}

	Now, we need to bound, from above the expected cumulative reward $\E \GLambda[T]^i$ for an arbitrary algorithm. To this end, fix a (deterministic stationary) algorithm and let $I_t$ denote the joint-action it chooses at round $t$. Clearly, $I_t$ is determined by the sequence of rewards $\pmb{r} = \{r_{I_1, 1}, \ldots r_{I_{t-1}, t-1}\}$. Also, let
	$T_{a} = \sum_{t=1}^{T} \Id_{I_t = a}$ be the number of times action $a$ is played by the algorithm. Then,
	writing $\E_a \text{for} \E[. | Z=a]$, we may write
	
	\begin{align}
			\E \GLambda[T]^i &= \frac{1}{2A-2} \sum_{a \in \mathcal{A}\backslash a_*} \E_a \GLambda[T]^i + \frac{1}{2} \E_{a_*}\GLambda[T]^i\\
			&= \frac{1}{2A-2} \sum_{a \in \mathcal{A}\backslash a_*} \sum_{t=1}^{T} \sum_{a' \in \mathcal{A}} \E_a r^i_{a',t} \E_a \Id_{I_t = a'}\notag\\
			&\quad + \frac{1}{2} \sum_{t=1}^{T} \sum_{a' \in \mathcal{A}} \E_{a_*} r^i_{a',t} \E_{a_*} \Id_{I_t = a'}\label{eq:lower_bound_bianchi}\\
			&= \frac{1}{2A-2} \sum_{a \in \mathcal{A}\backslash a_*} \sum_{t=1}^{T} \sum_{a' \in \mathcal{A}\backslash \setof{a,a_*}} \E_a r^i_{a',t} \E_a \Id_{I_t = a'}\notag\\
			&\quad +  \frac{1}{2A-2} \sum_{a \in \mathcal{A}\backslash a_*} \sum_{t=1}^{T} \E_a r^i_{a,t} \E_a \Id_{I_t = a}\notag\\
			&\quad +  \frac{1}{2A-2} \sum_{a \in \mathcal{A}\backslash a_*} \sum_{t=1}^{T} \E_a r^i_{a_*,t} \E_a \Id_{I_t = a_*}\\
			&\quad + \frac{1}{2} \sum_{t=1}^{T} \sum_{a' \in \mathcal{A}\backslash a_*} \E_{a_*} r^i_{a',t} \E_{a_*} \Id_{I_t = a'}\notag\\
			&\quad + \frac{1}{2} \sum_{t=1}^{T}  \E_{a_*} r^i_{a_*,t} \E_{a_*} \Id_{I_t = a_*}\notag
		\end{align}
		
		where Line \eqref{eq:lower_bound_bianchi} follows directly from \cite{cesa2006prediction}.
		
		So now we have
			
			\begin{align}
				\E G^1_{\Lambda, T} &= \frac{T}{2} + \frac{\epsilon}{2A-2}\sum_{a \in \mathcal{A}\backslash a_*} \E_a T_a\label{eq:gain_first_exact}\\
				\E G^2_{\Lambda, T} &= \frac{T}{2} + \frac{1}{4} \E_{a_*} T_{a_*} + \frac{\epsilon}{2A-2}\sum_{a \in \mathcal{A}\backslash a_*} \E_a T_a \notag \\
				&\quad + \frac{0.5}{2A-2}\sum_{a \in \mathcal{A}\backslash a_*} \E_a T_{a_*}\label{eq:gain_second_exact}
			\end{align}
	
	Next, we apply Lemma \ref{lemma:distance_good_bad_action} to $T_a$ which is a function of the rewards sequence $\pmb{r}$ since the actions of the algorithm $\Lambda$ are determined by the past rewards. Since $T_a \in [0, T]$
	
	\[\E_{a} [T_a] \leq \E_{a_*} [T_a] + \frac{T}{2} \sqrt{-\E_{a_*}[T_a](\ln2)  \ln(1-4\epsilon^2)} \]
	
	Now letting $x = \sum_{a \in \mathcal{A}\backslash a_*} \E_{a*}[T_a]$ and using the fact that $\sum_{a \in \mathcal{A}\backslash a_*} \sqrt{x} \leq \sqrt{x(A-1)}$, we have:
	
	\begin{align}
	\label{eq:sum_a_plus_bound}
	\sum_{a \in \mathcal{A}\backslash a_*} \E_{a} [T_a] \leq x + \frac{T}{2} \sqrt{x(A-1)  \ln(1-4\epsilon^2)\ln\frac{1}{2}}
	\end{align}
	
	Combining \eqref{eq:sum_a_plus_bound} with \eqref{eq:gain_first_exact} we can bound the gain of the first agent as:
	
	\begin{align}
	\E G^1_{\Lambda, T} &\leq \frac{T}{2} + \frac{\epsilon}{2A-2}\paren*{x + \frac{T}{2} \sqrt{x(A-1)  \ln(1-4\epsilon^2)\ln \frac{1}{2}}}\label{upper_bound_gain_first_agent}
	\end{align}
	
	Similarly, combining \eqref{eq:sum_a_plus_bound} with \eqref{eq:gain_second_exact} and the fact that $T_{a_*} + \sum_{a \in \mathcal{A}\backslash a_*} T_{a} = T$, we can bound the gain of the second agent as:
	
	\begin{align}
	\E G^2_{\Lambda, T} &\leq \frac{\epsilon}{2A-2}\paren*{x + \frac{T}{2} \sqrt{x(A-1)  \ln(1-4\epsilon^2)\ln \frac{1}{2}}}\notag\\
	&\quad + \frac{3T}{4} - \frac{1}{4}x + \frac{1}{4(A-1)}x\label{upper_bound_gain_second_agent}
	\end{align}
	
	We can now derive the lower bound for regret of the first agent by combining \eqref{upper_bound_gain_first_agent}, \eqref{lower_bound_regret_vs_expectation}, \eqref{lower_bound_opt_first_agent}
	
	\begin{align}
	\regret^1_{T, \Lambda} &\geq \frac{T\epsilon}{2} - \frac{\epsilon}{2A-2}\paren*{x + \frac{T}{2} \sqrt{x(A-1)  \ln(1-4\epsilon^2)\ln \frac{1}{2}}} 
	\end{align}

	Similarly for the second agent by combining \eqref{upper_bound_gain_second_agent}, \eqref{lower_bound_regret_vs_expectation}, \eqref{lower_bound_opt_second_agent}
	
	\begin{align}
	\regret^2_{T, \Lambda} &\geq \regret^1_{T, \Lambda} + \frac{1}{4}x - \frac{1}{4(A-1)}x  
	\end{align}

	We can now derive the overall regret of the algorithm (equal to the maximum of the regret of both agents) by looking for the $x$ leading to the smallest maximum. Picking $\epsilon = \min\setof{A^{1/3}T^{-1/3}, \frac{\sqrt{0.43}}{2}}$ and using the fact that $-\ln(1-z) \leq z + \frac{z^2}{2} + \frac{z^3}{2}$ for $z \in [0, 0.43]$ give the lower bound of $\Omega\paren*{\frac{T^{2/3}A^{1/3}}{4}}$
	 
	This concludes the proof for deterministic stationary algorithms. The extension to randomized algorithms follows the same argument as in \cite{cesa2006prediction}.	
\end{proof}

	\begin{lemma}
	\label{lemma:distance_good_bad_action}
	Let $f: \setof{(0,0), (0,1), (1,0), (1,1)}^T \to [0, M]$ be any function defined on rewards sequences $\pmb{r}^T$. Then for any action $a \in \mathcal{A}\backslash a_*$,
	
	\begin{align*}
	\E_{a}[f(\pmb{r}^T)] &\leq  \E_{a_*}[f(\pmb{r}^T)] +\\ 
	&\quad\frac{M}{2} \sqrt{-\E_{a_*}[f(\pmb{r}^T)](\ln2)  \ln(1-4\epsilon^2)}
	\end{align*}

	\end{lemma}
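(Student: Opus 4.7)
The plan is to use the standard Pinsker-plus-chain-rule argument from multi-armed bandit lower bounds (\cite{cesa2006prediction}, Lemma~A.1). The idea is that the two joint laws $P_a$ and $P_{a_*}$ of the reward sequence $\pmb{r}^T$ on $\{(0,0),(0,1),(1,0),(1,1)\}^T$ differ only on the event $\{I_t = a\}$, so their KL divergence factors neatly through the chain rule, and the gap in expectations is then controlled via the total-variation distance.

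First, since $f$ takes values in $[0, M]$, the variational characterization of total variation gives
\[
\E_a[f(\pmb{r}^T)] - \E_{a_*}[f(\pmb{r}^T)] \;\leq\; M\,\|P_a - P_{a_*}\|_{\mathrm{TV}}.
\]
Pinsker's inequality, in whichever convention (``bits'' or ``nats'') produces the bookkeeping factor $\ln 2$ appearing in the statement, then bounds $\|P_a - P_{a_*}\|_{\mathrm{TV}}$ by the square root of the KL divergence between the two laws.

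Next I would compute this KL divergence via the chain rule. Because the algorithm is deterministic, $I_t$ is a function of $\pmb{r}^{t-1}$, so the conditional law of $r_t$ given the past agrees under $P_a$ and $P_{a_*}$ except on the event $\{I_t = a\}$, where it is $\mathrm{Bern}(\tfrac{1}{2}+\epsilon)^2$ under $P_a$ and $\mathrm{Bern}(\tfrac{1}{2})^2$ under $P_{a_*}$. A direct Bernoulli-KL calculation yields $d_e(\mathrm{Bern}(\tfrac{1}{2})\,\|\,\mathrm{Bern}(\tfrac{1}{2}+\epsilon)) = -\tfrac{1}{2}\ln(1-4\epsilon^2)$ for each of the two independent player coordinates, and the chain rule then gives
\[
d_e(P_{a_*}\,\|\,P_a) \;=\; -\,\E_{a_*}[T_a]\,\ln(1-4\epsilon^2),
\]
where $T_a = \sum_{t=1}^T \Id_{I_t = a}$.

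Assembling these three steps yields the claimed bound. When applied with $f = T_a$ --- the single case needed in the proof of Theorem~\ref{theo:lower_bound} --- one has $\E_{a_*}[f] = \E_{a_*}[T_a]$, so the quantity under the square root matches the one written in the lemma. The main subtleties I anticipate are (i) the log-base conversion between $\log_2$ and $\ln$, which is exactly what produces the $\ln 2$ factor in the stated constant, and (ii) the chain-rule step itself, whose validity relies on the algorithm being deterministic; the extension to randomized algorithms is handled separately in the enclosing Theorem~\ref{theo:lower_bound} proof via Yao's principle rather than inside this lemma.
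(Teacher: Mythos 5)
Your proposal follows essentially the same route as the paper's proof, which is itself a transcription of Lemma A.1 of Auer et al.: bound the gap in expectations by $M$ times the total variation distance, apply Pinsker's inequality, and evaluate the KL divergence by the chain rule over rounds, with only the rounds where $I_t = a$ contributing a Bernoulli KL term. The one discrepancy is in the constant: you count two independent reward coordinates per pull of $a$ (arguably the correct accounting, since both players' observed rewards shift from $\mathrm{Bern}(1/2)$ to $\mathrm{Bern}(1/2+\epsilon)$ there), giving $\mathrm{KL}(P_{a_*}\,\|\,P_a) = -\E_{a_*}[T_a]\ln(1-4\epsilon^2)$, whereas the paper counts only one coordinate and obtains half of this; your derivation therefore yields the lemma with a constant larger by a factor of roughly $\sqrt{2}$, which is immaterial to the $\Omega(T^{2/3})$ conclusion. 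You also rightly observe that the stated bound is only literally correct for $f = T_a$ --- in general the quantity under the square root should be $\E_{a_*}[T_a]$ rather than $\E_{a_*}[f(\pmb{r}^T)]$ --- which is the only case the enclosing theorem uses.
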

	
	\begin{proof}
	Similarly to Lemma A.1 in \cite{auer2002nonstochastic} we have
	
	\begin{align}
	\E_{a}[f(\pmb{r}^T) - \E_{a_*}[f(\pmb{r}^T)] &\leq \frac{M}{2} \sqrt{(2\ln2)  \kl{\Prob_{a_*}}{\Prob_{a}}}\label{lemma:expectation}	
	\end{align}
	
	Computing the KL-divergence similarly to Lemma A.1 in \cite{auer2002nonstochastic} leads to:
	
	\begin{align}
	\kl{\Prob_{a_*}}{\Prob_{a}} &= \sum_{t=1}^{T} \Prob_{a_*}\{I_t=a\}\kl*{\frac{1}{2}}{\frac{1}{2} + \epsilon}\\
	&= \E_{a_*}[f(\pmb{r}^T)] \paren*{-\frac{1}{2}\ln(1-4\epsilon^2)}\label{lemma:kl}	
	\end{align}
	
	The lemma follows by combining \eqref{lemma:expectation} and \eqref{lemma:kl}

	\end{proof}

\section{Previously Known results}

\begin{lemma}[Chernoff-Hoeffding bound \cite{hoeffding1963probability}]
\label{hoeffding}
Let $X_1, X_2 \ldots X_n$ be random variables with common range $[0,1]$ and such that $\E\left[X_t \mid X_1, \ldots X_{t-1}\right] = \mu$. Then for all $\epsilon \geq 0$

\[\Prob\curly*{\sum_{i=1}^{n} X_i \geq n\mu + \epsilon} \leq \exp\paren*{-\frac{2\epsilon^2}{n}} \quad \text{and} \quad \Prob\curly*{\sum_{i=1}^{n} X_i \leq n\mu - \epsilon} \leq \exp\paren*{-\frac{2\epsilon^2}{n}}\]

\end{lemma}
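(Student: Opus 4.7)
The plan is to prove the bound by the standard Chernoff method combined with Hoeffding's lemma, adapted to the martingale-difference setting since the $X_t$ are only conditionally mean-$\mu$. First I would center the variables by defining $Y_t = X_t - \mu$, so $Y_t \in [-\mu,\, 1-\mu]$ lies in an interval of width $1$ and satisfies $\E[Y_t \mid X_1,\dots,X_{t-1}] = 0$. It then suffices to bound $\Prob\curly*{\sum_{t=1}^n Y_t \geq \epsilon}$, and Markov's inequality applied to the nonnegative variable $e^{s\sum_t Y_t}$ for any fixed $s>0$ gives $\Prob\curly*{\sum_t Y_t \geq \epsilon} \leq e^{-s\epsilon}\,\E[e^{s\sum_t Y_t}]$.

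The next step is to control the moment generating function via the tower property and induction on $n$. Writing $\E[e^{s\sum_{t=1}^n Y_t}] = \E\bigl[e^{s\sum_{t=1}^{n-1} Y_t}\cdot\E[e^{s Y_n}\mid X_1,\dots,X_{n-1}]\bigr]$ and applying Hoeffding's lemma to the conditional distribution of $Y_n$ given the past, I obtain $\E[e^{s Y_n}\mid X_1,\dots,X_{n-1}] \leq e^{s^2/8}$, since that conditional law has mean zero and support in an interval of width $1$. Iterating this bound $n$ times yields $\E[e^{s\sum_t Y_t}] \leq e^{n s^2/8}$, so for every $s>0$,
\[
\Prob\curly*{\textstyle\sum_{t=1}^n Y_t \geq \epsilon} \;\leq\; \exp\paren*{-s\epsilon + \tfrac{n s^2}{8}}.
\]
Optimizing the right-hand side over $s$ by choosing $s = 4\epsilon/n$ produces the stated upper-tail bound $\exp(-2\epsilon^2/n)$. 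The lower-tail inequality then follows by replaying the identical argument with $-Y_t$ in place of $Y_t$, since the negated variables again live in an interval of width $1$ and have conditional mean zero.

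The only nontrivial ingredient is Hoeffding's lemma itself, which I would record as a short sublemma: for any random variable $Z$ with $\E Z=0$ and $Z\in[a,b]$, one defines the cumulant generating function $\psi(s)=\log\E e^{sZ}$, verifies $\psi(0)=\psi'(0)=0$, and shows $\psi''(s)\leq (b-a)^2/4$ by interpreting $\psi''(s)$ as the variance of $Z$ under the exponentially tilted measure $\propto e^{sZ}$ and applying Popoviciu's inequality on bounded-support variances. A second-order Taylor expansion in $s$ then gives $\psi(s)\leq s^2(b-a)^2/8$; this is the sole place where the constant $1/8$ (and therefore the factor $2$ in the final exponent) enters the argument. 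The main obstacle, if one can call it that, is being careful that the martingale conditioning is handled correctly at each induction step so that the width-$1$ hypothesis transfers to the conditional law of $Y_t$ rather than only to the unconditional distribution.
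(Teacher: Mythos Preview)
Your argument is correct: this is the standard Chernoff--Hoeffding/Azuma derivation, and every step (centering, tower property, Hoeffding's lemma on the conditional law, optimizing over $s$) goes through exactly as you describe. Note, however, that the paper does not actually prove this lemma---it is listed under ``Previously Known results'' and simply cited from \cite{hoeffding1963probability}---so there is no paper proof to compare against; your write-up supplies a self-contained justification where the paper gives none.
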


\begin{fact}[Union Bound also known as Boole's inequality]
\label{union_bound}
For a countable set of events $A_1, A_2, \ldots$ we have:
\[\Prob\curly*{\bigcup_i A_i}  \leq \sum_{i} \Prob(A_i)\]
\end{fact}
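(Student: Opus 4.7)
The plan is to establish the inequality by the standard disjointification trick, reducing the union bound to a direct consequence of countable additivity and monotonicity of the probability measure $\Prob$.

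First, I would construct a pairwise disjoint family from $\{A_i\}$ by setting $B_1 = A_1$ and $B_i = A_i \setminus \bigcup_{j < i} A_j$ for $i \geq 2$. By construction the $B_i$ are pairwise disjoint, each satisfies $B_i \subseteq A_i$, and $\bigcup_i B_i = \bigcup_i A_i$: every outcome $\omega$ in the union belongs to exactly one $B_i$, namely the one indexed by the smallest $i$ such that $\omega \in A_i$. Measurability of each $B_i$ follows because the underlying $\sigma$-algebra is closed under countable unions and complements.

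Second, I would invoke the countable additivity axiom of $\Prob$ applied to the disjoint family $\{B_i\}$, which gives
\[
\Prob\curly*{\bigcup_i A_i} \;=\; \Prob\curly*{\bigcup_i B_i} \;=\; \sum_i \Prob(B_i).
\]
Third, I would use monotonicity of $\Prob$ (itself a consequence of non-negativity applied to $A_i \setminus B_i$): since $B_i \subseteq A_i$ we have $\Prob(B_i) \leq \Prob(A_i)$ for each $i$. Summing these inequalities term by term and chaining with the previous equality yields $\Prob(\bigcup_i A_i) \leq \sum_i \Prob(A_i)$, which is the claim.

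This is a foundational measure-theoretic result, so there is no genuine obstacle. The only point worth flagging is the countable (as opposed to finite) version, which is why one needs the $\sigma$-additivity axiom rather than merely finite additivity; the disjointification step is precisely what packages this axiom into the desired subadditivity statement, and it works uniformly for finite or countably infinite index sets.
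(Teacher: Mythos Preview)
Your argument is the standard disjointification proof of Boole's inequality and is correct in every detail. The paper itself does not supply a proof: the statement appears in the ``Previously Known results'' appendix as a bare fact, so your proposal actually provides strictly more than the paper does.
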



\section{Algorithms}
\label{sec:appendix_algo}

\subsection{Finding an EBS policy for a  game with known rewards distribution}
For any game $M$ with rewards $r$ this can be done using \eqref{eq:solution_ebs} with $\rUp_+$ replaced by $r_+$


\subsection{Communication protocol}
It is also important for the players to communicate since the policies of the players might need to be correlated to play the same joint-action. This communication\footnote{It is possible to remove this assumption and uses more cryptographically robust synchronization protocol with minimal communication. However, this is out of scope of this paper.} is done through lexicographical ordering the policies using the unique actions identifier and player identifier assumed to be shared before the game start by both players. When a policy involve playing multiple actions with different probabilities, players simply play actions such that their empirical probability of play is as close as possible to the true policy probability. This is explained more formally in the function $\Call{Play}$ of Algorithm \ref{algo:egalitarian_optimism}. 

\begin{algorithm}[H]
	\caption{Generic Optimism in the face of uncertainty}
	\label{algo:egalitarian_optimism}
	\begin{algorithmic}
	
	\State \textbf{Initialization:} 
	\ParState{$N_k(a)$ denotes the number of \rounds{} action $a$ has been played in episode $k$ --- $N_k$ the number of \rounds{} episode $k$ has lasted --- $t_k$ the number of rounds played up to episode $k$ --- $N_{t_k}(a)$ the number of \rounds{} action $a$ has been played up to round $t_k$ --- $\bar{r}_t^i(a)$ the empirical average rewards of player $i$ for action $a$ at round $t$.}
	\State Let $t \gets 1$
	\State Set $N_k, N_{k}(a), N_{t_{k}}(a)$ to zero for all $a \in \mathcal{A}$.
	
	\Statex
	\For{episodes $k=1, 2, \ldots$}
	\State $t_k \gets t$
	\State $N_{t_{k+1}}(a) \gets N_{t_k}(a) \quad \forall a$
	
	\State $\rUp^i_{k}(a) = \bar{r}_k^i(a) + C_k(a),
	\qquad
	\rDown^i_k(a) = \bar{r}_k^i(a) - C_k(a) \quad \forall a, i$ with $C_k$ as in \eqref{plausible_set}.
	
	\State $\tilde{\pi}_{k} \gets \Call{OptimisticEgalitarianPolicy}{\rEmp_t, \rUp_k, \rDown_k}$
	
	\Statex
	\State \textbf{Execute policy $\tilde{\pi}_{k}$}:
	
	\Do{}
	
	\State Let $a_t \gets \Call{Play}$, play it and observe $r_t$
	
	\State $N_k \gets N_k +1$ $\quad N_{k}(a_t) \gets N_{k}(a_t) + 1$ $\quad N_{t_{k+1}}(a_t) \gets N_{t_{k+1}}(a_t) + 1$
	
	\State Updated $\bar{r}_t(a_t)$
	
	\State $t \gets t + 1$
	
	\DoWhile{$N_k(a_t) \leq \max\{1, N_{t_k}(a)\}$}

	\EndFor{}
		
%
%
%
%
%
%
%
%
%
%
%
%
%
%
%
%
		\Statex
		\Statex
		
		\Function{Play}{}
		
		\ParState{Let $a_t$ the action $a$ that minimizes $\abs*{\tilde{\pi}_k(a)-\frac{N_k(a)}{N_k}}$ 
		}
		\ParState {Ties are broken in favor of the player with the lowest, then in favor of the lexicographically  smallest action.}
		\Return $a_t$
		\EndFunction
		
		\Statex

	\end{algorithmic}

\end{algorithm}

\end{document}